\definecolor{darkgreen}{rgb}{0,0.5,0}
\theoremstyle{plain}
\newtheorem{theorem}{Theorem}
\newtheorem{lemma}[theorem]{Lemma}
\theoremstyle{definition}
\newtheorem{definition}[theorem]{Definition}
\newtheorem{problem}[theorem]{Problem}
\theoremstyle{remark}
\newtheorem{remark}[theorem]{Remark}
\newcommand{\cA}{\mathcal{A}}
\newcommand{\cD}{\mathcal{D}}
\newcommand{\cE}{\mathcal{E}}
\newcommand{\cI}{\mathcal{I}}
\newcommand{\cO}{\mathcal{O}}
\newcommand{\eps}{\varepsilon}
\newcommand{\E}{\mathbb{E}}
\newcommand{\N}{\mathbb{N}}
\newcommand{\skel}{\mathrm{skel}}
\newcommand{\Pa}{\texttt{Pa}}
\title{Causal Discovery under Off-Target Interventions}
\author{
Davin Choo\thanks{Equal contribution}\\
National University of Singapore\\
\texttt{davin@u.nus.edu}
\and
Kirankumar Shiragur\footnotemark[1]\\
Broad Institute of MIT and Harvard\\
\texttt{shiragur@stanford.edu}
\and
Caroline Uhler\\
Massachusetts Institute of Technology\\
\texttt{cuhler@mit.edu}
}
\date{}
\begin{document}

\maketitle

\begin{abstract}
Causal graph discovery is a significant problem with applications across various disciplines.
However, with observational data alone, the underlying causal graph can only be recovered up to its Markov equivalence class, and further assumptions or interventions are necessary to narrow down the true graph.
This work addresses the causal discovery problem under the setting of stochastic interventions with the natural goal of minimizing the number of interventions performed.
We propose the following stochastic intervention model which subsumes existing adaptive noiseless interventions in the literature while capturing scenarios such as fat-hand interventions and CRISPR gene knockouts: any intervention attempt results in an actual intervention on a random subset of vertices, drawn from a \emph{distribution dependent on attempted action}.
Under this model, we study the two fundamental problems in causal discovery of verification and search and provide approximation algorithms with polylogarithmic competitive ratios and provide some preliminary experimental results.
\end{abstract}

\section{Introduction}
\label{sec:intro}

Learning causal relationships is a fundamental task with applications in many fields, including epidemiology, public health, genomics, economics, and social sciences \cite{reichenbach1956direction,hoover1990logic,king2004functional,woodward2005making,rubin2006estimating,eberhardt2007interventions,cho2016reconstructing,tian2016bayesian,sverchkov2017review,rotmensch2017learning,pingault2018using,de2019combining}.
Since the development of Bayesian networks and structural equation modelling, using directed acyclic graphs (DAGs) has been a popular choice to represent causal relationships \cite{spirtes2000causation}.
It is well-known that one can recover underlying causal graphs only up to their Markov equivalence class using observational data \cite{verma1990, andersson1997characterization} and additional assumptions or interventions are necessary if one wishes to uncover the true underlying causal graph.
Here, we study causal discovery from interventions with the natural goal of minimizing the number of interventions performed.

Causal discovery using interventional data has been extensively studied with a rich literature of work developing adaptive \cite{shanmugam2015learning,greenewald2019sample,squires2020active,choo2022verification,choo2023subset,choo2023new} and non-adaptive \cite{eberhardt2005number,eberhardt2006n,eberhardt2010causal,hu2014randomized} strategies.
Traditionally, as in most prior works, an experimenter can select a vertex (or a subset of vertices) for intervention and cause the intended intervention \emph{with certainty}.
While simple and elegant, this fails to account for scenarios involving system stochasticity or noise.
For example, interventions commonly occur randomly on non-targeted genes in CRISPR gene knockouts \cite{fu2013high, wang2015unbiased, aryal2018crispr} and interventions in psychology \cite{eronen2020causal} are likely to affect variables beyond the target variable.
Such interventions are known as \emph{off-target} or \emph{fat-hand} interventions, and are prevalent in practical settings \cite{scheines2005similarity,eaton2007exact,eberhardt2007causation,duvenaud2010causal,glymour2017evaluating,eronen2020causal}.
In this work, we propose and study a stochastic interventional model that aims to model off-target interventions.

\subsection{Our off-target intervention model}

Suppose $G = (V,E)$ is a causal graph on $|V| = n$ vertices and we have $k \in \N$ possible interventional actions denoted by $A_1, \dots, A_k$.
When we perform action $A_i$, a subset $S \subseteq V$ is drawn, from the off-target distribution $\cD_i$ over the power set of vertices $2^V$, and intervened upon.
See \cref{sec:results} for more details and discussion.

Note that our interventional model\footnote{Our model is very flexible: $\cD_i$ may have support only on single vertices, or each vertex can be independently sampled into $S$, or vertex inclusion can be correlated, etc.} subsumes the traditional interventional setting when $A_i$s are on-target:
\[
\cD_i(S) =
\begin{cases}
	1 & \text{if $S = A_i$}\\
	0 & \text{otherwise}
\end{cases}
\quad
\text{, $\forall S \subseteq V$ and $\forall i \in [k]$}
\]

\textbf{Algorithmic assumptions.}\\

The complete problem is notoriously difficult and one cannot attain guarantees without making assumptions.
In this work, we make four key assumptions, each corresponding to a hard problem that is under active exploration.
As such, we view our work as initiating the study of a flexible off-target model and establishing the theoretical foundations for the problem of causal discovery under off-target inteventions.

\begin{enumerate}
    \item The distributions $\cD_1, \ldots, \cD_k$ are known\footnote{\label{footnote:weaker}Actually, we use something weaker; see \cref{sec:results}.}.

    Knowing the distributions $\cD_i$ is often a reasonable assumption in practice.
    For instance, in the case of CRISPR technology, the sequence of a target gene $i$ provides information about which other genes may also be affected.
    The possible off-target effects are generally well-understood, and one can also build a reasonable estimate of each distribution $\cD_i$ based on historical data.
    
    Moreover, when the distributions $\cD_i$ are unknown, it is straightforward to construct scenarios where any optimal algorithm must, at the very least, partially learn the distributions $\cD_i$ in order to achieve any non-trivial competitive guarantees; see \cref{sec:appendix-unknown-Di} for a discussion on the difficulties in designing algorithms with non-trivial theoretical guarantees when $\cD_i$'s are unknown.
    
    \item The actual intervened vertices $S$ are observed.
    
    While we acknowledge that this assumption may be violated in many real-world applications, there have been recent works which infer $S$ from the data \cite{kumar2021disentangling, UTIGSP}.
    For instance, \cite{UTIGSP} argues that, under further assumptions, the actual intervened targets can be recovered by checking which distributions changed after the intervention.
    We view our work as a preliminary step towards addressing the more general setting where the intervened vertices $S$ is not known to us.

    \item We are given access to the essential graph (or equivalently we know the Markov equivalence class) of the true causal graph.

    Under some standard causal assumptions, there are a plethora of algorithms that recover the essential graph from observational data (which is abundant in many applications), such as the PC \cite{spirtes2000causation}, FCI \cite{spirtes2000causation} and RFCI algorithms \cite{colombo2012learning}; see \cite{glymour2019review,vowels2022d} for a survey.
    
    \item We are able to determine orientations of edges that are incident to intervened vertices.

    While this is always possible using hard or ideal interventions, this assumption may still hold with weaker forms of interventions (soft, imperfect, shift, etc) under additional conditions.
\end{enumerate}

\subsection{Our contributions}
We study two fundamental problems in causal graph discovery: verification and search.
The former asks the question of checking whether a proposed DAG $G$ from the Markov equivalence class is the true underlying causal DAG $G^*$, which serves as a natural lower bound for the latter, which requires us to identify $G^*$ from the equivalence class.
Our contributions are as follows:
\begin{enumerate}
    \item We establish a two-way reduction between the off-target verification problem and the well-studied stochastic set covering problem.
    This equivalence allows us to leverage existing results and techniques in the literature to design our algorithms.
    \item We prove that no algorithm can achieve non-trivial competitive approximation guarantees against the off-target verification number, even when all actions have unit weight.
    This shows the difficulty of the off-target search problem and motivates the need for new benchmarks.
    \item Building on our negative result and a recent work \cite{choo2023new}, we propose algorithms that are competitive against a quantity that captures the performance of any algorithm against the worst-case causal graph within the same Markov equivalence class.
    Our algorithm runs in polynomial time and is guaranteed to use at most a polylogarithmic number of expected interventions more than the worst-case optimal solution.
\end{enumerate}
\vspace{-3pt}

One can convert expectation results to high probability ones by paying an extra $\cO(\log n)$ factor via standard applications of Markov and Chernoff bounds\footnote{By Markov inequality, each event succeeds with constant probability. Then, Chernoff bounds ensures that at least one out of $\cO(\log n)$ independent runs succeeds w.h.p.}.
 
\textbf{Outline of paper.}
We begin with preliminary notions and related work in \cref{sec:prelims}, then state our main results in \cref{sec:results}, with details of our verification and search results given in \cref{sec:verification} and \cref{sec:search} respectively.
\cref{sec:experiments} shows some experimental results and \cref{sec:conclusion} concludes with discussion and future work.
For convenience, we will restate our main theorem statements before proving them.
Some proofs are deferred to the appendix.

\section{Preliminaries and related work}
\label{sec:prelims}

For any set $A$, we use $|A|$ to denote its size and $2^A$ to denote its powerset.
Let $G = (V,E)$ be a graph on $n$ vertices/nodes.
We use $u \sim v$ to denote adjacency, and $u \to v$ or $u \gets v$ to specify directions.
For any $v \in V$ in a directed graph, we use $N(v)$ and $\Pa(v)$ to denote its neighbors and parents respectively.
The \emph{skeleton} $\skel(G)$ refers to the underlying graph where all edges are made undirected.
A \emph{v-structure}\footnote{Also known as an unshielded collider.} in $G$ refers to three vertices $u,v,w \in V$ such that $u \to v \gets w$ and $u \not\sim w$.
For any subset $S \subseteq V$, $G[S]$ denotes the node-induced subgraph of $G$ and $u \in S$ is a \emph{source vertex} if $u$ has no incoming arcs from any other edges from $S$.

A directed acyclic graph (DAG) is a fully directed graph without directed cycles.
We can associate a (not necessarily unique) \emph{valid permutation} $\sigma : V \to [n]$ to any (partially directed) DAG such that oriented arcs $u \to v$ satisfy $\sigma(u) < \sigma(v)$.
For any DAG $G$, we denote its Markov equivalence class (MEC) by $[G]$ and essential graph by $\cE(G)$.
DAGs in the same MEC $[G]$ have the same skeleton and essential graph $\cE(G)$ is a partially directed graph such that an arc $u \to v$ is directed if $u \to v$ in \emph{every} DAG in MEC $[G]$, and an edge $u \sim v$ is undirected if there exists two DAGs $G_1, G_2 \in [G]$ such that $u \to v$ in $G_1$ and $v \to u$ in $G_2$.
It is known that two graphs are Markov equivalent if and only if they have the same skeleton and v-structures \cite{verma1990,andersson1997characterization}.
A DAG is called a \emph{moral DAG} if it has no v-structures, in which case its essential graph is just its skeleton.
An edge $u \to v$ is a \emph{covered edge} \cite{chickering2013transformational} if $\Pa(u) = \Pa(v) \setminus \{u\}$.
We use $C(G)$ to denote the set of covered edges of DAG $G$.

An \emph{ideal intervention} $S \subseteq V$ is an experiment where all variables $v \in S$ are forcefully set to some value, independent of the underlying causal structure. 
The effect of interventions is formally captured by Pearl's do-calculus \cite{pearl2009causality}.
Graphically, intervening on $S$ induces a mutilated interventional graph $G_{\bar{S}}$ where all incoming arcs to vertices $v \in S$ are removed \cite{eberhardt2005number}.
It is known that intervening on $S$ allows us to infer the edge orientation of any edge cut by $S$ and $V \setminus S$ \cite{eberhardt2007causation,hyttinen2013experiment,hu2014randomized,shanmugam2015learning,kocaoglu2017cost}.
An \emph{intervention set} is a set $\cI \subseteq 2^V$ of interventions where each intervention corresponds to a subset of variables and an $\cI$-essential graph $\cE_{\cI}(G)$ of $G$ is the essential graph representing the Markov equivalence class of graphs whose interventional graphs for each intervention is Markov equivalent to $G_{\bar{S}}$ for any intervention $S \in \cI$.
There are several known properties about $\cI$-essential graphs \cite{hauser2012characterization,hauser2014two,choo2023subset}.
For instance, every $\cI$-essential graph ($\cI$ could be $\emptyset$) is a chain graph with chordal chain components\footnote{A partially directed graph is a \emph{chain graph} if it does \emph{not} contain any partially directed cycles where all directed arcs point in the same direction along the cycle. A chordal graph is a graph where every cycle of length at least 4 has an edge that is not part of the cycle but connects two vertices of the cycle; see \cite{blair1993introduction} for more.} and orientations in one chain component do not affect orientations in other components.
So, to fully orient any essential graph $\cE(G^*)$, it is necessary and sufficient to orient every chain component in $\cE(G^*)$.
We use $CC(\cE(G))$ to denote the set of chain components obtained by ignoring the oriented edges in $\cE(G)$, where each $H \in CC(\cE(G))$ is a connected undirected subgraph of $G$ and vertices across $H$'s partition $V(G)$.
See \cref{fig:toy-example} for an example.

A \emph{verifying set} $\cI$ for a DAG $G \in [G^*]$ is an intervention set that fully orients $G$ from $\cE(G^*)$, possibly with repeated applications of Meek rules (see \cref{sec:appendix-meek-rules}), i.e.\ $\cE_{\cI}(G^*) = G^*$.
Furthermore, if $\cI$ is a verifying set for $G^*$, then so is $\cI \cup S$ for any additional intervention $S \subseteq V$.
While there may be multiple verifying sets in general, we are often interested in finding one with a minimum size/cost.
We say that an intervention $S \subseteq V$ \emph{cuts} an edge $u \sim v \in E$ if $|S \cap \{u, v\}| = 1$.

\begin{definition}[Minimum size/cost verifying set and verification number/cost]
\label{def:min-verifying-set}
Let $w$ be a weight function on intervention sets.
An intervention set $\cI$ is called a verifying set for a DAG $G^*$ if $\cE_{\cI}(G^*) = G^*$.
$\cI$ is a \emph{minimum size (resp.\ cost) verifying set} if $\cE_{\cI'}(G^*) \neq G^*$ for any $|\cI'| < |\cI|$ (resp.\ for any $w(\cI') < w(\cI)$).
The \emph{minimum verification number} $\nu(G)$ and the \emph{minimum verification cost} $\overline{\nu}(G)$ denote the size/cost of the minimum size/cost verifying set respectively.
\end{definition}

There is a rich literature in recovering causal graphs via interventions under various settings such as bounded size interventions, interventions with varying vertex costs, allowing for randomization, modelling as Bayesian approaches, incorporating domain knowledge as constraints, etc. \cite{heckerman1995bayesian,heckerman1995learning,cooper1999causal,friedman2000being,tong2001active,murphy2001active,he2008active,masegosa2013interactive,cho2016reconstructing,heckerman2006bayesian,eberhardt2006n,eberhardt2010causal,eberhardt2012number,hauser2014two,hyttinen2013experiment,hu2014randomized,shanmugam2015learning,kocaoglu2017cost,ghassami2018budgeted,lindgren2018experimental,agrawal2019abcd,greenewald2019sample,katz2019size,squires2020active,choo2022verification,tigas2022interventions,pmlr-v202-tigas23a,choo2023subset,choo2023new}.
In this work, we are interested in studying \emph{off-target interventions} where attempting to intervene on vertex $v \in V$ (or a subset of vertices $S \subseteq V$) may result in intervening on other vertices, and possibly not even $v \in V$ itself \cite{scheines2005similarity,eaton2007exact,eberhardt2007causation,duvenaud2010causal,glymour2017evaluating,eronen2020causal}.

In the context of causal graph discovery via ideal interventions, \cite{choo2023subset} tells us it suffices to study the verification and search problems on moral DAGs as any oriented arcs in the observational graph can be removed \emph{before performing any interventions} as the optimality of the solution is unaffected.
Below, we review other known results that we later use.
For instance, \cref{thm:can-freely-orient-unoriented-consistently} implies that any topological ordering of the vertices $\sigma$ consistent with the given set of arcs yields a DAG from the same Markov equivalence class.

\begin{theorem}[Theorem 9 of \cite{choo2022verification}]
\label{thm:verifying-must-cut-covered-edges}
Any verifying set of a DAG $G$ must cut all the covered edges.
\end{theorem}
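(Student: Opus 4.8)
The plan is to argue by contradiction using the classical fact that reversing a covered edge keeps a DAG within its Markov equivalence class. Concretely, let $u \to v$ be a covered edge of $G$ and let $G'$ be the graph obtained from $G$ by reversing this single arc to $v \to u$. Since $u \to v$ is covered, $G'$ is a DAG lying in the same MEC as $G$ (same skeleton and v-structures), yet $G' \neq G$. Suppose, for contradiction, that $\cI$ is a verifying set for $G$ that does \emph{not} cut the edge $u \sim v$, i.e.\ $|S \cap \{u,v\}| \neq 1$ for every $S \in \cI$. I would then show that $\cI$ cannot distinguish $G$ from $G'$, so that $u \sim v$ remains unoriented in $\cE_{\cI}(G)$, contradicting $\cE_{\cI}(G) = G$.

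The heart of the argument is to verify that $G$ and $G'$ are $\cI$-Markov equivalent, which (on top of $G$ and $G'$ themselves being Markov equivalent, already in hand) amounts to checking that the mutilated graphs $G_{\bar{S}}$ and $G'_{\bar{S}}$ are Markov equivalent for every $S \in \cI$. I would split into the two cases permitted by the assumption that $S$ does not cut $u \sim v$. If $u, v \notin S$, the mutilation leaves the parent sets of both $u$ and $v$ untouched, so $u \to v$ remains a covered edge of the DAG $G_{\bar{S}}$; applying the covered-edge reversal fact again shows that reversing it keeps $G_{\bar{S}}$ in its MEC, and since mutilating by $S$ and reversing the $u$-$v$ arc act on disjoint parts (the arc is not incident to $S$) they commute, so this reversal is exactly $G'_{\bar{S}}$. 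If instead $u, v \in S$, the mutilation deletes every incoming arc to $u$ and to $v$ --- in particular the $u$-$v$ arc, whichever way it points --- so $G_{\bar{S}}$ and $G'_{\bar{S}}$ are literally identical, as $G$ and $G'$ agree on all other edges. Either way $G_{\bar{S}}$ and $G'_{\bar{S}}$ are Markov equivalent.

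Combining the cases, $G$ and $G'$ are $\cI$-Markov equivalent while differing on the orientation of $u \sim v$, so this edge cannot be oriented in $\cE_{\cI}(G)$ and hence $\cE_{\cI}(G) \neq G$; this contradicts $\cI$ being a verifying set, completing the proof. The main obstacle I anticipate is the bookkeeping around the interventional (mutilated) graphs rather than any deep new idea: one must invoke the right characterization of $\cI$-Markov equivalence (matching skeleton and v-structures across all mutilations) and confirm that the covered-edge property genuinely survives mutilation in the $u, v \notin S$ case. The $u, v \in S$ case is reassuringly trivial once observed, and the covered-edge reversal lemma does all the real lifting in the remaining case.
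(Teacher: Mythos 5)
This theorem is imported verbatim from prior work (Theorem 9 of the cited reference) and the paper gives no proof of its own; your argument is correct and is essentially the standard one from that source: covered-edge reversal yields a Markov-equivalent DAG $G'$, and an intervention set that never cuts $u \sim v$ (so each $S$ contains both or neither of $u,v$) leaves $G_{\bar S}$ and $G'_{\bar S}$ Markov equivalent in both cases, so the edge stays unoriented in $\cE_{\cI}(G)$. No gaps.
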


\begin{theorem}[Proposition 16 of \cite{hauser2012characterization}, Theorem 7 of \cite{choo2023subset}]
\label{thm:can-freely-orient-unoriented-consistently}
Given any (interventional) essential graph, any acyclic orientation of the unoriented edges that does not form new v-structures induces a DAG within the same Markov equivalence class.
\end{theorem}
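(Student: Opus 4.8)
The plan is to verify the two invariants that characterise a Markov equivalence class—the skeleton and the set of v-structures—for the proposed orientation, and then invoke the Verma--Pearl characterisation quoted in the preliminaries (``two graphs are Markov equivalent iff they have the same skeleton and v-structures''). Write $\cE$ for the given (interventional) essential graph of $G^*$, and let $G'$ be the fully directed graph obtained by keeping every arc already oriented in $\cE$ and adjoining the hypothesised acyclic, no-new-v-structure orientation of the remaining undirected edges. Since we are told this orientation is acyclic and it leaves every edge directed, $G'$ is a DAG, and it has the skeleton of $\cE$, which equals $\skel(G^*)$; the skeleton invariant is therefore immediate.

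The crux is to show that $G'$ and $G^*$ have exactly the same v-structures. First I would recall that every v-structure of $G^*$ is already oriented in $\cE$: both arcs $u \to v \gets w$ of a v-structure point the same way in every member of the class and so are forced in the essential graph. Because $G'$ retains all oriented arcs of $\cE$ without flipping any, and the non-adjacency $u \not\sim w$ is a skeleton property, every v-structure of $G^*$ survives in $G'$. Conversely, any v-structure of $G'$ not already present in $\cE$ would have to use at least one edge that was undirected in $\cE$—which is precisely what the ``no new v-structures'' hypothesis forbids. Hence the v-structures of $G'$ coincide with those of $\cE$, and therefore with those of $G^*$, so Verma--Pearl places $G'$ in $[G^*]$ in the observational case. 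Note that the argument never invokes Meek's rules directly: those rules are merely an algorithmic device for deriving forced orientations, so any DAG carrying the correct skeleton and v-structures is automatically consistent with them; acyclicity and the v-structure condition serve only to guarantee that $G'$ is a genuine DAG with the right invariants.

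The hard part, and the only place needing extra care, is the interventional case flagged by the parenthetical ``(interventional).'' Here I would appeal to the analogue of Verma--Pearl for $\cI$-Markov equivalence (Hauser--B\"uhlmann): two DAGs sharing a skeleton are $\cI$-equivalent exactly when they agree on their v-structures together with the orientations forced by the interventions in $\cI$. All of these invariants are encoded as oriented arcs of the $\cI$-essential graph $\cE_{\cI}(G^*)$, each of which $G'$ preserves by construction. Combined with the matching v-structures established above, this yields that $G'$ lies in the $\cI$-Markov equivalence class of $G^*$, completing the argument. The subtlety to handle carefully in this last step is to confirm that the interventional essential graph indeed orients all edges on which $\cI$-equivalent DAGs must agree, so that preserving its oriented arcs is genuinely sufficient.
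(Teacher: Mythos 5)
This statement is imported by citation (Proposition 16 of \cite{hauser2012characterization}, Theorem 7 of \cite{choo2023subset}); the paper offers no proof of its own to compare against. Your argument is correct and is essentially the proof given in those sources: reduce to the graphical characterizations of (interventional) Markov equivalence \cite{verma1990,andersson1997characterization,hauser2012characterization}, observe the skeleton is unchanged, that the v-structures of $G'$ are exactly those already directed in the essential graph (hence exactly those of $G^*$), and, for the interventional case, that every edge cut by an intervention is already oriented in $\cE_{\cI}(G^*)$ and therefore preserved by construction. The one point worth making explicit is the caveat you already flag: the Hauser--B\"uhlmann characterization of $\cI$-Markov equivalence is stated for conservative intervention families, which holds in this paper's setting because the observational (empty) target is always available.
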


As mentioned in the introduction, we will relate the problem of verification in our model to the problem of stochastic set cover.
Introduced by \cite{goemans2006stochastic}, the stochastic set cover is a subproblem in the wider problem domain known as stochastic optimization whereby one wishes to optimize a certain objective under uncertainty (e.g.\ see \cite{golovin2011adaptive} and references therein).

\begin{restatable}[Stochastic set cover, with multiplicity]{problem}{stochasticsetcoverwithmultiplicity}
\label{prob:stochastic-set-cover-with-multiplicity}
Consider a set $X$ of $d$ elements and $k$ stochastic sets $S_1, \ldots, S_k$.
Each $S_i$ is associated with a weight $w_i$ and a set-specific distribution $\cD_i$ where $\cD_i$ and $\cD_j$ are independent for $i \neq j$.
When $S_i$ is picked, a random subset of $X$ is drawn according to $\cD_i$ and the elements in the subset are said to be covered.
The goal is to minimize the weighted set selection cost (sets may be picked multiple times) until all elements in $X$ are covered.
\end{restatable}

Denoting $\mu_j(i)$ as the probability of set $S_i$ covering element $j \in X$, \cite{goemans2006stochastic} showed that \emph{any} policy $\pi$ that succeeds in covering $X$ satisfies the inequality $\sum_{i=1}^k \mu_j(i) \cdot x_i^\pi \geq 1$, where $x^{\pi}_i$ is the expected number of times $\pi$ picks $S_i$.
They further showed that minimum expected cost incurred by any adaptive policy $\pi$ for solving \cref{prob:stochastic-set-cover-with-multiplicity} is at least the optimal value of \eqref{eq:LP}:
\begin{equation}
\label{eq:LP}
\tag{LP}
\begin{array}{rrlll}
\text{minimize} & \sum_{i=1}^{k} w_i \cdot x_i\\
\text{such that}
& \sum_{i=1}^{k} \mu_j(i) \cdot x_i & \geq & 1 & , \forall j \in X\\
& x_i & \geq & 0 & , \forall i \in [k]
\end{array}
\end{equation}

\section{Results}
\label{sec:results}

As discussed in the introduction, one of our primary contributions is the definition of a new noisy off-target intervention model.
Under this model, we study two fundamental questions: verification and search.

\textbf{Off-target verification.}
Under our intervention model, the verification number is defined as follows:

\begin{problem}[Off-target verification]
\label{prob:off-target-verification}
We are given a graph $G = (V,E)$ and $k$ actions $A_1, \ldots, A_k$.
For $i \in [k]$, each set $A_i$ is associated with a distribution $\cD_i$, where $\cD_i$ and $\cD_j$ are independent for $i \neq j$.
When action $A_i$ is picked, a random subset of $V$ is drawn according to $\cD_i$ and the vertices in the subset are intervened upon.
The goal is to select as few actions as possible (actions may be picked multiple times) until the interventional essential graph is fully oriented.
\end{problem}

From \cref{thm:verifying-must-cut-covered-edges}, any verifying set of a DAG $G$ has to cut all covered edges $C(G)$ of $G$.
Given actions $A_1, \ldots, A_k$, let $\Pi$ be the space of (possibly randomized) policies that repeatedly pick actions until all covered edges are cut and $x^{\pi}_i$ be a random variable that counts the number of times action $A_i$ was chosen by policy $\pi \in \Pi$.
Then, the off-target verification number and weighted off-target verfication number are given by $\nu(G) = \min_{\pi \in \Pi} \E \left[ \sum_{i=1}^k x^{\pi}_i \right]$ and $\overline{\nu}(G) = \min_{\pi \in \Pi} \E \left[ \sum_{i=1}^k w_i \cdot x^{\pi}_i \right]$ respectively, where $w_i$ is the cost of choosing action $A_i$.
When all interventions are on-target, the terms $\nu(G)$ and $\overline{\nu}(G)$ recover existing definitions in the literature.

Our first main technical result is a lower bound on $\overline{\nu}(G^*)$ and an off-target verification algorithm with a logarithmic competitive ratio.
This is made possible by a reduction between the stochastic set cover problem and the off-target verification problem, and then applying known results of \cite{goemans2006stochastic}.

\begin{restatable}{theorem}{verificationviareduction}
\label{thm:verification-via-reduction}
Stochastic set cover and off-target verification are equivalent.
There is a polynomial time adaptive policy which verifies with a cost of $\cO(\overline{\nu}(G^*) \cdot \log n)$ in expectation while obtaining an approximation ratio within $(1 - \eps) \cdot \ln n$ is NP-hard for every $\eps > 0$.
\end{restatable}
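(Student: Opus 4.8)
The plan is to establish the stated theorem in three parts: the equivalence of the two problems, the upper bound via a reduction to stochastic set cover, and the hardness of approximation. The key conceptual bridge is \cref{thm:verifying-must-cut-covered-edges}, which tells us that a verifying set must cut \emph{exactly} the covered edges $C(G^*)$. This lets me treat the covered edges as the ``universe'' of elements to be covered, turning verification into a covering problem.

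First I would set up the reduction from off-target verification to \cref{prob:stochastic-set-cover-with-multiplicity}. Given an instance of off-target verification, I let the ground set be $X = C(G^*)$, the set of covered edges of $G^*$, so that $d = |C(G^*)|$. Each action $A_i$ becomes a stochastic set $S_i$ with weight $w_i$ and distribution $\cD_i$; when $A_i$ is performed, a random vertex subset $S \sim \cD_i$ is drawn, and I declare a covered edge $\{u,v\} \in C(G^*)$ to be ``covered'' precisely when $S$ cuts it, i.e.\ $|S \cap \{u,v\}| = 1$. The independence of the $\cD_i$ carries over directly. Conversely, given a stochastic set cover instance, I would need to realize the $d$ elements as covered edges of some moral DAG and the sets as off-target actions; here I can invoke the freedom granted by \cref{thm:can-freely-orient-unoriented-consistently} to construct a witnessing causal graph whose covered edge set matches the universe. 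The two objectives—expected weighted number of actions until all covered edges are cut, versus expected weighted number of sets until all elements are covered—coincide under this correspondence, so the optimal adaptive policies and their costs agree. This gives the equivalence and identifies $\overline{\nu}(G^*)$ with the optimal stochastic-set-cover cost.

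Next, for the algorithmic upper bound, I would transport the machinery of \cite{goemans2006stochastic} through the reduction. Writing $\mu_j(i)$ for the probability that action $A_i$ cuts covered edge $j$, the linear program \eqref{eq:LP} lower-bounds $\overline{\nu}(G^*)$ by the cited result. I would then run their adaptive greedy policy (in each round, pick the set maximizing coverage-per-unit-cost of the still-uncovered elements, using the known $\mu_j(i)$), which achieves expected cost $\cO(\mathrm{OPT}_{\mathrm{LP}} \cdot \log d) \subseteq \cO(\overline{\nu}(G^*) \cdot \log n)$, since $d = |C(G^*)| = \cO(n)$ (a DAG has at most $n-1$ covered edges, as they form a forest). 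The polynomial running time follows because the per-round greedy selection only requires the $\mu_j(i)$ values, which are computable from the known $\cD_i$. The main obstacle is ensuring the construction in the reverse direction of the equivalence is faithful: I must verify that an arbitrary covering instance can be encoded so that the element set is genuinely the covered-edge set of a valid DAG and that cutting corresponds exactly to covering, which is where \cref{thm:can-freely-orient-unoriented-consistently} and a careful gadget are needed.

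Finally, for the hardness, the $(1-\eps)\ln n$ inapproximability transfers from the known hardness of (stochastic/deterministic) set cover. Since the reduction is approximation-preserving and the universe size $d$ is polynomially related to $n$, any polynomial-time algorithm beating the $(1-\eps)\ln n$ ratio for off-target verification would yield one for set cover, contradicting the classical result \cite{goemans2006stochastic} (itself building on the set cover hardness). I would make the on-target specialization explicit here, since the set cover hardness already manifests with deterministic (on-target) actions, showing the lower bound is not an artifact of stochasticity. Assembling the equivalence, the greedy upper bound, and this transferred hardness completes the proof.
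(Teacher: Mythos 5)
Your overall architecture matches the paper's: treat the covered edges $C(G^*)$ as the universe (justified by \cref{thm:verifying-must-cut-covered-edges}), identify actions with stochastic sets so that ``cutting'' equals ``covering,'' import the LP lower bound and the $\cO(\log d)$ policy of \cite{goemans2006stochastic}, and get the $(1-\eps)\ln n$ hardness by specializing to deterministic distributions, where the problem collapses to classical set cover. That is exactly the paper's proof skeleton (\cref{lem:verification-to-stochastic}, \cref{lem:stochastic-to-verification}, and the application of Theorems 1 and 2 of \cite{goemans2006stochastic} via \eqref{eq:VLP}).

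There are two places where you deviate, one of which is a real gap. First, you correctly identify the reverse reduction (set cover instance $\to$ verification instance) as the delicate step but then leave the gadget unconstructed, and the tool you reach for --- \cref{thm:can-freely-orient-unoriented-consistently}, i.e.\ the freedom to re-orient within a Markov equivalence class --- is not what does the work here. You need to \emph{build} a DAG whose covered-edge set is an arbitrary $d$-element universe and whose cut events can be made to match arbitrary coverage distributions; the paper does this with a trivial but essential construction: $d$ disjoint directed edges $v_{j,1} \to v_{j,2}$, so that $C(G) = E$ automatically, and each action $A_i$ places on subsets of $\{v_{j,1}\}_{j\in[d]}$ the same mass that $S_i$ places on the corresponding element subsets, making ``$A_i$ cuts $e_j$'' coincide exactly with ``$S_i$ covers $j$.'' Without some such explicit gadget the equivalence claim (and hence the transfer of the hardness in the direction set cover $\to$ verification) is not established. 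Second, for the upper bound you propose the adaptive greedy (coverage-per-unit-cost) policy, whereas Theorem 2 of \cite{goemans2006stochastic} --- and the paper's \textsc{CutViaLP} --- is the LP-rounding policy: solve \eqref{eq:VLP}, take $\cO(x_i^* \log |T|)$ copies of each action, and repeat $\cO(1)$ times in expectation. The greedy route is plausible but is not what the cited theorem gives you, so as written the $\cO(\overline{\nu}(G^*)\cdot\log n)$ bound is attributed to a result that does not directly support it; either switch to the LP-rounding policy or supply a separate analysis of greedy. Your side observation that $|C(G^*)| \le n-1$ (covered edges form a forest) is correct but not needed --- $|C(G^*)| \le |E| \le n^2$ already gives $\log|C(G^*)| = \cO(\log n)$.
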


\cref{thm:verification-via-reduction} essentially tells us that our verification results attain the optimal asymptotic approximation ratio achievable in polynomial time, unless P = NP.

\textbf{Off-target search.}
For off-target search, we begin with a rather negative result that no algorithm can provide non-trivial competitive bounds against $\overline{\nu}(G^*)$, even when all actions have unit cost.

\begin{restatable}{theorem}{hardness}
\label{thm:hardness}
For any $n \in \N$, there exists a DAG $G^*$ on $n$ nodes and unit-weight actions $A_1, \ldots, A_n$ such that any algorithm pays $\Omega(n \cdot \nu(G^*))$ to recover $G^*$.
\end{restatable}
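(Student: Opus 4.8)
The plan is to exhibit a single essential graph together with a restricted menu of $n$ actions on which verification is trivial but search is forced to be expensive, and then separate the two quantities. First I would take $\cE(G^*)$ to be the undirected star on $n$ vertices with center $c$ and leaves $\ell_1, \dots, \ell_{n-1}$. Its Markov equivalence class consists of exactly those orientations with at most one arc pointing into $c$ (any two in-arcs would create a v-structure between two non-adjacent leaves), so the class has one ``all-out'' member and $n-1$ ``single-in'' members, one per choice of the unique in-leaf. I would let the hidden true DAG $G^*$ be a single-in member, $\ell_{i^*} \to c$ together with $c \to \ell_j$ for $j \neq i^*$, and take the action menu to be the single-leaf interventions $A_j = \{\ell_j\}$ for $j \in [n-1]$ plus one inert action $A_n = \emptyset$. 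These are deterministic off-target distributions, hence a legal instance of \cref{prob:off-target-verification}; the hardness here stems from the limited menu rather than from stochasticity (in an unrestricted model one could intervene on $c$ directly and finish in one step).

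For the verification side I would show $\nu(G^*) = 1$. The only covered edge of a single-in star is the in-arc itself: for $\ell_{i^*} \to c$ we have $\Pa(\ell_{i^*}) = \emptyset = \Pa(c) \setminus \{\ell_{i^*}\}$, while every out-arc $c \to \ell_j$ satisfies $\Pa(\ell_j) \setminus \{c\} = \emptyset \neq \{\ell_{i^*}\} = \Pa(c)$. Intervening once with $A_{i^*}$ cuts and orients $\ell_{i^*} \to c$; Meek's first rule then propagates $c \to \ell_j$ for every remaining leaf (since $\ell_{i^*} \to c - \ell_j$ with $\ell_{i^*} \not\sim \ell_j$), fully recovering $G^*$. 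Hence $\{A_{i^*}\}$ is a verifying set and $\nu(G^*) = 1$, consistent with \cref{thm:verifying-must-cut-covered-edges}.

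For the search side I would argue that any correct algorithm must essentially perform unstructured search for $i^*$. The key structural claim is that a single-leaf intervention $A_j$ only ever reveals the orientation of the lone edge $c \sim \ell_j$, and that discovering an out-arc $c \to \ell_j$ triggers no Meek propagation whatsoever (no oriented arc points into the center and each leaf has no other neighbour, so no Meek rule can fire), whereas discovering the in-arc immediately completes the graph. Since the all-out orientation and every not-yet-excluded single-in orientation remain consistent with any collection of observed out-arcs, the essential graph cannot be fully oriented until the in-arc $\ell_{i^*} \to c$ is actually cut. I would then run an adaptive adversary that answers ``out-arc'' to the first $n-2$ distinct leaves probed and plants the in-arc at the last leaf the algorithm is forced to probe; against any deterministic algorithm this produces a genuine single-in $G^*$ on which the algorithm pays $n-1$ interventions, while $\nu(G^*) = 1$. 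For randomized algorithms I would instead invoke Yao's principle with $i^*$ drawn uniformly from $[n-1]$, giving expected cost $\Omega(n)$. Either way, combining with $\nu(G^*) = 1$ yields the claimed $\Omega(n \cdot \nu(G^*))$ gap.

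The main obstacle is the search lower bound, and specifically ruling out any shortcut cleverer than leaf-by-leaf probing. I must verify carefully that (i) no Meek rule fires from out-arcs alone, so each out-probe is information-theoretically worth only a single ``$i^* \neq j$'' bit; (ii) the all-out member of the class keeps the final edge genuinely ambiguous even after $n-2$ out-arcs are observed, so the algorithm cannot stop early and the adversary's answers stay consistent with a valid $G^*$; and (iii) the symmetry of the leaves precludes a binary-search speedup. Point (iii) is exactly why I choose a star rather than, say, a path: on a path, probing an internal vertex would reveal on which side the source lies and enable an $\cO(\log n)$ strategy, collapsing the gap. Once these three points are pinned down, the adversary and Yao arguments are routine.
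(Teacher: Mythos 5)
Your proposal is correct and follows essentially the same route as the paper: an undirected star essential graph with actions that can only (deterministically) hit leaves, the observation that a leaf-rooted member has $\nu(G^*)=1$ via its single covered in-arc plus Meek R1, and an unstructured-search lower bound of $\Omega(n)$ for locating the hidden root leaf. The only cosmetic differences are that the paper's $n$-th action picks a uniformly random leaf rather than being inert, and the paper cites the unsorted-array search lower bound where you spell out the adversary/Yao argument.
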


A similar inapproximability result was known in the on-target intervention literature for weighted causal graph discovery \cite{choo2023new}, in which they proved that no algorithm (even with infinite computational power) can achieve an asymptotically better approximation than $\cO(n)$ with respect to the verification cost $\overline{\nu}(G^*)$ for all ground truth causal graphs on $n$ nodes.
The authors of \cite{choo2023new} then defined a newer and more nuanced benchmark $\overline{\nu}^{\max}(G^*) = \max_{G \in [G^*]} \overline{\nu}(G)$ which shifts the comparison away from an oracle that knows $G^*$ to the best algorithm which knows $[G^*]$.
Motivated by \cite{choo2023new} and \cref{thm:hardness}, we also compare against $\overline{\nu}^{\max}(G^*)$ instead of $\overline{\nu}(G^*)$.
To this end, we give an efficient search algorithm that achieves polylogarithmic approximation guarantees against $\overline{\nu}^{\max}(G^*)$.
Our algorithm is based on 1/2-clique graph separators on chordal graphs and heavily relies on the fact that we only need to compete against \emph{some} causal DAG in the equivalence class.
Crucially, $\overline{\nu}^{\max}(G^*)$ provides us the freedom to force certain unoriented edges to become covered edges judiciously (by competing against some $G' \in [G^*]$ via \cref{thm:can-freely-orient-unoriented-consistently}) in our search algorithm.

\begin{restatable}{theorem}{searchnumaxupperbound}
\label{thm:search-nu-max-upper-bound}
Given an essential graph $\cE(G^*)$, there is an algorithm (\cref{alg:off-target-search}) which runs in polynomial time and recovers $G^*$ while incurring a cost of $\cO(\overline{\nu}^{\max}(G^*) \cdot \log^4 n)$ in expectation.
\end{restatable}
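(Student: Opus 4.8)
The plan is to recover $G^*$ by a divide-and-conquer recursion over chordal chain components, driven by balanced clique separators, where each ``orient this set of edges'' step is reduced to a stochastic set cover instance (\cref{prob:stochastic-set-cover-with-multiplicity}) and its cost is charged against $\overline{\nu}^{\max}(G^*)$ through the covered-edge lower bound. As preprocessing, I would invoke \cite{choo2023subset} to treat $G^*$ as a moral DAG whose essential graph equals its (chordal) skeleton, so that recovering $G^*$ decomposes into orienting each chain component of the running $\cI$-essential graph independently. The key leverage is \cref{thm:can-freely-orient-unoriented-consistently}: any acyclic orientation of the currently unoriented edges that creates no new v-structure yields a DAG in $[G^*]$. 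This lets me, purely for charging, pick whichever $G' \in [G^*]$ is most convenient and then invoke $\overline{\nu}(G') \le \overline{\nu}^{\max}(G^*)$.

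First I would set up the recursion. On a connected chordal chain component $H$ with $h$ vertices, use the classical fact that chordal graphs admit a balanced clique separator: a clique $K$ whose deletion leaves components of size at most $h/2$, findable in polynomial time. The subroutine (\cref{alg:off-target-search}) orients every edge incident to $K$ -- both the edges inside $K$ and the cross edges from $K$ to $V(H) \setminus K$ -- after which the remaining unoriented graph splits into chain components each of size at most $h/2$, so recursing gives depth $\cO(\log n)$. To orient a prescribed edge set $F$, I observe that cutting an edge $u \sim v$ is exactly ``covering'' it in \cref{prob:stochastic-set-cover-with-multiplicity}, with coverage probability $\mu_{uv}(i) = \Pr_{S \sim \cD_i}[\,|S \cap \{u,v\}| = 1\,]$. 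Hence cutting all of $F$ is a stochastic set cover instance, solvable in polynomial time by the $\cO(\log n)$-competitive adaptive policy of \cref{thm:verification-via-reduction}, at expected cost $\cO(\log n) \cdot \mathrm{LP}(F)$, where $\mathrm{LP}(F)$ is the value of \eqref{eq:LP} restricted to elements $F$.

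The crux is bounding $\mathrm{LP}(F)$ by $\overline{\nu}^{\max}(G^*)$ up to polylog factors, which is where the covered-edge machinery and the freedom from \cref{thm:can-freely-orient-unoriented-consistently} enter. For any $G' \in [G^*]$, \cref{thm:verifying-must-cut-covered-edges} says every verifying set cuts $C(G')$, so by \cref{thm:verification-via-reduction} we get $\mathrm{LP}(C(G')) \le \overline{\nu}(G') \le \overline{\nu}^{\max}(G^*)$; since dropping constraints only lowers the LP, $\mathrm{LP}(F') \le \overline{\nu}^{\max}(G^*)$ for any $F' \subseteq C(G')$. The plan is therefore to realize the edges I must cut as covered edges of a few conveniently chosen orientations. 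Orienting $K$ is easy: choose $G'$ making $K$ a source clique ordered $v_1 \to \cdots \to v_m$, whose consecutive edges $v_i \to v_{i+1}$ are then covered; cutting these costs $\cO(\log n)\cdot\overline{\nu}^{\max}(G^*)$, and Meek rule R2 (\cref{sec:appendix-meek-rules}) propagates to orient every remaining edge inside $K$ for free. Because covered edges always form a forest, I cannot make all cross edges covered under a single orientation; instead I would cover them in $\cO(\mathrm{polylog}\, n)$ layers, each layer being the covered-edge set of some $G' \in [G^*]$ obtained by recursively halving the ordering, paying $\cO(\log n)\cdot\overline{\nu}^{\max}(G^*)$ per layer via the bound above.

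Putting the pieces together, resolving one clique separator costs $\cO(\mathrm{polylog}\, n)\cdot\overline{\nu}^{\max}(G^*)$ in expectation, and multiplying by the $\cO(\log n)$ recursion depth yields the claimed $\cO(\overline{\nu}^{\max}(G^*)\cdot\log^4 n)$; the exact tally of logarithmic factors (recursion depth, set-cover competitive ratio, the layered covered-edge decomposition, and an amplification to drive the probability of leftover uncut edges down per the Markov/Chernoff remark in \cref{sec:intro}) is what I would optimize last. Correctness is immediate since the algorithm only ever cuts edges, never makes irreversible mistakes, and terminates with a fully oriented graph. I expect the main obstacle to be the cross-edge charging above: quantifying how many covered-edge layers are needed to account for all edges incident to $K$ (respecting that each layer must be a forest) while \emph{simultaneously} guaranteeing that cutting those edges genuinely separates $H$ into halves, together with the bookkeeping of Meek-rule propagation certifying that the intermediate $\cI$-essential graphs behave as claimed.
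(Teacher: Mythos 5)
Your high-level skeleton matches the paper's (1/2-clique separators on chordal chain components, $\cO(\log n)$ recursion depth, reducing each ``cut these edges'' step to stochastic set cover, and charging $\mathrm{LP}(F) \leq \overline{\nu}^{\max}(G^*)$ whenever $F$ is a subset of covered edges of \emph{some} $G' \in [G^*]$ via \cref{thm:can-freely-orient-unoriented-consistently}). However, there are two genuine gaps, both at the crux of the argument. First, your claim that orienting a clique $K$ takes one round --- cut the consecutive edges of a chosen order and let Meek R2 finish the rest ``for free'' --- is false. The chosen order need not agree with $G^*$'s true order on $K$: in a triangle $a \to b \to c$ (with $a \to c$), guessing the order $a,c,b$ makes $a \sim c$ and $c \sim b$ the covered edges of the guess; cutting them reveals $a \to c$ and $b \to c$, after which no Meek rule orients $a \sim b$. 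This is precisely why the paper's \textsc{OrientInternalCliqueEdges} (\cref{alg:orient-internal-clique-edges}) is a \emph{while loop}: one round only guarantees (via \cref{thm:oriented-endpoints-in-different-chain-components}) that consecutive vertices land in different chain components, so the clique components halve and $\cO(\log n)$ rounds are needed --- this is one of the four log factors.

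Second, and more seriously, your plan to \emph{orient every cross edge incident to $K$} by decomposing them into $\cO(\mathrm{polylog}\,n)$ covered-edge layers is the step you yourself flag as the main obstacle, and it does not go through as stated: a vertex $v \notin K$ adjacent to all of $K$ has at most one covered in-edge and at most one covered out-edge into the clique in \emph{any single} DAG of the MEC, so insisting on cutting all $|K|$ of its cross edges forces $\Omega(|K|)$ layers, and $|K|$ can be $\Omega(n)$. The paper sidesteps this entirely by \emph{not} orienting all cross edges. After orienting the internals of $K_H$, at most one ``large'' residual component $L_H$ survives, and it contains exactly one separator vertex $u_H$ (\cref{lem:exactly-one-from-KH}). \textsc{PerformPartitioning} (\cref{alg:perform-partitioning}) then runs a secondary separator recursion \emph{inside} $L_H[V(L_H) \cap N(u_H)]$: it finds 1/2-clique separators $Z_{H'}$, orients their internals to identify the source vertex $z_{H'}$, and chooses an ordering making the single edge $u_H \sim z_{H'}$ covered so it can be cut cheaply; \cref{lem:perform-partitioning-iterations} shows the piece of $H'$ still attached to $u_H$ halves each round, so $\cO(\log n)$ rounds disconnect $u_H$ and break up $L_H$ without ever cutting most of its incident edges. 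Without this (or an equivalent) mechanism, your cross-edge charging does not yield a polylogarithmic bound, so the proposal as written does not establish the theorem.
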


The high-level intuition for the four log factors are as follows (see \cref{sec:search} for details):
(1) action stochasticity;
(2) guessing identities of covered edges;
(3) repeated applications of graph separators;
(4) additional work to ensure recursion onto smaller subgraphs.

\textbf{Remark about action distributions and cutting probabilities.}

In our problem, we are interested in cutting edges (in particular covered edges; see \cref{thm:verifying-must-cut-covered-edges}).
For both our verification and search algorithms, all we need are \emph{edge cut probabilities} $c_i(e) \in [0,1]$ of how likely an action $A_i$ will cut an edge $e \in E$.
Given $\cD_i$s, one can compute these edge cut probabilities (with time complexities depending on the description of $\cD_i$s).
For example, in the special case where $\cD_i$s are product distributions over the vertices, i.e.\ probability of a vertex $v$ being intervened upon is $p_v$ and the $p_v$s are independent, we have
$
c_i(e = \{u, v\})
= \Pr[\text{$A_i$ cuts $e$}]
= 1 - p_u \cdot p_v + (1 - p_u) \cdot (1 - p_v)
$.
Note that the distributions $\cD_i$s are independent with respect to the actions, but not necessarily edges: for action indices $i \neq j$ and edges $e \neq e'$, $\Pr[A_i$ cuts $e]$ is independent of $\Pr[A_j$ cuts $e]$ but not necessarily independent of $\Pr[A_i$ cuts $e']$.

In the rest of this work, we assume that cutting probabilities, a weaker requirement than $\cD_i$s, are known.
The assumption is weaker because one can derive cutting probabilities from off-target distributions but not vice-versa.
For example, consider the case of a single edge $e = \{v_1,v_2\}$ and two possible distinct distributions $\cD_1$ and $\cD_2$, where $\cD_i$ always intervenes on $\{v_i\}$: both distributions induce the same cut probability $c(e)$ but we cannot identify $\cD_i$ simply from $c(e)$.
Our algorithms only rely on cut probabilities and does not need the distributions inducing them.

\section{Verification}
\label{sec:verification}

Here we prove our results for the off-target verification problem.
In particular, we prove \cref{thm:verification-via-reduction} which shows the equivalence between our off-target verification problem (\cref{prob:off-target-verification}) and the stochastic set cover problem (\cref{prob:stochastic-set-cover-with-multiplicity}).
The main technical idea behind the reduction follows from \cref{thm:verifying-must-cut-covered-edges}, which states that any verifying set must cut all the covered edges of the DAG.
The reduction treats covered edges $C(G)$ as the set of elements to cover and an action ``covers'' a cover edge if the action cuts the covered edge.

\begin{restatable}{lemma}{verificationtostochastic}
\label{lem:verification-to-stochastic}
Every off-target verification instance on causal DAG $G$ with actions $A_1, \ldots, A_k$ and covered edges $C(G)$ corresponds to a stochastic set cover instance on $|C(G)|$ items and stochastic sets $S_1, \ldots, S_k$.
\end{restatable}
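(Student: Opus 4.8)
The plan is to exhibit an explicit, direction-preserving correspondence between the two problem instances so that the notion of ``covering'' in one matches ``cutting covered edges'' in the other. Concretely, I would set the ground set of the stochastic set cover instance to be the covered edges themselves, i.e.\ $X = C(G)$ with $d = |C(G)|$ elements, and I would define one stochastic set $S_i$ per action $A_i$, carrying over the same weight $w_i$ and a derived distribution. The heart of the matter is to describe precisely how the set-specific distribution of $S_i$ is obtained from the action distribution $\cD_i$.

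\textbf{Constructing the stochastic sets.}
For each action $A_i$, recall (from the remark on cutting probabilities) that drawing $S \sim \cD_i$ and intervening on $S$ cuts a given covered edge $e = \{u,v\} \in C(G)$ precisely when $|S \cap \{u,v\}| = 1$. I would therefore define the distribution governing the stochastic set $S_i$ so that, on a draw, the covered edge $e$ is included in the sampled subset if and only if the underlying draw $S \sim \cD_i$ cuts $e$. In other words, the random subset of $X = C(G)$ that $S_i$ covers is exactly the set of covered edges cut by the realized intervention. Crucially this correspondence transports the per-element coverage probabilities directly: the probability $\mu_e(i)$ that set $S_i$ covers element $e$ equals the cut probability $c_i(e)$ that action $A_i$ cuts the covered edge $e$. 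Since the $\cD_i$ are independent across actions, the induced set distributions are independent across $i$, matching the independence hypothesis of \cref{prob:stochastic-set-cover-with-multiplicity}.

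\textbf{Matching the covering conditions.}
Next I would verify that the two termination conditions coincide. By \cref{thm:verifying-must-cut-covered-edges}, a sequence of actions forms a verifying set for $G$ exactly when it cuts every covered edge in $C(G)$; and conversely, once all covered edges are cut, the essential graph is fully oriented, so cutting all of $C(G)$ is both necessary and sufficient for verification. Under the correspondence above, ``all covered edges cut'' translates verbatim into ``all elements of $X$ covered,'' which is the stopping condition of the stochastic set cover instance. Because actions (resp.\ sets) may be picked multiple times in both problems, and the weights $w_i$ are carried over unchanged, any adaptive policy for one instance is an adaptive policy for the other with identical cost, so the expected costs agree term by term.

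\textbf{Anticipated obstacle.}
I expect the main subtlety to be the move from the element-inclusion indicators to the correct \emph{joint} distribution over subsets of $X$. The distributions $\cD_i$ may correlate the inclusion of different vertices, so the event ``$e$ is cut'' and ``$e'$ is cut'' need not be independent; the remark in \cref{sec:results} already flags that cut events for distinct edges under a common action can be correlated. I would handle this by defining $S_i$'s distribution as the pushforward of $\cD_i$ under the deterministic map $S \mapsto \{e \in C(G) : |S \cap e| = 1\}$, which automatically produces the correct joint law and keeps all marginals equal to the $c_i(e)$. Since \cref{prob:stochastic-set-cover-with-multiplicity} permits arbitrary set-specific distributions (only requiring independence \emph{across} sets, which holds), this correlation among elements within a single set is allowed, so no further argument is needed. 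The only modelling care is to confirm that we never need $\cD_i$ beyond these cut probabilities, which the subsequent analysis of \cref{thm:verification-via-reduction} relies on and which the reduction makes transparent.
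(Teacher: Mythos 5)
Your proposal is correct and follows essentially the same route as the paper: identify the ground set with $C(G)$, let each stochastic set $S_i$ inherit its law from $\cD_i$ (you make this precise as the pushforward under $S \mapsto \{e \in C(G) : |S \cap e| = 1\}$, which is exactly what the paper's terser phrasing intends), and observe that ``all covered edges cut'' coincides with ``all elements covered.'' Your extra care about correlated element inclusions within a single set is a welcome clarification but does not change the argument.
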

\begin{proof}
To establish a one-to-one correspondence between the two problems, let $X$ be the covered edges $C(G)$ and let each stochastic set $S_i$ have exactly the distribution of $A_i$.
Then, we can solve off-target verification by invoking any algorithm $\cA$ for the stochastic set cover problem by choosing action $A_i$ whenever $S_i$ is being chosen, and informing $\cA$ that an element is covered when the corresponding covered edge is cut.
\end{proof}

Reduction in the other direction follows by designing a graph where the elements correspond to disjoint covered edges such that sets can be mapped to actions.

\begin{restatable}{lemma}{stochastictoverification}
\label{lem:stochastic-to-verification}
Every stochastic cover instance with $d$ elements and $k$ stochastic sets $S_1, \ldots, S_k$ corresponds to an off-target verification instance on a causal DAG $G$ with $2d$ vertices, $d$ edges, and $k$ actions $A_1, \ldots, A_k$.
\end{restatable}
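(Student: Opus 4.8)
I need to prove Lemma \ref{lem:stochastic-to-verification}: given a stochastic set cover instance with $d$ elements $X = \{1, \ldots, d\}$ and $k$ stochastic sets $S_1, \ldots, S_k$ (each with distribution $\cD_i$), construct a causal DAG $G$ on $2d$ vertices with $d$ edges and $k$ actions so that the off-target verification problem on $G$ is equivalent to the original set cover instance.

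Let me think about the structure. The key fact is Theorem \ref{thm:verifying-must-cut-covered-edges}: a verifying set must cut all covered edges. So I want to build a DAG whose covered edges are exactly $d$ disjoint edges, one per element of $X$. The lemma statement says $2d$ vertices and $d$ edges — so the graph is literally $d$ disjoint edges (a perfect matching), $\{a_j, b_j\}$ for $j \in [d]$, each oriented as say $a_j \to b_j$.

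**Is a single edge a covered edge?** A covered edge $u \to v$ requires $\Pa(u) = \Pa(v) \setminus \{u\}$. For an isolated edge $a_j \to b_j$: $\Pa(a_j) = \emptyset$ and $\Pa(b_j) = \{a_j\}$, so $\Pa(b_j) \setminus \{a_j\} = \emptyset = \Pa(a_j)$. Yes, it's a covered edge. And the graph is a moral DAG (no v-structures, since each component is a single edge). So the essential graph is just the skeleton — $d$ undirected edges — and verification means orienting all of them, which by Theorem \ref{thm:verifying-must-cut-covered-edges} requires cutting each of the $d$ covered edges. Actually since every edge here is covered, cutting all edges is both necessary and sufficient (each edge is its own chain component; cutting it orients it).

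**The action distributions.** I map set $S_i$ to action $A_i$. I need action $A_i$ to "cut edge $\{a_j, b_j\}$" exactly when "$S_i$ covers element $j$." Element $j$ is covered by $S_i$ with probability $\mu_j(i) = \Pr_{T \sim \cD_i}[j \in T]$. I need $\cD_i$ (an off-target distribution over $2^V$, $V = \{a_1,b_1,\ldots,a_d,b_d\}$) whose cut probability on edge $\{a_j,b_j\}$ equals $\mu_j(i)$. The cleanest construction: when $S_i$ draws a set $T \subseteq X$, action $A_i$ draws the vertex set $\{a_j : j \in T\}$ (intervene on the "$a$" endpoint of each covered element's edge). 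Then $A_i$ cuts edge $\{a_j, b_j\}$ iff $a_j$ is in the drawn set (since $b_j$ is never drawn, $|{\rm drawn} \cap \{a_j,b_j\}| = \mathbb{1}[a_j \in {\rm drawn}]$) iff $j \in T$. So the cut probability matches $\mu_j(i)$ exactly, and the joint distribution of cuts across edges matches the joint distribution of coverage across elements.

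**Putting it together.** Assign weight $w_i$ to action $A_i$ equal to the weight of $S_i$. Then any policy for off-target verification on $G$ is, step for step, a policy for the stochastic set cover instance: picking $A_i$ corresponds to picking $S_i$, the edge $\{a_j,b_j\}$ gets cut exactly when element $j$ gets covered, and verification completes (all covered edges cut) exactly when all elements are covered. Hence $\overline{\nu}(G)$ equals the minimum expected cover cost, establishing the correspondence. The remaining step I'd verify carefully is that the correspondence is faithful in both the feasibility condition (all edges are covered edges, so cutting them all is exactly what verification demands) and the cost (via the $w_i \leftrightarrow w_i$ identification). The main subtlety — really the only place the argument could slip — is confirming that the induced cut probabilities and their joint behavior exactly reproduce $\cD_i$'s coverage behavior, which the "intervene on the $a$-endpoints" construction handles cleanly because distinct edges use distinct, disjoint vertices, so there is no cross-edge interference beyond what the original $\cD_i$ already specifies across elements.
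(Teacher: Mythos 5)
Your construction is exactly the paper's: $d$ disjoint oriented edges on $2d$ vertices (so every edge is covered and the essential graph is the skeleton), with action $A_i$ placing $S_i$'s mass on the subsets of the $a$-endpoints so that cutting edge $j$ coincides with covering element $j$. The argument is correct and matches the paper's proof, with some extra (welcome) explicitness about why each isolated edge is covered and why the weights carry over.
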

\begin{proof}
For each element $j \in [d]$, we create two vertices $v_{j,1}$ and $v_{j,2}$ and an edge $e_j = v_{j,1} \to v_{j,2}$.
By construction, $G$ is a collection of $d$ disjoint edges and the set of covered edges is $C(G) = E$.
For $i \in [k]$, let us define action $A_i$ to be exactly $S_i$ on $\{v_{j,1}\}_{j \in [d]}$, i.e.\ $A_i$ assigns the same probability mass as $S_i$ to any subset of $\{v_{j,1}\}_{j \in [d]}$ as whatever $S_i$ places on element $j$.
Thus, the probability of action $A_i$ cutting covered edge $e_j$ is exactly the probability that $S_i$ covers element $j$.
Therefore, we can solve the stochastic set cover problem by invoking any algorithm $\cA$ for off-target verification problem by choosing set $S_i$ whenever $A_i$ is being intervened upon, and performing an intervention on $G$ according to the random realization of $S_i$.
\end{proof}

\cref{thm:verification-via-reduction} follows from our above reductions and by applying Theorems 1 and 2 of \cite{goemans2006stochastic}.
To be precise, we invoke \textsc{CutViaLP} (\cref{alg:cut-via-LP}) with covered edges as the input to provide an efficient approximation algorithm for the off-target verification problem.

\verificationviareduction*
\begin{proof}
The equivalence is via our two-way reductions: \cref{lem:verification-to-stochastic} and \cref{lem:stochastic-to-verification}.

When $\cD_i$s deterministically map to a single fixed subset of elements, stochastic set cover recovers set cover, which is NP-hard \cite{karp1972reducibility}.
Thus, stochastic set cover is also NP-hard and so it is NP-hard to exactly solve the off-target verification problem to obtain $\overline{\nu}(G^*)$.
Furthermore, approximating set cover to within a factor of $(1-\eps) \cdot \ln n$ is also NP-hard for any $\eps > 0$ \cite{dinur2014analytical}.

To obtain a policy which incurs a cost of at most $\cO(\overline{\nu}(G^*) \cdot \log n)$ in expectation, we can apply Theorems 1 and 2 of \cite{goemans2006stochastic}.
To be precise, given a set of $k$ actions $A_1, \ldots, A_k$ and a subset of target edges $T \subseteq E$, consider the following LP adapted from \eqref{eq:LP}:
\begin{equation}
\label{eq:VLP}
\tag{VLP}
\begin{array}{rrlll}
\text{min} & \sum_{i=1}^{k} w_i \cdot x_i\\
\text{s.t.}
& \sum_{i=1}^{k} c_i(e) \cdot x_i & \geq & 1 & , \forall e \in T\\
& x_i & \geq & 0 & , \forall i \in [k]
\end{array}
\end{equation}

Theorem 1 of \cite{goemans2006stochastic} tells us that $\overline{\nu}(G^*)$ is at least the optimal value of \eqref{eq:VLP}.
Meanwhile, Theorem~2 of \cite{goemans2006stochastic} describes a policy which incurs a cost of $\cO(\overline{\nu}(G^*) \cdot \log |T|) \subseteq \cO(\overline{\nu}(G^*) \cdot \log n)$: solve \eqref{eq:VLP} with optimal values $x^*_1, \ldots, x^*_k$, pick $\cO(x^*_i \cdot \log |T|)$ copies of $S_i$ in expectation, and repeat this process a constant number of times in expectation to cover all elements (see \cref{alg:cut-via-LP}).
For us, the set $T$ will be instantiated with the set of covered edges $C(G^*)$ of $G^*$.
\end{proof}

\begin{algorithm}[htb]
\begin{algorithmic}[1]
\caption{\textsc{CutViaLP}.}
\label{alg:cut-via-LP}
    \Statex \textbf{Input}: $k$ actions $A_1, \ldots, A_k$, action weights $w_1, \ldots, w_k$, $d$ edges to cut $T \subseteq E$, cutting probabilities $\{ c_i(e) \}_{i \in [k], e \in T}$.
    \Statex \textbf{Output}: A sequence of attempted interventions such that all edges are cut.
    \State Solve \eqref{eq:VLP} and let $x^*_1, \ldots, x^*_k$ be the optimal value and define $y_i = 9 \cdot x_i \cdot \ln d$ for all $i \in [k]$.\footnotemark
    \While{some edge is still not cut}
        \For{$i \in [k]$}
            \State Do action $A_i$ $\lfloor y_i \rfloor$ times deterministically.
            \State With probability $y_i - \lfloor y_i \rfloor$, do action $A_i$.
        \EndFor
    \EndWhile
\end{algorithmic}
\end{algorithm}
\footnotetext{The constant 9 is from \cite{goemans2006stochastic}; any appropriate constant works just fine.}

We will later use \cref{alg:cut-via-LP} as a subroutine in our off-target search algorithm.

\section{Search}
\label{sec:search}

We first present a negative result (\cref{thm:hardness}) which states that one cannot hope to obtain an approximation ratio better than $\Omega(n)$ with respect to the off-target verification number $\overline{\nu}(G^*)$.
Motivated by \cref{thm:hardness}, we consider the benchmark $\overline{\nu}^{\max}(G^*) = \max_{G \in [G^*]} \overline{\nu}(G)$ defined by \cite{choo2023new}, against which the authors provided competitive algorithms for causal graph discovery under weighted on-target interventions.
In \cref{thm:search-nu-max-upper-bound}, we provide an efficient approximation search algorithm (\cref{alg:off-target-search}) with polylogarithmic approximation guarantees against $\overline{\nu}^{\max}(G^*)$.

\subsection{Why compare against \texorpdfstring{$\overline{\nu}^{\max}(G^*)$}{the max}?}

We begin with the negative result that one cannot hope to attain non-trivial approximation to $\overline{\nu}(G^*)$ even when all actions have unit weights.

\hardness*

The proof sketch for \cref{thm:hardness} is as follows.
Consider the star graph on $n$ vertices $v_1, v_2, \ldots, v_n$ with $v_n$ as the center and $v_1, \ldots, v_{n-1}$ as leaves.
Such an essential graph correspond to $n$ possible DAGs, with each vertex as a possible ``hidden root''.
Suppose there are $n$ unit-weight actions $A_1, \ldots, A_n$ where $\cD_i$ each deterministically picks the leaf $v_i$ (for $1 \leq i \leq n-1$) and $\cD_n$ picks a random leaf uniformly at random.
That is, no action will ever intervene on the center vertex $v_n$.
When any leaf $v_i$ is the root node, $\nu(G^*) = 1$ as choosing action $A_i$ suffices.
However, without knowing the identity of $v_i$, one would incur a cost of $\Omega(n) \subseteq \Omega(n \cdot \nu(G^*))$.

Meanwhile, observe that $\nu^{\max}(G^*) = n$ when the center vertex $v_n$ is the root node: all edges will be covered edges and we need to cut all of them by intervening on all leaves.
Thus, instead of competing against $\nu(G^*)$, competing against $\nu^{\max}(G^*)$ would allow for one to design algorithms with more meaningful theoretical guarantees.
That is, instead of comparing against an oracle that knows $G^*$, we should compare against any algorithm which only knows $\cE(G^*)$.
Similar sentiments were also highlighted in recent works \cite{choo2023subset,choo2023new}.

\paragraph{Remark (What if a covered edge is never cut?)}
From \cite{choo2022verification}, we know that the true causal graph $G^*$ will be fully oriented \emph{if and only if} every covered edge of $G^*$ is cut.
Therefore, if a cut probability of a certain covered edge is 0, then \emph{no} algorithm can successfully orient the graph, i.e.\ $\nu(G^*) = \infty$.
Note that this is \emph{not} the case for the star graph described above since we are able to cut every edge by intervening on the leaves (via multiple off-target intervention attempts).

\subsection{A search algorithm with polylogarithmic approximation to \texorpdfstring{$\overline{\nu}^{\max}(G^*)$}{the max}}

On a high level, \cref{alg:off-target-search} repeatedly finds 1/2-clique graph separators (see \cref{thm:chordal-separator} below) to break up the chain components so that we can recurse on smaller sized chain components, \`{a} la \cite{choo2022verification,choo2023subset,choo2023new}.
See \cref{fig:toy-example} for an illustration of graphical concepts.
However, due to action stochasticities, we encounter new challenges in designing and analyzing a search algorithm for off-target interventions.

\begin{definition}[$\alpha$-separator and $\alpha$-clique separator, Definition 19 of \cite{choo2022verification}]
Let $A,B,C$ be a partition of the vertices $V$ of a graph $G = (V,E)$.
We say that $C$ is an \emph{$\alpha$-separator} if no edge joins a vertex in $A$ with a vertex in $B$ and $|A|, |B| \leq \alpha \cdot |V|$. We call $C$ is an \emph{$\alpha$-clique separator} if it is an \emph{$\alpha$-separator} and a clique.
\end{definition}

\begin{theorem}[\cite{gilbert1984separatorchordal}, instantiated for unweighted graphs]
\label{thm:chordal-separator}
Let $G = (V,E)$ be a chordal graph with $|V| \geq 2$ and $p$ vertices in its largest clique.
There exists a $1/2$-clique-separator $C$ involving at most $p-1$ vertices.
The clique $C$ can be computed in $\cO(|E|)$ time.
\end{theorem}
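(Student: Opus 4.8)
The plan is to prove \cref{thm:chordal-separator} through the \emph{clique tree} (junction tree) representation of a chordal graph, which is the standard route for balanced separator results on chordal graphs. Recall that a connected chordal graph $G$ on $n = |V|$ vertices admits a clique tree $T$ whose nodes are the maximal cliques $K_1, \dots, K_t$ of $G$ (with $t \le n$) and which satisfies the \emph{running intersection property}: for every vertex $v$, the set of cliques containing $v$ induces a connected subtree of $T$. Such a $T$, together with a perfect elimination ordering, can be produced in $\cO(|V| + |E|)$ time by maximum cardinality search (or Lex-BFS), which already accounts for the claimed running time of the combinatorial part; the remaining work will be a single pass over $T$.

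First I would record two structural consequences of the running intersection property. (i) For any tree edge $\{K_a, K_b\}$, the intersection $S = K_a \cap K_b$ is a \emph{minimal separator} of $G$: deleting $S$ disconnects the vertices appearing only in cliques on the $K_a$-side from those appearing only on the $K_b$-side, and no edge of $G$ can cross this cut since every edge lies inside some maximal clique, which sits entirely on one side. (ii) Since $K_a$ and $K_b$ are distinct maximal cliques, $S = K_a \cap K_b \subsetneq K_a$, so $S$ is a clique with $|S| \le |K_a| - 1 \le p - 1$. Fact (i) supplies the ``no $A$--$B$ edge'' requirement, and Fact (ii) supplies both the clique property and the size bound $|C| \le p-1$ for any candidate separator of this form.

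Next I would locate a balanced separator by a centroid argument on $T$. Assign each vertex $v$ to the unique clique containing $v$ that is closest to a fixed root of $T$ (well defined by the running intersection property), giving node weights that sum to $n$; for a node $K$ let $D(K)$ denote the total weight in the subtree rooted at $K$. Walking from the root repeatedly into the child whose subtree weight exceeds $n/2$ (whenever one exists) terminates at a node $K^*$ whose subtree is heavy, $D(K^*) > n/2$, but each of whose children is light. Deleting $K^*$ then leaves every connected component with at most $n/2$ vertices: each ``downward'' component is contained in the subtree of a light child, and the ``upward'' component has at most $n - D(K^*) < n/2$ vertices. I would then assemble these components into the two sides $A$ and $B$ and take $C$ to be $K^*$, or, to meet the $p-1$ bound, the minimal separator $K^* \cap \mathrm{parent}(K^*)$ furnished by Fact (ii), with Fact (i) guaranteeing that no edge crosses between $A$ and $B$.

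The main obstacle is combining the balance guarantee with the size bound $|C| \le p - 1$ \emph{simultaneously}: a single clique-tree edge (a minimal separator, automatically of size $\le p-1$) need not balance the graph --- consider a star-shaped clique tree, where every edge isolates only a tiny side --- so one is driven to use a whole maximal clique $K^*$ as the centroid separator, which may contain the full $p$ vertices. The delicate part of the argument, and the technical heart of \cite{gilbert1984separatorchordal}, is to show that $K^*$ can always be refined to a strictly smaller separating clique (equivalently, that the cut can be taken as a suitable minimal separator incident to $K^*$) without destroying the $\le n/2$ balance of the resulting components. I would handle this by a case analysis on the weights of the subtrees hanging off $K^*$, charging the heaviest side across the corresponding tree edge so that the minimal separator on that edge inherits the balance. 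All steps amount to a constant number of passes over $T$, so the overall cost remains $\cO(|E|)$.
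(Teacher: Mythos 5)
First, a point of reference: the paper does not prove this statement at all --- it is imported as a black box from Gilbert, Rose and Edenbrandt (restated via Choo et al.), so there is no internal proof to compare against and your sketch has to stand on its own. The machinery you set up is the standard and correct route: the clique tree with the running intersection property, the fact that every tree edge labels a minimal separator that is a proper subset of a maximal clique (hence a clique of size at most $p-1$), and the centroid walk producing a node $K^*$ whose removal leaves every connected component of $G - K^*$ with at most $n/2$ vertices. But as written this only establishes the weaker statement with a separator of size at most $p$, namely the full maximal clique $K^*$.

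The step that carries the actual content of the theorem --- getting the bound down to $p-1$ --- is precisely the step you defer (``I would handle this by a case analysis \dots charging the heaviest side across the corresponding tree edge''), and the concrete mechanism you gesture at does not work. Replacing $K^*$ by the minimal separator $K^* \cap \mathrm{parent}(K^*)$ destroys the balance: by the defining property of the centroid, the subtree rooted at $K^*$ carries $D(K^*) > n/2$ of the vertex weight (possibly all of it, in which case there is no parent edge), so the component on the $K^*$ side of that cut has at least $D(K^*) - (p-1)$ vertices and is in general far larger than $n/2$. Your own star-shaped clique-tree example already shows that no single tree edge need be balanced, so ``inheriting the balance across a tree edge'' cannot be the whole argument; one genuinely has to show that some proper subset $K^* \setminus \{v\}$ of the centroid clique still separates with all components light, and the choice of $v$ is delicate because the component containing $v$ absorbs every subtree whose edge-separator contains $v$ (a vertex lying in all of them would reunite everything). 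That refinement is the technical heart of the cited result and it is missing here. A secondary caveat: your argument, like the original, bounds each \emph{component} of $G - C$ by $n/2$, whereas the definition of $\alpha$-separator used in this paper asks for a two-block grouping $A, B$ with $|A|, |B| \le n/2$; the latter does not follow from the former (three components of size $n/3$ each cannot be so grouped), so your closing ``assemble these components into the two sides'' is not automatic --- though the search algorithm only ever uses the per-component guarantee.
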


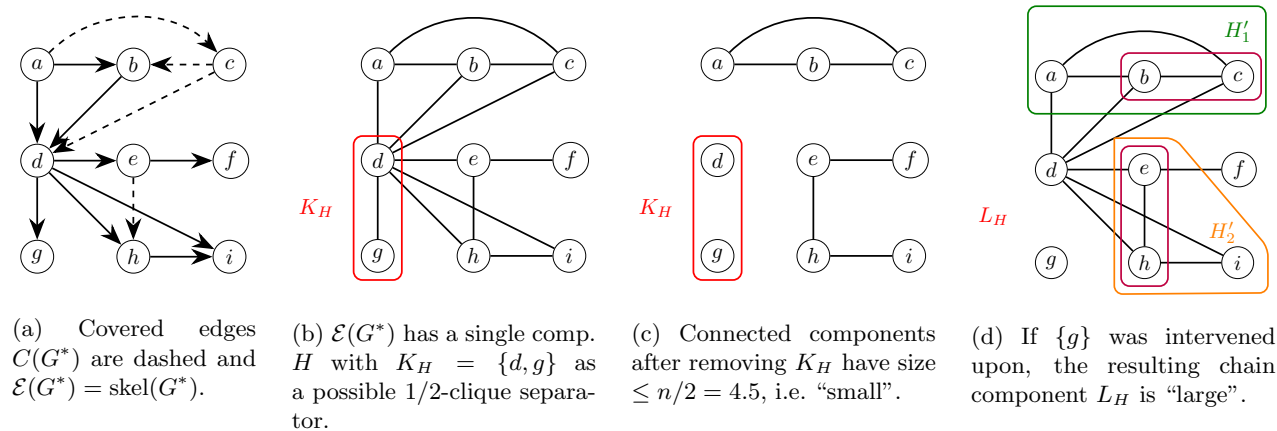
\begin{figure*}[htb]
\centering
\begin{subfigure}[t]{0.1925\linewidth}
    \resizebox{\linewidth}{!}{%
\begin{tikzpicture}
\node[draw, circle, minimum size=15pt, inner sep=2pt] at (0,0) (d) {$d$};
\node[draw, circle, minimum size=15pt, inner sep=2pt, right=of d] (e) {$e$};
\node[draw, circle, minimum size=15pt, inner sep=2pt, right=of e] (f) {$f$};
\node[draw, circle, minimum size=15pt, inner sep=2pt, above=of d] (a) {$a$};
\node[draw, circle, minimum size=15pt, inner sep=2pt, right=of a] (b) {$b$};
\node[draw, circle, minimum size=15pt, inner sep=2pt, right=of b] (c) {$c$};
\node[draw, circle, minimum size=15pt, inner sep=2pt, below=of d] (g) {$g$};
\node[draw, circle, minimum size=15pt, inner sep=2pt, right=of g] (h) {$h$};
\node[draw, circle, minimum size=15pt, inner sep=2pt, right=of h] (i) {$i$};

% To make the diagrams align...
\node[draw, thick, rounded corners, fit=(d)(g), white] (KH) 
{};

\draw[thick, -{Stealth[scale=1.5]}] (a) -- (b);
\draw[thick, -{Stealth[scale=1.5]}, dashed] (a) to[in=135,out=45] (c);
\draw[thick, -{Stealth[scale=1.5]}] (a) -- (d);
\draw[thick, -{Stealth[scale=1.5]}] (b) -- (d);
\draw[thick, -{Stealth[scale=1.5]}, dashed] (c) -- (b);
\draw[thick, -{Stealth[scale=1.5]}, dashed] (c) -- (d);
\draw[thick, -{Stealth[scale=1.5]}] (d) -- (e);
\draw[thick, -{Stealth[scale=1.5]}] (d) -- (g);
\draw[thick, -{Stealth[scale=1.5]}] (d) -- (h);
\draw[thick, -{Stealth[scale=1.5]}] (d) -- (i);
\draw[thick, -{Stealth[scale=1.5]}] (e) -- (f);
\draw[thick, -{Stealth[scale=1.5]}, dashed] (e) -- (h);
\draw[thick, -{Stealth[scale=1.5]}] (h) -- (i);
\end{tikzpicture}
}
    \vspace{-17pt}
    \caption{Covered edges $C(G^*)$ are dashed and $\mathcal{E}(G^*) = \textrm{skel}(G^*)$.}
    \label{fig:G-star}
\end{subfigure}
\quad
\begin{subfigure}[t]{0.24\linewidth}
    \resizebox{\linewidth}{!}{%
\begin{tikzpicture}
\node[draw, circle, minimum size=15pt, inner sep=2pt] at (0,0) (d) {$d$};
\node[draw, circle, minimum size=15pt, inner sep=2pt, right=of d] (e) {$e$};
\node[draw, circle, minimum size=15pt, inner sep=2pt, right=of e] (f) {$f$};
\node[draw, circle, minimum size=15pt, inner sep=2pt, above=of d] (a) {$a$};
\node[draw, circle, minimum size=15pt, inner sep=2pt, right=of a] (b) {$b$};
\node[draw, circle, minimum size=15pt, inner sep=2pt, right=of b] (c) {$c$};
\node[draw, circle, minimum size=15pt, inner sep=2pt, below=of d] (g) {$g$};
\node[draw, circle, minimum size=15pt, inner sep=2pt, right=of g] (h) {$h$};
\node[draw, circle, minimum size=15pt, inner sep=2pt, right=of h] (i) {$i$};

\draw[thick] (a) -- (b);
\draw[thick] (a) to[in=135,out=45] (c);
\draw[thick] (a) -- (d);
\draw[thick] (b) -- (d);
\draw[thick] (c) -- (b);
\draw[thick] (c) -- (d);
\draw[thick] (d) -- (e);
\draw[thick] (d) -- (g);
\draw[thick] (d) -- (h);
\draw[thick] (d) -- (i);
\draw[thick] (e) -- (f);
\draw[thick] (e) -- (h);
\draw[thick] (h) -- (i);

\node[draw, thick, rounded corners, fit=(d)(g), red] (KH) {};
\node[red, left=5pt of KH] {$K_H$};
\end{tikzpicture}
}
    \vspace{-15pt}
    \caption{$\mathcal{E}(G^*)$ has a single comp.\ $H$ with $K_H = \{d,g\}$ as a possible 1/2-clique separator.}
    \label{fig:EG}
\end{subfigure}
\quad
\begin{subfigure}[t]{0.24\linewidth}
    \resizebox{\linewidth}{!}{%
\begin{tikzpicture}
\node[draw, circle, minimum size=15pt, inner sep=2pt] at (0,0) (d) {$d$};
\node[draw, circle, minimum size=15pt, inner sep=2pt, right=of d] (e) {$e$};
\node[draw, circle, minimum size=15pt, inner sep=2pt, right=of e] (f) {$f$};
\node[draw, circle, minimum size=15pt, inner sep=2pt, above=of d] (a) {$a$};
\node[draw, circle, minimum size=15pt, inner sep=2pt, right=of a] (b) {$b$};
\node[draw, circle, minimum size=15pt, inner sep=2pt, right=of b] (c) {$c$};
\node[draw, circle, minimum size=15pt, inner sep=2pt, below=of d] (g) {$g$};
\node[draw, circle, minimum size=15pt, inner sep=2pt, right=of g] (h) {$h$};
\node[draw, circle, minimum size=15pt, inner sep=2pt, right=of h] (i) {$i$};

\draw[thick] (a) -- (b);
\draw[thick] (a) to[in=135,out=45] (c);
% \draw[thick] (a) -- (d);
% \draw[thick] (b) -- (d);
\draw[thick] (c) -- (b);
% \draw[thick] (d) -- (e);
% \draw[thick] (d) -- (g);
% \draw[thick] (d) -- (h);
% \draw[thick] (d) -- (i);
\draw[thick] (e) -- (f);
\draw[thick] (e) -- (h);
\draw[thick] (h) -- (i);

\node[draw, thick, rounded corners, fit=(d)(g), red] (KH) {};
\node[red, left=5pt of KH] {$K_H$};
\end{tikzpicture}
}
    \vspace{-15pt}
    \caption{Connected components after removing $K_H$ have size $\leq n/2 = 4.5$, i.e.\ ``small''.}
    \label{fig:remove-KH}
\end{subfigure}
\quad
\begin{subfigure}[t]{0.24\linewidth}
    \resizebox{\linewidth}{!}{%
\begin{tikzpicture}
\node[draw, circle, minimum size=15pt, inner sep=2pt] at (0,0) (d) {$d$};
\node[draw, circle, minimum size=15pt, inner sep=2pt, right=of d] (e) {$e$};
\node[draw, circle, minimum size=15pt, inner sep=2pt, right=of e] (f) {$f$};
\node[draw, circle, minimum size=15pt, inner sep=2pt, above=of d] (a) {$a$};
\node[draw, circle, minimum size=15pt, inner sep=2pt, right=of a] (b) {$b$};
\node[draw, circle, minimum size=15pt, inner sep=2pt, right=of b] (c) {$c$};
\node[draw, circle, minimum size=15pt, inner sep=2pt, below=of d] (g) {$g$};
\node[draw, circle, minimum size=15pt, inner sep=2pt, right=of g] (h) {$h$};
\node[draw, circle, minimum size=15pt, inner sep=2pt, right=of h] (i) {$i$};

\node[draw, thick, rounded corners, fit=(d)(g), white] (KH) {};
\node[red, left=5pt of KH] {$L_H$};

\draw[thick] (a) -- (b);
\draw[thick] (a) to[in=135,out=45] (c);
\draw[thick] (a) -- (d);
\draw[thick] (b) -- (d);
\draw[thick] (c) -- (b);
\draw[thick] (c) -- (d);
\draw[thick] (d) -- (e);
% \draw[thick, dotted] (d) -- (g);
\draw[thick] (d) -- (h);
\draw[thick] (d) -- (i);
\draw[thick] (e) -- (f);
\draw[thick] (e) -- (h);
\draw[thick] (h) -- (i);

\node[green!50!black, above=5pt of c] (H1-prime) {$H'_1$};
\node[draw, thick, rounded corners, fit={($(a.west) + (0,-0.5)$)(b)(c)(H1-prime)}, green!50!black] {};
\node[orange] at ($(i) + (-0.25,0.5)$) (H2-prime) {$H'_2$};
\draw[thick, rounded corners, orange] ($(e.north) + (-0.5,0.25)$) -- ($(h.south) + (-0.5,-0.25)$) -- ($(i) + (0.5,-0.5)$) -- ($(i) + (0.5,0.25)$) -- ($(e.north) + (0.5,0.25)$) -- ($(e.north) + (-0.5,0.25)$);

\node[draw, thick, rounded corners, fit=(b)(c), purple] {};
\node[draw, thick, rounded corners, fit=(e)(h), purple] {};
\end{tikzpicture}
}
    \vspace{-19pt}
    \caption{If $\{g\}$ was intervened upon, the resulting chain component $L_H$ is ``large''.}
    \label{fig:off-target-intervention}
\end{subfigure}
\caption{
A moral DAG $G^*$ on $n=9$ nodes illustrating graphical concepts such as $\cE(G^*)$, $\textrm{skel}(G^*)$, $C(G^*)$, 1/2-clique separators, and large chain components.
The essential graph $\mathcal{E}(G^*)$ has a single chain component $H$ with {\color{red}$K_H$} as one possible 1/2-clique separator of $H$.
Suppose we oriented $d \sim g$ through an intervention on $\{g\}$ due to action stochasticity in \cref{fig:off-target-intervention}.
The resulting chain component {\color{red}$L_H$} after intervening on $\{g\}$ is \emph{large} since $|V(L_H)| = |\{a,b,c,d,e,f,h,i\}| = 8 > n/2$.
\textsc{PerformPartitioning} (\cref{alg:perform-partitioning}) breaks {\color{red}$L_H$} up by trying to intervene within $L_H[V(L_H) \cap N({\color{red}u_H})]$, where $d \equiv {\color{red}u_H}$ is the unique vertex from {\color{red}$K_H$} within $V({\color{red}K_H}) \cap V({\color{red}L_H})$.
There are two chain components {\color{green!50!black}$H'_1$} and {\color{orange}$H'_2$} in the induced subgraph $L_H[V(L_H) \cap N({\color{red}u_H})]$.
Since $f$ is \emph{not} a neighbor of $d$, the vertex $f$ is not part of {\color{orange}$H'_2$}.
If we pick 1/2-clique separators {\color{purple}$Z'_1 = \{b,c\}$} and {\color{purple}$Z'_2 = \{e,h\}$} for {\color{green!50!black}$H'_1$} and {\color{orange}$H'_2$}, then $c \equiv {\color{purple}z'_1}$ and $e \equiv {\color{purple}z'_2}$ are the sources of {\color{purple}$Z'_1$} and {\color{purple}$Z'_2$} respectively.}
\label{fig:toy-example}
\end{figure*}

\begin{algorithm}[htb]
\begin{algorithmic}[1]
\caption{\textsc{OffTargetSearch}.}
\label{alg:off-target-search}
    \Statex \textbf{Input}: Essential graph $\cE(G^*)$ of a moral DAG $G^*$, $k$ actions $A_1, \ldots, A_k$, action weights $w_1, \ldots, w_k$, cutting probabilities $\{ c_i(e) \}_{i \in [k], e \in E}$.
    \Statex \textbf{Output}: A complete orientation of $G^*$.
    \While{there are still unoriented edges}
        \State $K \gets \emptyset$ \Comment{Collection of 1/2-clique separators}.
        \For{each chain comp.\ $H$ of size $|V(H)| \geq 2$}
            \State Find 1/2-clique separator $K_H$ of $H$.
            \State Add $K_H$ to $K$ and store the size of $|V(H)|$.
        \EndFor
        \State Run \textsc{PerformPartitioning} on $K$.
    \EndWhile
\end{algorithmic}
\end{algorithm}

To appreciate of some of the new challenges, consider the simple case where the input essential graph $\cE(G^*)$ is a clique on $n$ vertices.
If $G^*$ orients $v_1 \to \ldots \to v_n$, then the covered edges are $C(G^*) = \{v_1 \to v_2, \ldots, v_{n-1} \to v_n\}$.
By \cref{thm:verifying-must-cut-covered-edges}, interventions need to cut all edges in $C(G^*)$ to fully orient $\cE(G^*)$.
However, we do \emph{not} actually know the true edge orientations of $G^*$ (and hence $C(G^*)$).
To orient $\cE(G^*)$, \cite{choo2022verification,choo2023subset} simply intervene on all vertices while \cite{choo2023new} intervenes on all vertices except the costliest vertex.
In either case, these prior works are guaranteed to fully orient $\cE(G^*)$ regardless of the underlying orientation of $G^*$ while paying $\cO(\overline{\nu}^{\max}(G^*))$.
Unfortunately, with off-target interventions, such strategies do not work as it may be too costly to attempt to intervene on each vertex one at at time.

While an 1/2-clique separator approach typically ensures that search completes in $\cO(\log n)$ iterations if we can repeatedly identify separators and recurse on smaller subgraphs, action stochasticities may prevent us from orienting all incident edges to the separators.
We overcome this via the subroutine \textsc{PerformPartitioning} (\cref{alg:perform-partitioning}) described later.

\begin{restatable}{lemma}{offtargetsearchiterations}
\label{lem:off-target-search-iterations}
If resulting components in $H$ always have size at most $|V(H)|/2$ after invoking \textsc{PerformPartitioning}, then \textsc{OffTargetSearch} terminates in $\cO(\log n)$ iterations and outputs $G^*$.
\end{restatable}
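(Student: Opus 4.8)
The plan is to decouple the statement into a bound on the number of outer-loop iterations and a proof that the final orientation equals $G^*$. For the iteration bound I would introduce the potential $M_t = \max_{H} |V(H)|$, the number of vertices in the largest chain component present at the start of the $t$-th execution of the \textbf{while} loop. Since the chain components are exactly the connected components of the unoriented subgraph (i.e.\ the elements of $CC(\cE_{\cI}(G^*))$), they partition $V$ and $M_1 \le n$. In each iteration the algorithm processes every chain component $H$ with $|V(H)| \ge 2$, and the hypothesis guarantees that after \textsc{PerformPartitioning} every component contained in $H$ has size at most $|V(H)|/2 \le M_t/2$. Taking the maximum over all $H$ yields the recurrence $M_{t+1} \le M_t/2$.

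Unrolling gives $M_t \le n/2^{t-1}$, so once $t-1 \ge \log_2 n$, i.e.\ after $\cO(\log n)$ iterations, we have $M_t \le 1$; equivalently, every chain component is a singleton and the unoriented subgraph has no edges. At that point the loop guard fails and \textsc{OffTargetSearch} halts, which gives the claimed $\cO(\log n)$ iteration count. I would also note in passing that each iteration is well-defined: every $\cI$-essential graph is a chain graph with chordal chain components, so \cref{thm:chordal-separator} guarantees that the $1/2$-clique separator $K_H$ sought in the inner loop always exists.

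For correctness I would argue by soundness of the orientations. The algorithm orients an edge only through an actual intervention, which under our model reveals the true direction of every edge it cuts (our fourth algorithmic assumption), or through propagation by the Meek rules, which are sound and deduce only orientations forced in every DAG of the current equivalence class. Hence, letting $\cI$ denote the multiset of interventions performed so far, the partially directed graph maintained by the algorithm is always the interventional essential graph $\cE_{\cI}(G^*)$, so each of its oriented arcs agrees with $G^*$. Combining this with the first part, the loop terminates precisely when no unoriented edges remain, at which point $\cE_{\cI}(G^*)$ is fully directed and therefore equal to $G^*$.

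The main obstacle is not in either step above, both of which are short once the hypothesis is in hand, but in applying the hypothesis at the right granularity: it is phrased per component, so one must verify that halving within every $H$ simultaneously halves the global potential $M_t$ in a single pass, which is exactly where the inequality $|V(H)| \le M_t$ is used. The genuinely difficult content, namely showing that \textsc{PerformPartitioning} really does achieve this per-component halving despite off-target action stochasticity, is the subject of the separate analysis of that subroutine and is assumed here.
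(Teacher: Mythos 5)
Your proof is correct and follows essentially the same route as the paper's: the paper also argues that chain component sizes halve each phase (so $\cO(\log n)$ iterations suffice) and that singleton chain components mean all incident edges are oriented. Your explicit potential $M_t$ and the soundness remark for correctness are just more detailed writeups of the same two observations.
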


To understand \textsc{PerformPartitioning}, we first need to describe the subroutine \textsc{OrientInternalCliqueEdges} (\cref{alg:orient-internal-clique-edges}) used to orient internal edges within any collection of disjoint cliques.

\begin{algorithm}[htb]
\begin{algorithmic}[1]
\caption{\textsc{OrientInternalCliqueEdges}.}
\label{alg:orient-internal-clique-edges}
    \Statex \textbf{Input}: Target edges $T \subseteq E$ of disjoint cliques.
    \Statex \textbf{Output}: A complete orientation of all edges in $T$.
    \While{there are still unoriented edges in $T$}
        \State Pick an arbitrary topological ordering $\sigma$ that is consistent with all revealed arc orientations.
        \State Run \textsc{CutViaLP} on the covered edges w.r.t.\ $\sigma$.
    \EndWhile
\end{algorithmic}
\end{algorithm}

\begin{restatable}{lemma}{orientinternalcliqueedgesproperties}
\label{lem:orient-internal-clique-edges-properties}
\textsc{OrientInternalCliqueEdges} incurs a cost of $\cO(\overline{\nu}^{\max}(G^*) \cdot \log^2 n)$ in expectation.
\end{restatable}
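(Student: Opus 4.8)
The plan is to bound the total expected cost of \textsc{OrientInternalCliqueEdges} as a product of two quantities: the expected cost of a single call to \textsc{CutViaLP} inside the \textbf{while}-loop, and the number of times the loop executes. I would show that each call costs $\cO(\overline{\nu}^{\max}(G^*) \cdot \log n)$ in expectation (the ``action stochasticity'' factor) and that the loop terminates after $\cO(\log n)$ iterations (the ``guessing identities of covered edges'' factor); multiplying the two gives the claimed $\cO(\overline{\nu}^{\max}(G^*) \cdot \log^2 n)$.

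For the per-iteration cost, I would fix an iteration with chosen topological order $\sigma$. Completing the current (interventional) essential graph by orienting every remaining edge consistently with $\sigma$ yields, by \cref{thm:can-freely-orient-unoriented-consistently}, a DAG $G' \in [G^*]$, since $\sigma$ is acyclic, consistent with all revealed arcs, and creates no new v-structures. By the definition of the loop body, the edges $T'$ cut in this iteration are exactly the still-unoriented covered edges of $G'$, so $T' \subseteq C(G')$ (inside a single clique these are precisely the consecutive pairs along $\sigma$). Let $\overline{\nu}_{T'}$ be the minimum expected cost of any policy cutting all of $T'$. Since cutting every edge of $C(G')$ is exactly verification of $G'$ and $T' \subseteq C(G')$, we get $\overline{\nu}_{T'} \le \overline{\nu}(G') \le \overline{\nu}^{\max}(G^*)$, where the last inequality is the defining property of the max-benchmark and holds at \emph{every} iteration even though the $G'$ differ. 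By Theorem~1 of \cite{goemans2006stochastic} the optimum of the corresponding \eqref{eq:VLP} is at most $\overline{\nu}_{T'}$, and by Theorem~2 (the guarantee of \textsc{CutViaLP} invoked in \cref{thm:verification-via-reduction}) the call costs $\cO(\mathrm{OPT} \cdot \log |T'|) \subseteq \cO(\overline{\nu}^{\max}(G^*) \cdot \log n)$ in expectation, where $\mathrm{OPT}$ denotes the optimum of \eqref{eq:VLP}. This settles the first factor cleanly.

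The harder part, and the main obstacle, is bounding the number of \textbf{while}-loop iterations by $\cO(\log n)$. The naive hope that cutting the covered edges w.r.t.\ $\sigma$ halves the size of the largest chain component inside each clique is \emph{false}: for a $4$-clique a single round can orient only the three consecutive edges and leave an unoriented path on all four vertices, because no Meek rule fires on the non-consecutive edges of a clique. The correct route is to use a potential measuring \emph{sorting progress} rather than raw component size. Concretely, within each clique I would track the number of linear extensions still consistent with the revealed comparisons (equivalently the logarithm of this count, which is additive across independent cliques), and argue that cutting the consecutive covered edges of any consistent $\sigma$, followed by transitive closure via Meek's rules, shrinks this potential by a constant factor per round. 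Proving this reduction amounts to analyzing a restricted parallel-sorting procedure in which each round reveals the outcomes of the $|H|-1$ consecutive comparisons of a chosen linear extension of a clique $H$; such procedures sort in $\cO(\log n)$ rounds, and the small cases are consistent with a $\lceil \log_2 p \rceil$ bound (a $p$-clique needs $2$ rounds for $p=4$ and $3$ for $p=5$ in the worst case). I expect establishing this constant-factor decrease for \emph{arbitrary} consistent $\sigma$ to be the technically delicate step.

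Finally, combining the two factors over all cliques simultaneously gives total expected cost $\cO(\log n) \cdot \cO(\overline{\nu}^{\max}(G^*) \cdot \log n) = \cO(\overline{\nu}^{\max}(G^*) \cdot \log^2 n)$, as claimed. The only subtlety beyond the sorting analysis is that the per-iteration bound must hold uniformly across iterations; this is automatic, since each completion $G'$ lies in $[G^*]$ and hence satisfies $\overline{\nu}(G') \le \overline{\nu}^{\max}(G^*)$ by definition of the benchmark.
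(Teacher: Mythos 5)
Your per-iteration cost bound matches the paper's argument exactly: complete the current essential graph consistently with $\sigma$ to obtain some $G' \in [G^*]$ via \cref{thm:can-freely-orient-unoriented-consistently}, observe that the targeted edges are covered edges of $G'$, and invoke the stochastic-set-cover guarantees to pay $\cO(\overline{\nu}(G') \cdot \log n) \subseteq \cO(\overline{\nu}^{\max}(G^*) \cdot \log n)$ per call. That half is fine.

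The gap is in the iteration count. The ``naive'' halving argument you dismiss as false is precisely the paper's proof, and your $4$-clique counterexample does not exist. The fact you are missing is \cref{thm:oriented-endpoints-in-different-chain-components} (Theorem 7 of \cite{choo2023subset}): in a moral DAG, once an arc $u \to v$ is oriented in an interventional essential graph, $u$ and $v$ lie in \emph{different} chain components --- they are not even connected by a path of unoriented edges. Since every $\sigma$-consecutive pair $v_i \sim v_{i+1}$ in a $k$-clique is cut and hence oriented, each resulting chain component is an independent set in the path $v_1 - v_2 - \cdots - v_k$ and therefore has size at most $\lceil k/2 \rceil$; it is moreover again a clique, so the argument recurses and $\cO(\log n)$ rounds suffice. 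Concretely, on the $4$-clique your scenario is impossible: cutting is a parity condition ($x_u \neq x_v$ for the intervention's indicator vector), so any intervention that cuts a consecutive edge necessarily cuts a non-consecutive one as well, and Meek rule R2 \emph{does} fire inside cliques on directed $2$-paths; \cref{thm:oriented-endpoints-in-different-chain-components} packages all of this. Your proposed replacement via counting linear extensions and parallel-sorting rounds might be made to work, but you explicitly leave its central step (the constant-factor potential decrease for arbitrary consistent $\sigma$) unproven, so as written the proof is incomplete --- and the direct route is both correct and simpler.
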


We can obtain competitive ratios against $\overline{\nu}^{\max}(G^*)$ as we can freely orient unoriented edges in an acyclic fashion to obtain a DAG within the equivalence class (\cref{thm:can-freely-orient-unoriented-consistently}).
One logarithmic factor is due to stochasticity while invoking \textsc{CutViaLP} (\cref{alg:cut-via-LP}); in a similar spirit as the coupon collector problem.
Meanwhile, the other logarithmic factor is because we do not know the identities of the covered edges within the cliques.
In more detail: after a call to \textsc{CutViaLP} on an arbitrary ordering $\sigma$, we are guaranteed that no two adjacent vertices (with respect to $\sigma$) will be in the same chain component (\cref{thm:oriented-endpoints-in-different-chain-components}), so the size of each clique is at least halved and logarithmic rounds suffice.

\begin{algorithm}[htb]
\begin{algorithmic}[1]
\caption{\textsc{PerformPartitioning}.}
\label{alg:perform-partitioning}
    \Statex \textbf{Input}: A collection of 1/2-clique separators $K = \{ K_H \}_{H \in CC(\cE_{\cI}(G^*))}$, for some $\cE_{\cI}(G^*)$.
    \Statex \textbf{Output}: An interventional essential graph where sizes of chain components in $H$ are $\leq |H|/2$.
    \State Run \textsc{OrientInternalCliqueEdges} on $E(K)$ to get interventional essential graph $\cE_{\cI'}(G^*)$.
    \State Let $L = \{ L_H \in H : |V(L_H)| > |V(H)| / 2\}$ be the large chain components with unique vertex $u_H \in V(K_H) \cap V(L_H)$ for each $L_H \in L$. \Comment{\cref{lem:exactly-one-from-KH}}
    \While{$\exists L_H \in L$ such that $|V(L_H)| \geq 2$}
        \State Find 1/2-clique separator $Z_{H'}$ of $H'$ for each $L_H \in H$ and $H' \in CC(L_H[V(L_H) \cap N(u_H)])$.
        \State Define a consistent ordering $\sigma$ s.t.\ for all $L_H \in L$ and $H' \in CC(L_H[V(L_H) \cap N(u_H)])$, we have
        \Statex\hspace{\algorithmicindent}$\sigma(u_H) < \sigma(z) < \sigma(y)$ for all $z \in V(Z_{H'})$, and for all $y \in V(H') \setminus V(Z_{H'})$.
        \State Run \textsc{OrientInternalCliqueEdges} on the covered edges w.r.t.\ $\sigma$. For each $H'$, let $z_{H'}$ be the source
        \Statex\hspace{\algorithmicindent}vertex of $Z_{H'}$ which we can identify after orienting internal edges.
        \State Define a consistent ordering $\sigma'$ s.t.\ for all $L_H \in L$ and $H' \in CC(L_H[V(L_H) \cap N(u_H)])$,
        we have
        \Statex\hspace{\algorithmicindent}$\sigma'(u_H) < \sigma'(z_{H'}) < \sigma'(y)$ for all $y \in V(L_H) \setminus \{u_H, z_{H'}\}$.
        \State Run \textsc{CutViaLP} on the covered edges w.r.t.\ $\sigma'$.
        \Comment{$u_H \sim z_{H'}$ is a covered edge}
        \State Restrict $L_H$s to the component with $u_H$s.
    \EndWhile
\end{algorithmic}
\end{algorithm}

Given 1/2-clique separators for chain components of an essential graph $\cE_{\cI}(G^*)$, \textsc{PerformPartitioning} first invokes \textsc{OrientInternalCliqueEdges} to orient the edges internal to the separators, yielding a new interventional essential graph $\cE_{\cI'}(G^*)$.
For an arbitrary chain component $H$, there may still be unoriented edges incident to separator vertices as some vertices may not be intervened by \textsc{OrientInternalCliqueEdges}.
A resulting chain component $L_H \in CC(\cE_{\cI'}(G^*))[H]$ is considered ``large'' if its size is strictly larger than $|V(H)|/2$; if it exists, there can be at most one of them for each $H$.

\begin{restatable}{lemma}{exactlyonefromKH}
\label{lem:exactly-one-from-KH}
Consider the interventional essential graph $\cE_{\cI'}(G^*)$ at Line 1 of \cref{alg:perform-partitioning} and an arbitrary chain component $H \in CC(\cE_{\cI}(G^*))$.
If $\cE_{\cI'}(G^*)[H]$ has a large chain component $L_H$ of size $|V(L_H)| > |V(H)|/2$, then $|V(L_H) \cap V(K_H)| = 1$ for any 1/2-clique separator $K_H$ of $H$.
\end{restatable}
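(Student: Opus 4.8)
The plan is to establish the two inequalities $|V(L_H) \cap V(K_H)| \le 1$ and $|V(L_H) \cap V(K_H)| \ge 1$ separately, and then combine them. Fix the 1/2-clique separator $K_H$ of $H$ and the large chain component $L_H$ of $\cE_{\cI'}(G^*)[H]$.

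For the upper bound, I would exploit the fact that Line 1 of \cref{alg:perform-partitioning} runs \textsc{OrientInternalCliqueEdges} (\cref{alg:orient-internal-clique-edges}) on $E(K)$, which by its output specification orients \emph{every} edge internal to each clique separator; in particular all $\binom{|V(K_H)|}{2}$ edges of the clique $K_H$ become oriented in $\cE_{\cI'}(G^*)$. Since $K_H$ is a clique, any two of its vertices are now joined by an oriented edge, so \cref{thm:oriented-endpoints-in-different-chain-components} places any two distinct vertices of $K_H$ into distinct chain components of $\cE_{\cI'}(G^*)$. Consequently every chain component, and in particular $L_H$, contains at most one vertex of $K_H$.

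For the lower bound, I would argue by contradiction, assuming $V(L_H) \cap V(K_H) = \emptyset$. The key observation is that interventions never alter the skeleton, so the unoriented edges comprising $L_H$ are genuine edges of $\skel(H)$ and hence $V(L_H)$ induces a connected subgraph of $H$. By the definition of a 1/2-clique separator, $V(K_H)$ partitions the remaining vertices into $A, B$ with $|A|, |B| \le |V(H)|/2$ and with no edge of $H$ crossing between $A$ and $B$. A connected vertex set disjoint from $V(K_H)$ must therefore lie entirely within $A$ or entirely within $B$, forcing $|V(L_H)| \le \max\{|A|, |B|\} \le |V(H)|/2$, which contradicts the hypothesis $|V(L_H)| > |V(H)|/2$. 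Hence $L_H$ contains at least one vertex of $K_H$, and combining the two bounds gives $|V(L_H) \cap V(K_H)| = 1$.

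The step requiring the most care is the invocation of \cref{thm:oriented-endpoints-in-different-chain-components}: one must confirm that \textsc{OrientInternalCliqueEdges} genuinely orients \emph{all} internal edges of $K_H$ (so that \emph{every} pair within the clique is separated into different components), and one must be precise that a chain component is connected through unoriented edges that form a subset of $\skel(H)$, which is exactly what lets the separator property of $H$ transfer to $L_H$. The remainder is a short pigeonhole-and-connectivity argument and should be routine.
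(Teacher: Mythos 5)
Your proposal is correct and follows essentially the same route as the paper's proof: the ``at most one'' direction via \textsc{OrientInternalCliqueEdges} orienting all internal edges of the clique $K_H$ and then applying \cref{thm:oriented-endpoints-in-different-chain-components}, and the ``at least one'' direction via the $1/2$-separator property forcing any chain component disjoint from $K_H$ to have size at most $|V(H)|/2$. Your version simply spells out the connectivity-through-unoriented-edges detail more explicitly than the paper does.
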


Fix a large component $L_H$.
By \cref{lem:exactly-one-from-KH}, $L_H$ contains exactly one vertex from $K_H$, say $u_H$.
By the property of 1/2-clique separators, $L_H[V(L_H) \setminus \{u_H\}]$ consists of components of size at most $|V(H)|/2$.
That is, orienting all edges incident to $u_H$ suffices to break up $L_H$ into small components.
However, this may be costly.\footnote{\cref{lem:perform-partitioning-iterations} does not guarantee that all incident edges of $u_H$ are oriented but it suffices for breaking up $L_H$.}
Instead, we use separators while invoking \textsc{OrientInternalCliqueEdges} and \textsc{CutViaLP}.

\begin{restatable}{lemma}{performpartitioningiterations}
\label{lem:perform-partitioning-iterations}
Fix any chain component $H$ and consider any chain component $H' \in CC(L_H[V(L_H) \cap N(u_H)])$.
\textsc{PerformPartitioning} cuts $u_H \sim z_{H'}$ either in Line 6 or 8.
If $u_H$ is still connected to some chain component $C \subseteq H'$ after Line 8, then $|V(C)| \leq |V(H')|/2$.
\end{restatable}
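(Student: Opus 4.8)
The statement has two parts --- that the edge $u_H \sim z_{H'}$ is cut, and that any residual chain component $C \subseteq H'$ still attached to $u_H$ is small --- and I would prove them in that order.

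For the first part, the plan is to argue that if the stochastic interventions triggered inside \textsc{OrientInternalCliqueEdges} at Line 6 do not already cut $u_H \sim z_{H'}$, then the edge is guaranteed to be cut at Line 8. The key step is to show that $u_H \to z_{H'}$ is a covered edge with respect to the ordering $\sigma'$ fixed at Line 7, so that it belongs to the target set handed to \textsc{CutViaLP}, which cuts all of its target edges (Theorem 2 of \cite{goemans2006stochastic}, via \cref{alg:cut-via-LP}). To verify the covered-edge property, note that $\sigma'$ places $u_H$ first and $z_{H'}$ second among the vertices of $L_H$, so inside $L_H$ the source $u_H$ has no parents and the only internal parent of $z_{H'}$ is $u_H$; hence the internal parent sets satisfy $\Pa(u_H) = \emptyset = \Pa(z_{H'}) \setminus \{u_H\}$. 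For parents coming from outside $L_H$ (the already-oriented context), I would invoke Meek's rule R1: since $u_H \sim z_{H'}$ is still undirected within the chain component, every external parent of $u_H$ must also be adjacent to, and thus an (oriented) parent of, $z_{H'}$, and symmetrically. The two external parent sets therefore coincide, and combining with the internal part yields $\Pa(u_H) = \Pa(z_{H'}) \setminus \{u_H\}$, which is exactly the definition of a covered edge.

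For the size bound, I would lean on the defining property of the $1/2$-clique separator $Z_{H'}$ of \cref{thm:chordal-separator}: it partitions $V(H')$ into $A', Z_{H'}, B'$ with no edges between $A'$ and $B'$ and $|A'|, |B'| \le |V(H')|/2$. By Line 6 all internal edges of $Z_{H'}$ are oriented, so its vertices lie in pairwise-distinct chain components; together with the cut edge $u_H \to z_{H'}$ and Meek's rule R2 applied along $u_H \to z_{H'} \to z$ for each $z \in Z_{H'}$ (recall $H' \subseteq N(u_H)$, so $u_H \sim z$), all edges from $u_H$ into $Z_{H'}$ become oriented. The plan is then to show that any chain component $C \subseteq V(H')$ to which $u_H$ is still attached contains no vertex of $Z_{H'}$, and hence lives inside $A' \cup B'$; since $C$ is connected and there are no edges between $A'$ and $B'$, it must lie entirely within one side, giving $|V(C)| \le \max\{|A'|, |B'|\} \le |V(H')|/2$.

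The main obstacle is precisely this confinement argument in the second part. The difficulty is that $H' \subseteq N(u_H)$, so $u_H$ is adjacent to \emph{every} vertex of $H'$; cutting the single edge $u_H \sim z_{H'}$ does not, by itself, prevent the residual undirected neighborhood of $u_H$ from straddling both sides of the separator through some other separator vertex that retains undirected edges into both $A'$ and $B'$. I expect the crux to be showing that orienting the clique $Z_{H'}$ outward from its source $z_{H'}$ (Line 6), combined with the cut at Line 8, forces --- via Meek's rules and the guarantee that the endpoints of the cut covered edge fall into distinct chain components (\cref{thm:oriented-endpoints-in-different-chain-components}) --- the separator-to-side edges to orient in a way that detaches all of $Z_{H'}$ from $C$ and keeps $C$ on a single side. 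This is exactly where the designation of $z_{H'}$ as the separator source and the auxiliary ordering $\sigma$ at Line 5 do the work, and it is the step I would treat most carefully.
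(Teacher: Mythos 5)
Your first part (that $u_H \sim z_{H'}$ gets cut) matches the paper's argument: the paper likewise observes that if Line 6 misses the edge, then under $\sigma'$ it is an unoriented covered edge of the DAG induced by $\sigma'$ (via \cref{thm:can-freely-orient-unoriented-consistently}) and hence lies in the target set handed to \textsc{CutViaLP} at Line 8. Your more detailed justification of the covered-edge property (consecutive placement of $u_H, z_{H'}$ under $\sigma'$ plus equality of external parents within a chain component) is a fleshed-out version of the same step and is fine.

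The second part, however, is not a proof as written: you correctly isolate the confinement step --- that the residual component $C$ must avoid $Z_{H'}$ and therefore sit inside a single side $A'$ or $B'$ of the separator --- but you explicitly leave it open (``I expect the crux to be\ldots'', ``the step I would treat most carefully''). Moreover, the one concrete mechanism you do offer, Meek R2 along $u_H \to z_{H'} \to z$, presupposes that the cut edge orients as $u_H \to z_{H'}$; if the true orientation is $z_{H'} \to u_H$, then on the triangle $\{z_{H'}, z, u_H\}$ neither R1 (blocked because $z_{H'} \sim u_H$) nor R2 fires, so your claim that ``all edges from $u_H$ into $Z_{H'}$ become oriented'' is unsupported in that case, and the straddling scenario you worry about is not ruled out by your argument. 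For comparison, the paper's own proof is much blunter here: it records only that $z_{H'} \notin V(C)$ (since the cut edge is oriented, $u_H$ and $z_{H'}$ land in different chain components by \cref{thm:oriented-endpoints-in-different-chain-components}) and that the internal edges of $Z_{H'}$ are oriented after Line 6, and then appeals directly to $Z_{H'}$ being a $1/2$-clique separator of $H'$; it does not carry out the finer case analysis you flag. So you have put your finger on a genuine subtlety that the paper itself glosses over, but as submitted your proposal leaves the size bound --- the substantive half of the lemma --- unproven.
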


Note that, within the while-loop, it may be the case that $H'$ is a singleton $\{z_{H'}\}$ (e.g.\ when $L_H$ is a large star with $u_H$ at the center).
In this case, the edge $u_H \sim z_{H'}$ will be cut on Line 8.

Throughout, to compete against $\overline{\nu}^{\max}(G^*)$, we only need to compete against \emph{some} causal DAG in the equivalence class and apply \cref{thm:can-freely-orient-unoriented-consistently} in our analysis suitably.
As \textsc{PerformPartitioning} uses $\cO(\log n)$ iterations to remove large chain components, and each iteration of the while loop invokes \textsc{OrientInternalCliqueEdges} and \textsc{CutViaLP} once, we have:

\begin{restatable}{lemma}{performpartitioningguarantees}
\label{lem:perform-partitioning-guarantees}
\textsc{PerformPartitioning} incurs a cost of at most $\cO(\overline{\nu}^{\max}(G^*) \cdot \log^3 n)$ per invocation, and chain components in $H$ always have size at most $|V(H)|/2$ after invoking \textsc{PerformPartitioning}.
\end{restatable}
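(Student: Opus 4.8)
The plan is to establish the lemma's two assertions separately: the $\cO(\overline{\nu}^{\max}(G^*)\cdot\log^3 n)$ cost bound follows by bookkeeping over the subroutine calls, while the structural guarantee on component sizes carries the substance of the argument. I would first record the control flow of \textsc{PerformPartitioning}: Line 1 makes a single call to \textsc{OrientInternalCliqueEdges}, and thereafter the while-loop of Lines 3--9 runs some number of rounds, each invoking \textsc{OrientInternalCliqueEdges} once (Line 6) and \textsc{CutViaLP} once (Line 8). Everything then reduces to (i) bounding the number of while-loop rounds by $\cO(\log n)$, and (ii) checking that each subroutine call pays only what its own lemma promises while competing against $\overline{\nu}^{\max}(G^*)$.

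For the structural claim I would fix a chain component $H$ and follow its large piece through the loop. After Line 1, \cref{lem:exactly-one-from-KH} guarantees that any large component $L_H$ of size $>|V(H)|/2$ (and there is at most one, since two disjoint components each of size $>|V(H)|/2$ cannot both fit inside $V(H)$) meets the clique separator $K_H$ in exactly one vertex $u_H$. The decisive consequence of the $1/2$-clique separator property (\cref{thm:chordal-separator}) is that $L_H[V(L_H)\setminus\{u_H\}]$ breaks into components each of size at most $|V(H)|/2$: because $V(L_H)\setminus\{u_H\}\subseteq V(H)\setminus V(K_H)$ and the undirected edges of $L_H$ are a subset of those of $H$, each such component is contained in a component of $H[V(H)\setminus V(K_H)]$, all of which are bounded by $|V(H)|/2$. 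Hence it suffices to isolate $u_H$ inside $L_H$, that is, to orient every edge of $L_H$ incident to $u_H$.

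I would then establish termination by showing each round halves the portion of $u_H$'s neighborhood that remains attached to it. Concretely, \cref{lem:perform-partitioning-iterations} asserts that for every $H'\in CC(L_H[V(L_H)\cap N(u_H)])$, the part $C\subseteq H'$ still connected to $u_H$ after Line 8 satisfies $|V(C)|\leq|V(H')|/2$, and Line 9 keeps only the component containing $u_H$; since neighbor-component sizes start at most $n$ and are halved each round, after $\cO(\log n)$ rounds $u_H$ is isolated and the loop exits for $H$. At that point every surviving component of $H$ is either one of the already-small pieces produced at Line 1 or is contained in a component of $L_H[V(L_H)\setminus\{u_H\}]$, hence of size $\leq|V(H)|/2$; running this uniformly over all $H$ yields the structural guarantee and simultaneously bounds the round count at $\cO(\log n)$.

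Finally, for the cost I would sum over the $\cO(\log n)$ rounds. Each call to \textsc{OrientInternalCliqueEdges} costs $\cO(\overline{\nu}^{\max}(G^*)\cdot\log^2 n)$ by \cref{lem:orient-internal-clique-edges-properties}, and each call to \textsc{CutViaLP} costs $\cO(\overline{\nu}^{\max}(G^*)\cdot\log n)$: here I would appeal to \cref{thm:can-freely-orient-unoriented-consistently} to treat the target edges (the covered edges $u_H\sim z_{H'}$ with respect to $\sigma'$) as covered edges of some DAG $G'\in[G^*]$, so that the \eqref{eq:VLP} optimum lower-bounding their cut cost is at most $\overline{\nu}(G')\leq\overline{\nu}^{\max}(G^*)$, after which \cref{thm:verification-via-reduction} contributes the $\cO(\log n)$ overhead. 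Adding the single Line 1 call gives
\[
\cO(\overline{\nu}^{\max}(G^*)\log^2 n) + \cO(\log n)\cdot\bigl(\cO(\overline{\nu}^{\max}(G^*)\log^2 n)+\cO(\overline{\nu}^{\max}(G^*)\log n)\bigr) = \cO(\overline{\nu}^{\max}(G^*)\log^3 n).
\]
I expect the main obstacle to be the structural/termination step: coupling \cref{lem:perform-partitioning-iterations} to the $1/2$-separator property so as to certify both the $\cO(\log n)$ round count and that isolating $u_H$ leaves only components of size $\leq|V(H)|/2$. Once \cref{thm:can-freely-orient-unoriented-consistently} licenses comparison against $\overline{\nu}^{\max}(G^*)$ in every subroutine call, the remaining cost accounting is routine.
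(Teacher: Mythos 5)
Your proposal is correct and follows essentially the same route as the paper's proof: it invokes \cref{lem:exactly-one-from-KH} to isolate the unique separator vertex $u_H$ in the large component, uses \cref{lem:perform-partitioning-iterations} together with the Line~9 restriction to halve the surviving neighborhood components and bound the while-loop at $\cO(\log n)$ rounds, appeals to the $1/2$-separator property to conclude all remaining components have size at most $|V(H)|/2$, and sums the per-round costs of \textsc{OrientInternalCliqueEdges} and \textsc{CutViaLP} (justified via \cref{thm:can-freely-orient-unoriented-consistently}) to get $\cO(\overline{\nu}^{\max}(G^*)\cdot\log^3 n)$. Your explicit justification that $V(L_H)\setminus\{u_H\}\subseteq V(H)\setminus V(K_H)$ is a slightly more detailed rendering of a step the paper states tersely, but the argument is the same.
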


Our search result (\cref{thm:search-nu-max-upper-bound}) immediately follows by combining \cref{lem:off-target-search-iterations} and \cref{lem:perform-partitioning-guarantees}.

\section{Experiments}
\label{sec:experiments}

While our main contributions are theoretical, we also implemented our algorithms and performed some experiments; see \cref{sec:appendix-experiments} for more details and plots.

Each experimental instance is defined on some \emph{hidden} ground truth DAG $G^*$ along with the off-target distributions $\mathcal{D}_i$s.
A search algorithm aims to recover $G^*$ with as few interventions as possible while only given the partially oriented essential graph of $G^*$ and the cutting probabilities derived from $\mathcal{D}_i$s as input.

\textbf{Algorithms.}
As our off-target intervention setting has not been studied before from an algorithmic perspective, there is no suitable prior work to compare against.
We adapted existing state-of-the-art adaptive \emph{on-target} intervention algorithms in a generic way: we solve \eqref{eq:VLP}, interpret the optimal vector $X$ as a probability distribution $p$ over the actions, then repeatedly sample from $p$ until the desired on-target intervention is completed before picking the next one.
We also show the optimal value of \eqref{eq:VLP}, along with performance of the naive \texttt{Random} and a \texttt{One-shot}\footnote{\texttt{One-shot} aims to simulate non-adaptive algorithms in the context of off-target interventions; see \cref{sec:appendix-experiments}.} baselines.

\textbf{Graph instances.}
We tested on synthetic \texttt{GNP\_TREE} graphs \cite{choo2023subset} of various sizes, and on some real-world graphs from \texttt{bnlearn} \cite{scutari2010learning}.
We associate a unit-cost action $A_v$ to each vertex $v \in V$ of the input graph.

\textbf{Interventional distributions.}
We designed 3 types of off-target interventions when taking action $A_v$.\\
(1) $r$-hop: Sample a uniform random vertex from $r$-hop neighborhood of $v$, including $v$ itself.\\
(2) Decaying: Sample a random vertex from $V$, with probability decreasing as we move from $v$.\\
(3) Fat finger: Intervene on $v$, and also possibly on some neighbors of $v$ at the same time.

\textbf{Qualitative conclusions.}
There is no prior baseline as we are the \emph{first} to study off-target interventions from an algorithmic standpoint.
Although empirical gains are not significant over some algorithms we adapted from other settings, we have theoretical guarantees for off-target interventions while they don't.
\texttt{Random} and \texttt{One-shot} fare poorly while our method is visibly better or at least competitive with the adapted on-target methods.
Regardless, note that our algorithm has provable guarantees even for non-uniform action costs and it is designed to handle worst-case off-target instances.

\section{Conclusion and discussion}
\label{sec:conclusion}

We studied causal graph discovery under off-target interventions.
Under our model, verification is equivalent to the well-studied problem of stochastic set cover and so we inherit existing approximation results from that literature.
For search, we argued that no algorithm can provide meaningful approximations to $\overline{\nu}(G^*)$, and we provided an algorithm with polyalgorithmic approximation guarantees against $\overline{\nu}^{\max}(G^*)$.

\textbf{Open directions.}
We assumed known cutting probabilities $c_i(e)$ and intervened sets $S$.
More generally, our theoretical guarantees relied on some standard causal inference assumptions in the literature\footnote{e.g.\ causal sufficiency, faithfulness, and correctly inferring conditional independences from data.} and we view our work as laying the theoretical foundations for studying off-target interventions.
For a wider applicability, it is of great interest to validate/weaken/remove these assumptions.
In particular, extending our results into the finite sample regime would be very exciting.

\subsubsection*{Acknowledgements}
This research/project is supported by the National Research Foundation, Singapore under its AI Singapore Programme (AISG Award No: AISG-PhD/2021-08-013).
KS and CU were partially supported by the Eric and Wendy Schmidt Center at the Broad Institute, NCCIH/NIH (1DP2AT012345), ONR (N00014-22-1-2116), the MIT-IBM Watson AI Lab, and a Simons Investigator Award (to CU). We would like to thank the reviewers for valuable feedback and discussions.

\bibliography{refs}
\bibliographystyle{alpha}

\newpage
\appendix

\section{Augmenting the preliminaries}

``Cutting an edge'' behaves differently from ``intervening on one of the endpoints of that edge''.
Consider a triangle $u \sim v \sim w \sim u$.
The non-atomic intervention $\{v,w\}$ cuts the edge $u \sim v$ but not the edge $v \sim w$ because $|\{u,v\} \cap \{v,w\}| = 1$ while $|\{v,w\} \cap \{v,w\}| = 2 \neq 1$.
So, one would expect the edge $\{v,w\}$ to remain unoriented unless Meek rules apply.
For example, see \cref{fig:cut-example}:
\begin{description}
    \item[Example 1] If the ground truth DAG was $u \to v \gets w \gets u$, then intervening on $\{v,w\}$ will only orient $u \to v$ and $u \to w$ while $w \to v$ remains unoriented.
    \item[Example 2] If the ground truth DAG was $u \gets v \to w \gets u$, then intervening on $\{v,w\}$ orients $u \gets v$ and $w \gets u$, and then Meek rule R2 orients $v \to w$ via $v \to u \to w \sim v$.
\end{description}

\begin{figure}[htb]
\centering
\resizebox{\linewidth}{!}{%
\begin{tikzpicture}
\node[draw, circle, minimum size=15pt, inner sep=2pt] at (0,0) (v) {$v$};
\node[draw, circle, minimum size=15pt, inner sep=2pt, right=of v] (w) {$w$};
\node[draw, circle, minimum size=15pt, inner sep=2pt] at ($(v)!0.5!(w) + (0,1)$) (u) {$u$};
\draw[thick] (u) -- (v);
\draw[thick] (u) -- (w);
\draw[thick] (v) -- (w);

\node[draw, circle, minimum size=15pt, inner sep=2pt] at (4,0) (v1) {$v$};
\node[draw, circle, minimum size=15pt, inner sep=2pt, right=of v1] (w1) {$w$};
\node[draw, circle, minimum size=15pt, inner sep=2pt] at ($(v1)!0.5!(w1) + (0,1)$) (u1) {$u$};
\draw[thick, -stealth] (u1) -- (v1);
\draw[thick, -stealth] (u1) -- (w1);
\draw[thick, stealth-] (v1) -- (w1);

\node[draw, circle, minimum size=15pt, inner sep=2pt] at (8,0) (v1i) {$v$};
\node[draw, circle, minimum size=15pt, inner sep=2pt, right=of v1i] (w1i) {$w$};
\node[draw, circle, minimum size=15pt, inner sep=2pt] at ($(v1i)!0.5!(w1i) + (0,1)$) (u1i) {$u$};
\draw[thick, -stealth] (u1i) -- (v1i);
\draw[thick, -stealth] (u1i) -- (w1i);
\draw[thick] (v1i) -- (w1i);

\node[draw, red, thick, rounded corners, fit=(v1)(w1)] {};
\node[draw, thick, rounded corners, inner sep=10pt, fit=(u1)(v1)(w1)(u1i)(v1i)(w1i)] (eg1) {};
\node[below=5pt of eg1] {Example 1: $u \to v \gets w \gets u$};

\node[draw, circle, minimum size=15pt, inner sep=2pt] at (12,0) (v2) {$v$};
\node[draw, circle, minimum size=15pt, inner sep=2pt, right=of v2] (w2) {$w$};
\node[draw, circle, minimum size=15pt, inner sep=2pt] at ($(v2)!0.5!(w2) + (0,1)$) (u2) {$u$};
\draw[thick, stealth-, red] (u2) -- (v2);
\draw[thick, -stealth, red] (u2) -- (w2);
\draw[thick, -stealth] (v2) -- (w2);

\node[draw, circle, minimum size=15pt, inner sep=2pt] at (16,0) (v2i) {$v$};
\node[draw, circle, minimum size=15pt, inner sep=2pt, right=of v2i] (w2i) {$w$};
\node[draw, circle, minimum size=15pt, inner sep=2pt] at ($(v2i)!0.5!(w2i) + (0,1)$) (u2i) {$u$};
\draw[thick, stealth-, red] (u2i) -- (v2i);
\draw[thick, -stealth, red] (u2i) -- (w2i);
\draw[thick, -stealth, blue] (v2i) -- node[below] {\tiny Meek R2} (w2i);

\node[draw, red, thick, rounded corners, fit=(v2)(w2)] {};
\node[draw, thick, rounded corners, inner sep=10pt, fit=(u2)(v2)(w2)(u2i)(v2i)(w2i)] (eg2) {};
\node[below=5pt of eg2] {Example 2: $u \gets v \to w \gets u$};
\end{tikzpicture}
}
\caption{Illustrating the difference between cutting an edge and intervening on one of the endpoints of that edge on the moral graph $u \sim v \sim w \sim u$. In both examples, the non-atomic intervention $\{v,w\}$ cuts the edges $u \sim v$ and $u \sim w$ but not $v \sim w$. In the first case, $v \sim w$ remains unoriented. In the second case, $v \sim w$ is oriented due to Meek rules.}
\label{fig:cut-example}
\end{figure}
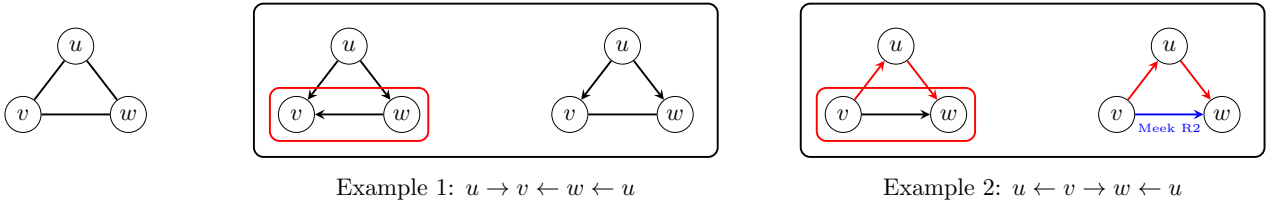

The following known results will be useful for our proofs later.

\begin{theorem}[Theorem 7 of \cite{choo2023subset}]
\label{thm:oriented-endpoints-in-different-chain-components}
If $G$ is a moral DAG and an arc $u \to v$ is oriented in an (interventional) essential graph of $G$, then $u$ and $v$ belong to different chain components.
\end{theorem}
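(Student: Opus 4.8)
The plan is to reduce the statement to the structural fact, recalled in \cref{sec:prelims}, that every (interventional) essential graph $\cE_{\cI}(G)$ is a chain graph: a partially directed graph containing no partially directed cycle, i.e.\ no cycle with at least one oriented arc in which all oriented arcs point in the same direction along a single traversal. The chain components are exactly the connected components of the undirected subgraph obtained by deleting every oriented arc, so two vertices lie in the same chain component precisely when they are joined by a path made up entirely of undirected edges. The theorem is therefore equivalent to the claim that an oriented arc cannot have its two endpoints connected through such an undirected path, and this is what I would establish.

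I would argue by contradiction. Suppose $u \to v$ is oriented in $\cE_{\cI}(G)$ but $u$ and $v$ lie in a common chain component $H$. Then there is an undirected path $u = w_0 \sim w_1 \sim \cdots \sim w_m = v$ inside $H$, which we may take to be simple, so that $w_0, \ldots, w_m$ are distinct. Since $u \to v$ is oriented, the edge between $u$ and $v$ is not undirected and hence is not one of the edges of this path; in particular $m \geq 2$, so the path together with the arc forms a genuine simple cycle.

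Appending the arc to the path yields the cycle $u \to v \sim w_{m-1} \sim \cdots \sim w_1 \sim u$. Traversing it in the direction $u \to v \to w_{m-1} \to \cdots \to w_1 \to u$, its unique oriented arc $u \to v$ points along the traversal while all remaining edges are undirected; this is precisely a partially directed cycle in which all oriented arcs (here, the single one) point the same way. This contradicts the chain-graph property of $\cE_{\cI}(G)$, so no such undirected path can exist and $u$ and $v$ must belong to different chain components.

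I expect the only real content to be locating and invoking the chain-graph structural fact for $\cI$-essential graphs; everything downstream is the short cycle construction above. The one point requiring care is checking that the constructed object meets the precise definition of a forbidden partially directed cycle (a single consistently oriented arc on an otherwise undirected simple cycle), which is immediate once the connecting undirected path is chosen to be simple so that no vertex repeats. The moral-DAG hypothesis enters only insofar as it situates us in the framework where $\cE_{\cI}(G)$ is guaranteed to be such a chain graph, and is not otherwise needed for the cycle argument.
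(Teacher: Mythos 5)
Your argument is correct. Note, however, that the paper itself offers no proof of this statement to compare against: it is imported verbatim as Theorem 7 of \cite{choo2023subset}, so any proof you give is necessarily your own rather than a variant of the paper's. What you propose is a clean, self-contained derivation from the chain-graph property of (interventional) essential graphs that the paper recalls in \cref{sec:prelims}: if $u \to v$ were oriented yet $u$ and $v$ shared a chain component, a simple undirected $u$--$v$ path (necessarily of length at least $2$, since the $u$--$v$ edge itself is directed) would close up with the arc $u \to v$ into a partially directed cycle whose single directed arc points along the traversal, contradicting the absence of such cycles. The bookkeeping is all in order --- the path can be taken simple, the resulting cycle is a genuine simple cycle of length at least $3$, and the forbidden-cycle definition is met. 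Your closing observation is also accurate: the morality hypothesis plays no essential role here, since every interventional essential graph is a chain graph; the hypothesis merely reflects the setting in which the cited Theorem 7 was originally stated.
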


Recall the problem of stochastic set cover with multiplicity, defined in \cref{sec:prelims}:
\stochasticsetcoverwithmultiplicity*

\cref{thm:theorems-of-GV06} states the known results of \cite{goemans2006stochastic} pertaining to \cref{prob:stochastic-set-cover-with-multiplicity}.

\begin{theorem}[Theorems 1 and 2 of \cite{goemans2006stochastic}]
\label{thm:theorems-of-GV06}
Any adaptive policy $\pi$ for solving \cref{prob:stochastic-set-cover-with-multiplicity} is at least the optimal value of \eqref{eq:LP}.
Meanwhile, there is a polynomial time non-adaptive policy for solving \cref{prob:stochastic-set-cover-with-multiplicity} such that it incurs a cost of at most $12 \ln d$ times the cost incurred by an optimal adaptive policy: solve \eqref{eq:VLP} with optimal values $x^*_1, \ldots, x^*_k$, pick $\cO(x^*_i \cdot \log |T|)$ copies of $S_i$ in expectation, and repeat this process a constant number of times in expectation to cover all elements (see \cref{alg:cut-via-LP}).
\end{theorem}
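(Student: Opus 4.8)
The plan is to prove the two halves separately: the LP lower bound (Theorem~1 of \cite{goemans2006stochastic}) and the rounding-based upper bound (Theorem~2). For the lower bound, fix any adaptive policy $\pi$ and let $x_i^\pi$ denote the expected number of times $\pi$ selects $S_i$. The goal is to show that $x^\pi$ is feasible for \eqref{eq:LP}, so that the expected cost $\sum_{i=1}^k w_i x_i^\pi$ of $\pi$ is automatically at least the LP optimum. Fix an element $j \in X$ and let $N_j$ count the total number of picks, over the entire (possibly random-length) run, that happen to cover $j$. Since $\pi$ terminates only once every element is covered, at least one pick must cover $j$, so $N_j \ge 1$ pointwise and hence $\E[N_j] \ge 1$. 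Conditioning on the history $\mathcal{F}_{t-1}$ before the $t$-th pick, both the decision to take a $t$-th pick and the chosen index $i_t$ are $\mathcal{F}_{t-1}$-measurable, while the fresh draw from $\cD_{i_t}$ covers $j$ with probability exactly $\mu_j(i_t)$, independently of $\mathcal{F}_{t-1}$. A tower-rule computation then yields $\E[N_j] = \E[\sum_t \mathbbm{1}[t \le T]\,\mu_j(i_t)] = \sum_{i=1}^k \mu_j(i)\,x_i^\pi$, where the interchange of sum and expectation is justified by monotone convergence. Combining this with $\E[N_j] \ge 1$ shows $\sum_{i=1}^k \mu_j(i)\, x_i^\pi \ge 1$ for every $j$, so $x^\pi$ is LP-feasible and the first claim follows.

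For the upper bound, I would solve \eqref{eq:VLP} (equivalently \eqref{eq:LP}) in polynomial time to obtain an optimal fractional solution $x^*$ and set $y_i = c\, x_i^* \ln d$ for a suitable constant $c$ (the constant $9$ in \cref{alg:cut-via-LP} suffices). One ``round'' picks each $S_i$ exactly $y_i$ times in expectation, realized honestly by the floor-plus-Bernoulli step in \cref{alg:cut-via-LP}; the expected cost of a round is $\sum_{i=1}^k w_i y_i = c \ln d \cdot \mathrm{OPT}_{LP}$. Using independence across picks, the probability that a fixed element $j$ survives a round is $\prod_{i=1}^k (1-\mu_j(i))^{y_i} \le \exp(-\sum_{i=1}^k \mu_j(i)\, y_i) \le \exp(-c \ln d) = d^{-c}$, where the last inequality uses the LP-feasibility bound $\sum_i \mu_j(i)\, x_i^* \ge 1$. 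A union bound over the $d$ elements makes the round-failure probability at most $d^{1-c} \le 1/2$ once $c \ge 2$, so each round covers everything with probability at least $1/2$. The number of rounds is therefore stochastically dominated by a geometric variable with constant mean, giving total expected cost $\cO(\ln d) \cdot \mathrm{OPT}_{LP}$; tracking the constants yields the stated $12 \ln d$ factor relative to the optimal adaptive cost, since $\mathrm{OPT}_{LP}$ lower-bounds that cost by the first part.

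The main obstacle I anticipate is making the lower-bound identity $\E[N_j] = \sum_i \mu_j(i)\, x_i^\pi$ fully rigorous: the policy is adaptive and its horizon $T$ is a random stopping time, so I must set up the filtration carefully, confirm the measurability of each pick decision, and invoke an optional-stopping / Wald-type argument together with monotone convergence to handle the unbounded and history-correlated run length. (If $\pi$ fails to terminate with positive probability its expected cost is infinite and the bound is trivial, so one may assume $x^\pi$ finite.) The upper-bound direction is comparatively routine --- the only subtleties are realizing the fractional targets $y_i$ without cheating and bounding the expected number of repeated rounds, both of which are already handled by \cref{alg:cut-via-LP}.
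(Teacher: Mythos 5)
The paper does not actually prove this statement: it is imported verbatim as Theorems~1 and~2 of \cite{goemans2006stochastic} and used as a black box (via \textsc{CutViaLP} and \cref{lem:cut-via-lp-cost}), so there is no in-paper proof to compare against. Your reconstruction is, to my reading, a correct rendering of the standard Goemans--Vondr\'ak argument. The lower bound via the Wald-type identity $\E[N_j]=\sum_i \mu_j(i)\,x_i^\pi$ is the right mechanism: the decision to make a $t$-th pick and the index $i_t$ are $\mathcal{F}_{t-1}$-measurable, the fresh draw covers $j$ with conditional probability $\mu_j(i_t)$, and nonnegativity lets Tonelli handle the random horizon; together with $N_j\ge 1$ on termination this gives LP-feasibility of $x^\pi$ and hence the bound. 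The upper bound via $y_i = c\,x_i^*\ln d$, the survival estimate $\exp(-\sum_i \mu_j(i)y_i)\le d^{-c}$, a union bound, and geometric repetition of rounds is likewise the argument behind \cref{alg:cut-via-LP}. Two small points you should tighten if you want this to stand as a full proof rather than a sketch: (i) the survival probability of element $j$ in one round is $\E\bigl[\prod_i (1-\mu_j(i))^{n_i}\bigr]$ with $n_i$ the random floor-plus-Bernoulli count, and you need the one-line computation showing $\E[(1-\mu_j(i))^{n_i}]\le e^{-\mu_j(i)y_i}$ before the product bound you wrote applies; (ii) you assert but do not verify the constant $12$, and the non-termination caveat for the lower bound implicitly assumes strictly positive weights (otherwise infinite run length need not mean infinite cost). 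Neither affects the asymptotic claim the paper actually relies on.
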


Note that obtaining an approximation ratio within $(1 - \eps) \cdot \ln n$ is NP-hard\footnote{See the proof of \cref{thm:verification-via-reduction}.} for any $\eps > 0$, so multiplicative overhead of $12 \ln d$ is asymptotically optimal.

\section{Unknown off-target distributions \texorpdfstring{$\cD_i$s}{}}
\label{sec:appendix-unknown-Di}

Here, we discuss difficulties in designing algorithms with non-trivial theoretical guarantees when $\cD_i$s are unknown.
These difficulties persist even if we know the covered edges.

Consider the following toy example where the essential graph is a tree.
That is, the underlying causal DAG is some rooted tree with an unknown root node $v^*$ and the covered edges are the edges incident to $v^*$.
So, to fully orient the graph, we need to cut all the edges incident to $v^*$.
Regardless of the number of actions $k$, there could be only one action $A_{i^*}$ with corresponding distribution $\cD_{i^*}$ with non-zero probability of cutting the covered edges; all other $A_i$s (for $i \not = i^*$) will never cut any covered edge.
Clearly, the optimal algorithm should only repeatedly take action $A_{i^*}$ until all the covered edges are cut.
Meanwhile, if we do not know the $\cD_i$s, then there is no hope for designing algorithms with non-trivial theoretical guarantees, even against $\overline{\nu}^{\max}(G^*)$:
an algorithm that knows the $\cD_i$s only picks a single action $A_{i^*}$ while any other algorithm would need to try all actions.

Note that in our lower bound result and construction (\cref{thm:hardness}), the $\cD_i$s are known.
Yet, despite this, no algorithm can have a competitive ratio better than $\Omega(n)$ against $\overline{\nu}(G^*)$; but competitiveness against $\overline{\nu}^{\max}(G^*)$ is possible.

\section{Meek rules}
\label{sec:appendix-meek-rules}

Meek rules are a set of 4 edge orientation rules that are sound and complete with respect to any given set of arcs that has a consistent DAG extension \cite{meek1995}.\footnote{This section of well-known facts is adapted from the appendices of \cite{choo2022verification,choo2023subset}.}
Given any edge orientation information, one can always repeatedly apply Meek rules till a fixed point to maximize the number of oriented arcs.

\begin{definition}[Consistent extension]
A set of arcs is said to have a \emph{consistent DAG extension} $\pi$ for a graph $G$ if there exists a permutation on the vertices such that (i) every edge $\{u,v\}$ in $G$ is oriented $u \to v$ whenever $\pi(u) < \pi(v)$, (ii) there is no directed cycle, (iii) all the given arcs are present.
\end{definition}

\begin{definition}[The four Meek rules \cite{meek1995}, see \cref{fig:meek-rules} for an illustration]
\hspace{0pt}
\begin{description}
    \item [R1] Edge $\{a,b\} \in E \setminus A$ is oriented as $a \to b$ if $\exists$ $c \in V$ such that $c \to a$ and $c \not\sim b$.
    \item [R2] Edge $\{a,b\} \in E \setminus A$ is oriented as $a \to b$ if $\exists$ $c \in V$ such that $a \to c \to b$.
    \item [R3] Edge $\{a,b\} \in E \setminus A$ is oriented as $a \to b$ if $\exists$ $c,d \in V$ such that $d \sim a \sim c$, $d \to b \gets c$, and $c \not\sim d$.
    \item [R4] Edge $\{a,b\} \in E \setminus A$ is oriented as $a \to b$ if $\exists$ $c,d \in V$ such that $d \sim a \sim c$, $d \to c \to b$, and $b \not\sim d$.
\end{description}
\end{definition}

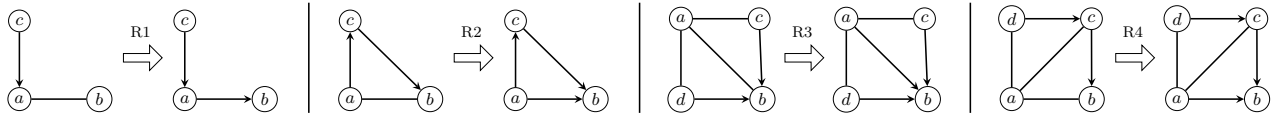
\begin{figure}[htbp]
\centering
\resizebox{\linewidth}{!}{%
\begin{tikzpicture}
%
% R1
%
\node[draw, circle, inner sep=2pt] at (0,0) (R1a-before) {\small $a$};
\node[draw, circle, inner sep=2pt, right=of R1a-before] (R1b-before) {\small $b$};
\node[draw, circle, inner sep=2pt, above=of R1a-before](R1c-before) {\small $c$};
\draw[thick, -stealth] (R1c-before) -- (R1a-before);
\draw[thick] (R1a-before) -- (R1b-before);

\node[draw, circle, inner sep=2pt] at (3,0) (R1a-after) {\small $a$};
\node[draw, circle, inner sep=2pt, right=of R1a-after] (R1b-after) {\small $b$};
\node[draw, circle, inner sep=2pt, above=of R1a-after](R1c-after) {\small $c$};
\draw[thick, -stealth] (R1c-after) -- (R1a-after);
\draw[thick, -stealth] (R1a-after) -- (R1b-after);

\node[single arrow, draw, minimum height=2em, single arrow head extend=1ex, inner sep=2pt] at (2.2,0.75) (R1arrow) {};
\node[above=5pt of R1arrow] {\footnotesize R1};

%
% R2
%
\node[draw, circle, inner sep=2pt] at (6,0) (R2a-before) {\small $a$};
\node[draw, circle, inner sep=2pt, right=of R2a-before] (R2b-before) {\small $b$};
\node[draw, circle, inner sep=2pt, above=of R2a-before](R2c-before) {\small $c$};
\draw[thick, -stealth] (R2a-before) -- (R2c-before);
\draw[thick, -stealth] (R2c-before) -- (R2b-before);
\draw[thick] (R2a-before) -- (R2b-before);

\node[draw, circle, inner sep=2pt] at (9,0) (R2a-after) {\small $a$};
\node[draw, circle, inner sep=2pt, right=of R2a-after] (R2b-after) {\small $b$};
\node[draw, circle, inner sep=2pt, above=of R2a-after](R2c-after) {\small $c$};
\draw[thick, -stealth] (R2a-after) -- (R2c-after);
\draw[thick, -stealth] (R2c-after) -- (R2b-after);
\draw[thick, -stealth] (R2a-after) -- (R2b-after);

\node[single arrow, draw, minimum height=2em, single arrow head extend=1ex, inner sep=2pt] at (8.2,0.75) (R2arrow) {};
\node[above=5pt of R2arrow] {\footnotesize R2};

%
% R3
%
\node[draw, circle, inner sep=2pt] at (12,0) (R3d-before) {\small $d$};
\node[draw, circle, inner sep=2pt, above=of R3d-before](R3a-before) {\small $a$};
\node[draw, circle, inner sep=2pt, right=of R3a-before] (R3c-before) {\small $c$};
\node[draw, circle, inner sep=2pt, right=of R3d-before](R3b-before) {\small $b$};
\draw[thick, -stealth] (R3c-before) -- (R3b-before);
\draw[thick, -stealth] (R3d-before) -- (R3b-before);
\draw[thick] (R3c-before) -- (R3a-before) -- (R3d-before);
\draw[thick] (R3a-before) -- (R3b-before);

\node[draw, circle, inner sep=2pt] at (15,0) (R3d-after) {\small $d$};
\node[draw, circle, inner sep=2pt, above=of R3d-after](R3a-after) {\small $a$};
\node[draw, circle, inner sep=2pt, right=of R3a-after] (R3c-after) {\small $c$};
\node[draw, circle, inner sep=2pt, right=of R3d-after](R3b-after) {\small $b$};
\draw[thick, -stealth] (R3c-after) -- (R3b-after);
\draw[thick, -stealth] (R3d-after) -- (R3b-after);
\draw[thick] (R3c-after) -- (R3a-after) -- (R3d-after);
\draw[thick, -stealth] (R3a-after) -- (R3b-after);

\node[single arrow, draw, minimum height=2em, single arrow head extend=1ex, inner sep=2pt] at (14.2,0.75) (R3arrow) {};
\node[above=5pt of R3arrow] {\footnotesize R3};

%
% R4
%
\node[draw, circle, inner sep=2pt] at (18,0) (R4a-before) {\small $a$};
\node[draw, circle, inner sep=2pt, above=of R4a-before](R4d-before) {\small $d$};
\node[draw, circle, inner sep=2pt, right=of R4d-before] (R4c-before) {\small $c$};
\node[draw, circle, inner sep=2pt, right=of R4a-before](R4b-before) {\small $b$};
\draw[thick, -stealth] (R4d-before) -- (R4c-before);
\draw[thick, -stealth] (R4c-before) -- (R4b-before);
\draw[thick] (R4d-before) -- (R4a-before) -- (R4c-before);
\draw[thick] (R4a-before) -- (R4b-before);

\node[draw, circle, inner sep=2pt] at (21,0) (R4a-after) {\small $a$};
\node[draw, circle, inner sep=2pt, above=of R4a-after](R4d-after) {\small $d$};
\node[draw, circle, inner sep=2pt, right=of R4d-after] (R4c-after) {\small $c$};
\node[draw, circle, inner sep=2pt, right=of R4a-after](R4b-after) {\small $b$};
\draw[thick, -stealth] (R4d-after) -- (R4c-after);
\draw[thick, -stealth] (R4c-after) -- (R4b-after);
\draw[thick] (R4d-after) -- (R4a-after) -- (R4c-after);
\draw[thick, -stealth] (R4a-after) -- (R4b-after);

\node[single arrow, draw, minimum height=2em, single arrow head extend=1ex, inner sep=2pt] at (20.2,0.75) (R4arrow) {};
\node[above=5pt of R4arrow] {\footnotesize R4};

% Separating lines
% \draw[thick] (0,-1) -- (10.5,-1);
\draw[thick] (5.25,1.75) -- (5.25,-0.25);
\draw[thick] (11.25,1.75) -- (11.25,-0.25);
\draw[thick] (17.25,1.75) -- (17.25,-0.25);
\end{tikzpicture}
}
\caption{An illustration of the four Meek rules}
\label{fig:meek-rules}
\end{figure}

There exists an algorithm (Algorithm 2 of \cite{wienobst2021extendability}) that runs in $\cO(d \cdot |E|)$ time and computes the closure under Meek rules, where $d$ is the degeneracy of the graph skeleton\footnote{A $d$-degenerate graph is an undirected graph in which every subgraph has a vertex of degree at most $d$. Note that the degeneracy of a graph is typically smaller than the maximum degree of the graph.}.

\section{Deferred proofs}
\label{sec:appendix-proofs}

It would be helpful to keep the definition of
\[
\overline{\nu}^{\max}(G^*) = \max_{G \in [G^*]} \overline{\nu}(G)
\]
and \cref{thm:verification-via-reduction} (which we restate below for convenience) in mind for our proofs below.

\verificationviareduction*

The following lemma bounds the cost incurred by \textsc{CutViaLP} to $\overline{\nu}^{\max}(G^*)$ using \cref{thm:theorems-of-GV06} when we invoke \textsc{CutViaLP} on a subset of covered edges of \emph{some} DAG $G' \in [G^*]$.

\begin{lemma}
\label{lem:cut-via-lp-cost}
If we invoke \textsc{CutViaLP} on a subset of covered edges $T$ of \emph{some} DAG $G' \in [G^*]$, then \textsc{CutViaLP} incurs a cost of $\cO(\overline{\nu}(G') \cdot \log |T|) \subseteq \cO(\overline{\nu}^{\max}(G^*) \cdot \log n)$ in expectation.
\end{lemma}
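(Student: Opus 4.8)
The plan is to chain the performance guarantee of \textsc{CutViaLP} (which comes from \cref{thm:theorems-of-GV06}) with a monotonicity argument relating the linear program \eqref{eq:VLP} restricted to $T$ to the full verification cost $\overline{\nu}(G')$. Throughout, I write $\mathrm{OPT}_{\mathrm{LP}}(T)$ for the optimal value of \eqref{eq:VLP} instantiated with target edge set $T$ and cutting probabilities $\{c_i(e)\}$.

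First I would express the cost of \textsc{CutViaLP} purely in terms of the LP value. By construction (\cref{alg:cut-via-LP}), the algorithm solves \eqref{eq:VLP} to obtain $x_1^*, \ldots, x_k^*$ and sets $y_i = 9 x_i^* \ln |T|$; one pass of the inner loop takes action $A_i$ an expected $y_i$ times (since $\lfloor y_i \rfloor + (y_i - \lfloor y_i \rfloor) = y_i$), for an expected cost of $\sum_{i=1}^k w_i y_i = 9 \ln|T| \cdot \mathrm{OPT}_{\mathrm{LP}}(T)$ per pass. By the upper-bound half of \cref{thm:theorems-of-GV06} (Theorem 2 of \cite{goemans2006stochastic}), the while-loop repeats only $\cO(1)$ times in expectation before every edge of $T$ is cut, so the total expected cost of \textsc{CutViaLP} on $T$ is $\cO(\mathrm{OPT}_{\mathrm{LP}}(T) \cdot \log |T|)$.

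Next I would bound $\mathrm{OPT}_{\mathrm{LP}}(T)$ by $\overline{\nu}(G')$. Because $T \subseteq C(G')$, the LP for $T$ is obtained from the LP for the full covered-edge set $C(G')$ by dropping constraints, hence $\mathrm{OPT}_{\mathrm{LP}}(T) \le \mathrm{OPT}_{\mathrm{LP}}(C(G'))$. Invoking \cref{lem:verification-to-stochastic}, cutting all of $C(G')$ is exactly the stochastic set cover instance whose optimal adaptive policy cost is $\overline{\nu}(G')$; the lower-bound half of \cref{thm:theorems-of-GV06} (Theorem 1 of \cite{goemans2006stochastic}) then gives $\mathrm{OPT}_{\mathrm{LP}}(C(G')) \le \overline{\nu}(G')$. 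Combining, $\mathrm{OPT}_{\mathrm{LP}}(T) \le \overline{\nu}(G')$, so \textsc{CutViaLP} incurs expected cost $\cO(\overline{\nu}(G') \cdot \log |T|)$.

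Finally I would pass to the worst-case benchmark: since $T \subseteq E$ we have $|T| \le \binom{n}{2}$ and thus $\log |T| = \cO(\log n)$, while $G' \in [G^*]$ gives $\overline{\nu}(G') \le \max_{G \in [G^*]} \overline{\nu}(G) = \overline{\nu}^{\max}(G^*)$, yielding the claimed $\cO(\overline{\nu}^{\max}(G^*) \cdot \log n)$ bound. I expect the main obstacle, and the one place where the hypothesis that $T$ is a subset of covered edges of \emph{some} $G' \in [G^*]$ is genuinely used, to be the monotonicity step $\mathrm{OPT}_{\mathrm{LP}}(T) \le \mathrm{OPT}_{\mathrm{LP}}(C(G')) \le \overline{\nu}(G')$: without embedding $T$ inside an honest covered-edge set of a DAG in the equivalence class, there is no handle to identify the LP value with any verification cost.
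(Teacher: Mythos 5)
Your proposal is correct and follows essentially the same route as the paper's proof: apply Theorems 1 and 2 of \cite{goemans2006stochastic} (packaged as \cref{thm:theorems-of-GV06}) to bound the cost by $\cO(\overline{\nu}(G')\cdot\log|T|)$ and then pass to $\overline{\nu}^{\max}(G^*)$ via the definition of the max benchmark. The only difference is that you make explicit the intermediate step $\mathrm{OPT}_{\mathrm{LP}}(T)\le\mathrm{OPT}_{\mathrm{LP}}(C(G'))\le\overline{\nu}(G')$ (LP constraint monotonicity plus the lower-bound half of \cref{thm:theorems-of-GV06}), which the paper leaves implicit.
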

\begin{proof}
As $T$ is a subset of covered edges $C(G')$ of \emph{some} DAG $G' \in [G^*]$, \cref{thm:theorems-of-GV06} tells us that \textsc{CutViaLP} incurs a cost of $\cO(\overline{\nu}(G') \cdot \log |T|)$ in expectation.
The claim follows as $|T| \leq n$ and $\overline{\nu}^{\max}(G^*) = \max_{G \in [G^*]} \overline{\nu}(G)$.
\end{proof}

\hardness*
\begin{proof}
Consider the star graph on $n$ vertices $v_1, v_2, \ldots, v_n$ with $v_n$ as the center and $v_1, \ldots, v_{n-1}$ as leaves (\cref{fig:hardness}).
Such an essential graph is a tree and corresponds to $n$ possible DAGs, with one of the vertices as a ``hidden root'' node.
Suppose there are $n$ unit-weight actions $A_1, \ldots, A_n$ where $\cD_i$ each deterministically picks the leaf $v_i$ (for $1 \leq i \leq n-1$) and $\cD_n$ picks a random leaf uniformly at random.
That is, no action will ever intervene on the center vertex $v_n$.

\begin{figure}[htb]
\centering
\begin{tikzpicture}
    \node[draw, circle, minimum size=15pt, inner sep=2pt] at (0,0) (v0) {$v_n$};
    \node[draw, circle, minimum size=15pt, inner sep=2pt, left=of v0] (v1) {};
    \node[draw, circle, minimum size=15pt, inner sep=2pt, above left=of v0] (v2) {};
    \node[draw, circle, minimum size=15pt, inner sep=2pt, above=of v0] (v3) {};
    \node[draw, circle, minimum size=15pt, inner sep=2pt, above right=of v0] (v4) {};
    \node[draw, circle, minimum size=15pt, inner sep=2pt, right=of v0] (v5) {};
    \node[draw, circle, minimum size=15pt, inner sep=2pt, below right=of v0] (v6) {};
    \node[draw, circle, minimum size=15pt, inner sep=2pt, below=of v0] (v7) {};
    \node[draw, circle, minimum size=15pt, inner sep=2pt, below left=of v0] (v8) {};

    \draw[thick] (v0) -- (v1);
    \draw[thick] (v0) -- (v2);
    \draw[thick] (v0) -- (v3);
    \draw[thick] (v0) -- (v4);
    \draw[thick] (v0) -- (v5);
    \draw[thick] (v0) -- (v6);
    \draw[thick] (v0) -- (v7);
    \draw[thick] (v0) -- (v8);
\end{tikzpicture}
\caption{A star graph with $v_n$ as the center.}
\label{fig:hardness}
\end{figure}
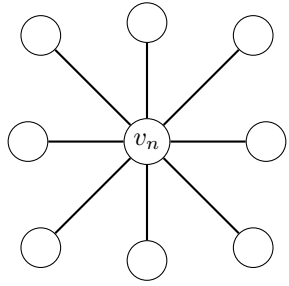

Orienting $G^*$ from the essential graph is exactly finding the hidden root leaf node, which corresponds to the problem of searching for a specific number in an unsorted array with $n-1$ numbers.
It is well-known that \emph{any} algorithm (even randomized ones) incurs $\Omega(n)$ array probes for the problem of searching in an unsorted array (e.g.\ see Theorem 15.1 of \cite{shaffer2013data}).
Since our setting restricts the set of actions (pick each index deterministically or choosing a random index uniformly at random) on the same problem, it also has a lower bound of $\Omega(n)$.
Note that the lower bound of $\Omega(n)$ holds both in the worst-case and in expectation.

Meanwhile, recall that the actions $A_i$ are deterministically picks the leaf $v_i$, for $1 \leq i \leq n-1$.
So, $\nu(G^*) = 1$ by taking action $A_{i^*}$ to intervene on $v_{i^*}$, where $i^* \in [n-1]$ is the index of the hidden root leaf node.

Therefore, any algorithm pays $\Omega(n \cdot \nu(G^*))$ to fully recover $G^*$ (both in the worst case and in expectation).
\end{proof}

\offtargetsearchiterations*
\begin{proof}
When a chain component has size 1, it means that incident edges to that singleton vertex are oriented.
Since the size of the chain components decreases by a factor of two in each phase, $\cO(\log n)$ iterations suffices.
\end{proof}

\orientinternalcliqueedgesproperties*
\begin{proof}
We first argue that \textsc{OrientInternalCliqueEdges} runs for $\cO(\log |T|) \subseteq \cO(\log n)$ while-loop iterations then argue that it incurs a cost of $\cO(\overline{\nu}^{\max}(G^*) \cdot \log n)$ in expectation per iteration.

\paragraph{Upper bounding the number of iterations}

Consider an arbitrary while-loop iteration.
Suppose the largest connected component (induced by edges of $T$) is of size $k$, where $1 \leq k \leq \sqrt{|T|}$.
When $k = 1$, all internal clique edges are oriented.
Suppose the vertices in this component are $v_1, \ldots, v_k$.

Fix an arbitrary ordering $\sigma$.
Without loss of generality, by relabelling, suppose that $\sigma(v_1) < \ldots < \sigma(v_k)$.
Under this ordering, we see that $v_1 \sim v_2, \ldots, v_{k-1} \sim v_k$ are covered edges which will be cut when we invoke \textsc{CutViaLP}.
By \cref{thm:oriented-endpoints-in-different-chain-components}, we are guaranteed that no two adjacent vertices (with respect to $\sigma$) will be in the same chain components after invoking \textsc{CutViaLP}.
To be precise, for $1 \leq i \leq k-1$, there exists an intervention where exactly one of $v_i$ and $v_{i+1}$ will be intervened upon, so the edge $v_i \sim v_{i+1}$ becomes oriented, thus $v_i$ and $v_{i+1}$ be in different chain components.

Note that any chain component of size strictly larger than $\lceil k/2 \rceil + 1$ within this size $k$ chain component includes two adjacent vertices (with respect to $\sigma$ index ordering).
This \emph{cannot} happen by the above argument, so we can conclude that the size of the largest connected component drops by a constant factor in each while-loop iteration, and thus \textsc{OrientInternalCliqueEdges} runs for $\cO(\log |T|)$ while-loop iterations.

Crucially, the resulting chain components are again a collection of disjoint cliques, so the above argument can be repeated recursively.
It is again a collection of disjoint cliques because edges exist between any pair of vertices (since $T$ was originally a set of edges that induce a collection of disjoint cliques) and oriented edges cannot have endpoints in the same chain component due to \cref{thm:oriented-endpoints-in-different-chain-components}.

\paragraph{Upper bounding the cost per iteration}

Fix an arbitrary while-loop iteration.
By \cref{thm:can-freely-orient-unoriented-consistently}, the set of covered edges $T'$ induced by the arbitrarily chosen $\sigma$ is a subset of covered edges of \emph{some} $G' \in [G^*]$.
By \cref{lem:cut-via-lp-cost}, \textsc{CutViaLP} incurs a cost of $\cO(\overline{\nu}^{\max}(G^*) \cdot \log n)$ in expectation.
\end{proof}

\exactlyonefromKH*
\begin{proof}

We will first argue that $L_H$ contains at least one vertex from $K_H$ and then argue that $L_H$ contains at most one vertex from $K_H$.

\paragraph{Contains at least one}

Since $K_H$ was a 1/2-clique separator of $H$, if $L_H$ did not contain any vertices from $K_H$, then we must have $|L_H| \leq |V(H)|/2$.
So, $L_H$ includes at least one vertex from $K_H$.

\paragraph{Contains at most one}

Suppose, for a contradiction, that $L_H$ includes more than one vertex from the clique $K_H$, say $u$ and $v$.
After invoking \textsc{OrientInternalCliqueEdges}, the edge $u \sim v$ should be oriented.
So, according \cref{thm:oriented-endpoints-in-different-chain-components}, $u$ and $v$ should not be in the same chain component.
This is a contradiction to the assumption that the chain component $L_H$ includes both $u$ and $v$.
\end{proof}

\performpartitioningiterations*
\begin{proof}
We first argue that \textsc{PerformPartitioning} will cut $u_H \sim z_{H'}$ either in Line 6 or 8.
Suppose Line 6 did not cut $u_H \sim z_{H'}$.
Then, under $\sigma'$, the edge $u_H \sim z_{H'}$ will be an unoriented covered edge and will be cut in Line 8.

Now suppose $u_H$ is still connected to some chain component $C \subseteq H'$ after Line 8.
Upon cutting $u_H \sim z_{H'}$, the edge becomes oriented and so the vertices $u_H$ and $z_{H'}$ will belong in different chain components thereafter (\cref{thm:oriented-endpoints-in-different-chain-components}), so $z_{H'} \not\in V(C)$.
After Line 6, \textsc{OrientInternalCliqueEdges} ensures that internal edges of $Z_{H'}$ are oriented.
Now, since $V(C) \setminus \{u_H\} \subseteq V(H')$ and $Z_{H'}$ was a 1/2-clique separator for $H'$, we see that $|V(C)| \leq |V(H')|/2$.
\end{proof}

Note that the \emph{number} of chain components \emph{within} $L_H$ may increase during the while-loop but we will only concern ourselves with the chain component that is still connected to $u_H$.
In other words, $L_H$ may break into multiple chain components, but at most one will contain $u_H$.
See \cref{fig:toy-example2} for an example illustration.

\begin{figure}[htb]
\centering
\begin{subfigure}[t]{0.1925\linewidth}
    \resizebox{\linewidth}{!}{%
\begin{tikzpicture}
\node[draw, circle, minimum size=15pt, inner sep=2pt] at (0,0) (d) {$d$};
\node[draw, circle, minimum size=15pt, inner sep=2pt, right=of d] (e) {$e$};
\node[draw, circle, minimum size=15pt, inner sep=2pt, right=of e] (f) {$f$};
\node[draw, circle, minimum size=15pt, inner sep=2pt, above=of d] (a) {$a$};
\node[draw, circle, minimum size=15pt, inner sep=2pt, right=of a] (b) {$b$};
\node[draw, circle, minimum size=15pt, inner sep=2pt, right=of b] (c) {$c$};
\node[draw, circle, minimum size=15pt, inner sep=2pt, below=of d] (g) {$g$};
\node[draw, circle, minimum size=15pt, inner sep=2pt, right=of g] (h) {$h$};
\node[draw, circle, minimum size=15pt, inner sep=2pt, right=of h] (i) {$i$};

% To make the diagrams align...
\node[draw, thick, rounded corners, fit=(d)(g), white] (KH) 
{};

\draw[thick, -{Stealth[scale=1.5]}] (a) -- (b);
\draw[thick, -{Stealth[scale=1.5]}, dashed] (a) to[in=135,out=45] (c);
\draw[thick, -{Stealth[scale=1.5]}] (a) -- (d);
\draw[thick, -{Stealth[scale=1.5]}] (b) -- (d);
\draw[thick, -{Stealth[scale=1.5]}, dashed] (c) -- (b);
\draw[thick, -{Stealth[scale=1.5]}, dashed] (c) -- (d);
\draw[thick, -{Stealth[scale=1.5]}] (d) -- (e);
\draw[thick, -{Stealth[scale=1.5]}] (d) -- (g);
\draw[thick, -{Stealth[scale=1.5]}] (d) -- (h);
\draw[thick, -{Stealth[scale=1.5]}] (d) -- (i);
\draw[thick, -{Stealth[scale=1.5]}] (e) -- (f);
\draw[thick, -{Stealth[scale=1.5]}, dashed] (e) -- (h);
\draw[thick, -{Stealth[scale=1.5]}] (h) -- (i);
\end{tikzpicture}
}
    \vspace{-17pt}
    \caption{Covered edges $C(G^*)$ are dashed and $\mathcal{E}(G^*) = \textrm{skel}(G^*)$.}
    \label{fig:appendix-G-star}
\end{subfigure}
\quad
\begin{subfigure}[t]{0.24\linewidth}
    \resizebox{\linewidth}{!}{%
\begin{tikzpicture}
\node[draw, circle, minimum size=15pt, inner sep=2pt] at (0,0) (d) {$d$};
\node[draw, circle, minimum size=15pt, inner sep=2pt, right=of d] (e) {$e$};
\node[draw, circle, minimum size=15pt, inner sep=2pt, right=of e] (f) {$f$};
\node[draw, circle, minimum size=15pt, inner sep=2pt, above=of d] (a) {$a$};
\node[draw, circle, minimum size=15pt, inner sep=2pt, right=of a] (b) {$b$};
\node[draw, circle, minimum size=15pt, inner sep=2pt, right=of b] (c) {$c$};
\node[draw, circle, minimum size=15pt, inner sep=2pt, below=of d] (g) {$g$};
\node[draw, circle, minimum size=15pt, inner sep=2pt, right=of g] (h) {$h$};
\node[draw, circle, minimum size=15pt, inner sep=2pt, right=of h] (i) {$i$};

\node[draw, thick, rounded corners, fit=(d)(g), white] (KH) {};
\node[red, left=5pt of KH] {$L_H$};

\draw[thick] (a) -- (b);
\draw[thick] (a) to[in=135,out=45] (c);
\draw[thick] (a) -- (d);
\draw[thick] (b) -- (d);
\draw[thick] (c) -- (b);
\draw[thick] (c) -- (d);
\draw[thick] (d) -- (e);
% \draw[thick, dotted] (d) -- (g);
\draw[thick] (d) -- (h);
\draw[thick] (d) -- (i);
\draw[thick] (e) -- (f);
\draw[thick] (e) -- (h);
\draw[thick] (h) -- (i);

\node[green!50!black, above=5pt of c] (H1-prime) {$H'_1$};
\node[draw, thick, rounded corners, fit={($(a.west) + (0,-0.5)$)(b)(c)(H1-prime)}, green!50!black] {};
\node[orange] at ($(i) + (-0.25,0.5)$) (H2-prime) {$H'_2$};
\draw[thick, rounded corners, orange] ($(e.north) + (-0.5,0.25)$) -- ($(h.south) + (-0.5,-0.25)$) -- ($(i) + (0.5,-0.5)$) -- ($(i) + (0.5,0.25)$) -- ($(e.north) + (0.5,0.25)$) -- ($(e.north) + (-0.5,0.25)$);

\node[draw, thick, rounded corners, fit=(b)(c), purple] {};
\node[draw, thick, rounded corners, fit=(e)(h), purple] {};
\end{tikzpicture}
}
    \vspace{-19pt}
    \caption{If $\{g\}$ was intervened upon, the resulting chain component $L_H$ is ``large''.}
    \label{fig:appendix-off-target-intervention}
\end{subfigure}
\quad
\begin{subfigure}[t]{0.24\linewidth}
    \resizebox{\linewidth}{!}{%
\begin{tikzpicture}
\node[draw, circle, minimum size=15pt, inner sep=2pt] at (0,0) (d) {$d$};
\node[draw, circle, minimum size=15pt, inner sep=2pt, right=of d] (e) {$e$};
\node[draw, circle, minimum size=15pt, inner sep=2pt, right=of e] (f) {$f$};
\node[draw, circle, minimum size=15pt, inner sep=2pt, above=of d] (a) {$a$};
\node[draw, circle, minimum size=15pt, inner sep=2pt, right=of a] (b) {$b$};
\node[draw, circle, minimum size=15pt, inner sep=2pt, right=of b] (c) {$c$};
\node[draw, circle, minimum size=15pt, inner sep=2pt, below=of d] (g) {$g$};
\node[draw, circle, minimum size=15pt, inner sep=2pt, right=of g] (h) {$h$};
\node[draw, circle, minimum size=15pt, inner sep=2pt, right=of h] (i) {$i$};

\node[draw, thick, rounded corners, fit=(d)(g), white] (KH) {};
\node[white, left=5pt of KH] {$L_H$};

\draw[thick, -{Stealth[scale=1.5]}, red] (a) -- (b);
\draw[thick] (a) to[in=135,out=45] (c);
\draw[thick, -{Stealth[scale=1.5]}, red] (a) -- (d);
\draw[thick] (b) -- (d);
\draw[thick, -{Stealth[scale=1.5]}, red] (c) -- (b);
\draw[thick, -{Stealth[scale=1.5]}, red] (c) -- (d);
\draw[thick, -{Stealth[scale=1.5]}, blue] (d) -- (e);
% \draw[thick, dotted] (d) -- (g);
\draw[thick, -{Stealth[scale=1.5]}, blue] (d) -- (h);
\draw[thick, -{Stealth[scale=1.5]}, blue] (d) -- (i);
\draw[thick, -{Stealth[scale=1.5]}, blue] (e) -- (f);
\draw[thick] (e) -- (h);
\draw[thick] (h) -- (i);

\node[green!50!black, above=5pt of c] (H1-prime) {$H'_1$};
\node[draw, thick, rounded corners, fit={($(a.west) + (0,-0.5)$)(b)(c)(H1-prime)}, green!50!black] {};

\node[draw, thick, rounded corners, fit=(a), red] {};
\node[draw, thick, rounded corners, fit=(c), red] {};
\end{tikzpicture}
}
    \vspace{-19pt}
    \caption{Intervening on $\{a,c\}$ orients the red edges, then Meek rules orient the blue edges.}
    \label{fig:appendix-off-target-intervention2}
\end{subfigure}
\quad
\begin{subfigure}[t]{0.24\linewidth}
    \resizebox{\linewidth}{!}{%
\begin{tikzpicture}
\node[draw, circle, minimum size=15pt, inner sep=2pt] at (0,0) (d) {$d$};
\node[draw, circle, minimum size=15pt, inner sep=2pt, right=of d] (e) {$e$};
\node[draw, circle, minimum size=15pt, inner sep=2pt, right=of e] (f) {$f$};
\node[draw, circle, minimum size=15pt, inner sep=2pt, above=of d] (a) {$a$};
\node[draw, circle, minimum size=15pt, inner sep=2pt, right=of a] (b) {$b$};
\node[draw, circle, minimum size=15pt, inner sep=2pt, right=of b] (c) {$c$};
\node[draw, circle, minimum size=15pt, inner sep=2pt, below=of d] (g) {$g$};
\node[draw, circle, minimum size=15pt, inner sep=2pt, right=of g] (h) {$h$};
\node[draw, circle, minimum size=15pt, inner sep=2pt, right=of h] (i) {$i$};

\node[draw, thick, rounded corners, fit=(d)(g), white] (KH) {};
\node[white, left=5pt of KH] {$L_H$};

% \draw[thick, -{Stealth[scale=1.5]}, red] (a) -- (b);
\draw[thick] (a) to[in=135,out=45] (c);
% \draw[thick, -{Stealth[scale=1.5]}, red] (a) -- (d);
\draw[thick] (b) -- (d);
% \draw[thick, -{Stealth[scale=1.5]}, red] (c) -- (b);
% \draw[thick, -{Stealth[scale=1.5]}, red] (c) -- (d);
% \draw[thick, -{Stealth[scale=1.5]}, blue] (d) -- (e);
% \draw[thick, dotted] (d) -- (g);
% \draw[thick, -{Stealth[scale=1.5]}, blue] (d) -- (h);
% \draw[thick, -{Stealth[scale=1.5]}, blue] (d) -- (i);
% \draw[thick, -{Stealth[scale=1.5]}, blue] (e) -- (f);
\draw[thick] (e) -- (h);
\draw[thick] (h) -- (i);

\node[green!50!black, above=5pt of c] (H1-prime) {$H'_1$};
\node[draw, thick, rounded corners, fit={($(a.west) + (0,-0.5)$)(b)(c)(H1-prime)}, green!50!black] {};

% \node[draw, thick, rounded corners, fit=(a), red] {};
% \node[draw, thick, rounded corners, fit=(c), red] {};
\end{tikzpicture}
}
    \vspace{-19pt}
    \caption{The resulting chain component after removing oriented edges.}
    \label{fig:appendix-off-target-intervention3}
\end{subfigure}
\caption{
Recall the moral DAG $G^*$ on $n=9$ nodes given in \cref{fig:toy-example}.
Suppose we oriented $d \sim g$ through an intervention on $\{g\}$ due to action stochasticity in \cref{fig:appendix-off-target-intervention}.
The resulting chain component {\color{red}$L_H$} after intervening on $\{g\}$ is \emph{large} since $|V(L_H)| = |\{a,b,c,d,e,f,h,i\}| = 8 > n/2$.
\textsc{PerformPartitioning} (\cref{alg:perform-partitioning}) breaks {\color{red}$L_H$} up by trying to intervene within $L_H[V(L_H) \cap N({\color{red}u_H})]$, where $d \equiv {\color{red}u_H}$ is the unique vertex from {\color{red}$K_H$} within $V({\color{red}K_H}) \cap V({\color{red}L_H})$.
There are two chain components {\color{green!50!black}$H'_1$} and {\color{orange}$H'_2$} in the induced subgraph $L_H[V(L_H) \cap N({\color{red}u_H})]$.
\textbf{Let us focus our remaining discussion on $H'_1$.}
If we pick 1/2-clique separator {\color{purple}$Z'_1 = \{b,c\}$} for {\color{green!50!black}$H'_1$}, then $c \equiv {\color{purple}z'_1}$ is the source of {\color{purple}$Z'_1$}.
Now, suppose in \cref{fig:appendix-off-target-intervention2} while trying to orient $b \sim c$, we intervened on $\{a,c\}$.
Then, red edges $\{a \to b, a \to d, c \to b, c \to d\}$ will be cut and oriented which then triggers Meek R2 to orient the blue edges $\{d \to e, d \to h, d \to i, e \to f\}$.
\cref{fig:appendix-off-target-intervention3} illustrates the resulting chain component without the newly oriented edges.
Observe that $u_H \not\sim z_{H'_1} \equiv d \not\sim c$ as expected while $u_H$ is still connected to a chain compoment $C = \{b\} \subseteq V(H'_1)$.
As proven in \cref{lem:perform-partitioning-iterations}, we have $|V(C)| = |\{b\}| = 1 \leq 1.5 = |V(H')|/2$.
Although $H'_1$ now has two components $\{b\}$ and $\{a,c\}$, Line 9 of \textsc{PerformPartitioning} will restrict $H'_1$ to just $\{b\}$ going forward.
}
\label{fig:toy-example2}
\end{figure}
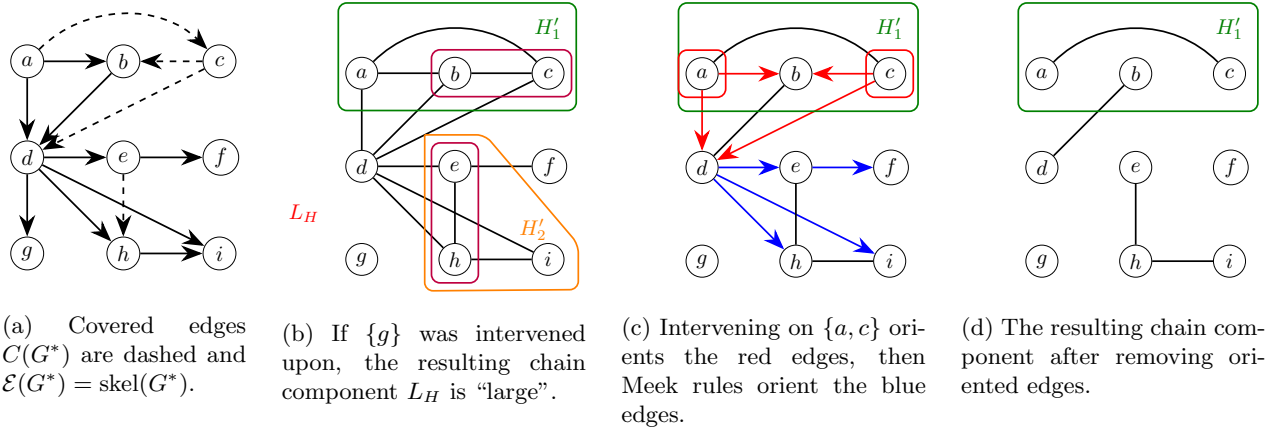

\performpartitioningguarantees*
\begin{proof}
We first argue that size indeed drops by a factor of two before upper bounding cost incurred.

\textbf{Correctness}
Since $K_H$ is a 1/2-clique separator for $H$, the resulting chain components have size at most $|V(H)|/2$ if we manage to orient all edges incident to $K_H$.

At the end of \textsc{OrientInternalCliqueEdges}, all edges \emph{within} the $K_H$'s will be oriented and there is at most one ``large'' chain component $L_H$ of size strictly larger than $|V(H)|/2$ containing a vertex $u_H \in V(K_H)$; see \cref{lem:exactly-one-from-KH}.

Consider any chain component $H' \in CC(L_H[V(L_H) \cap N(u_H)])$.
\begin{itemize}
    \item If $u_H \in V(L_H) \cap V(K_H)$ is no longer connected to any chain component of $H'$ after Line 8, then we know that $|V(H')|$ is now ``small''.
    \item Meanwhile, if $u_H$ was still connected to some chain component $C \subseteq H'$ after Line 8, then \cref{lem:perform-partitioning-iterations} tells us that $|V(C)| \leq |V(H')|/2$ and Line 9 restricts $H'$ to $C$.
    So, after $\cO(\log n)$ while-loop iterations, $|V(H')| \leq 1$.
    \item In the case that $H'$ is a singleton $\{z_{H'}\}$, e.g.\ when $L_H$ is a large star with $u_H$ at the center, the edge $u_H \sim z_{H'}$ will be cut on Line 8.
\end{itemize}

\textbf{Upper bounding the cost incurred}

In each while-loop iteration, we invoke \textsc{OrientInternalCliqueEdges} once and \textsc{CutViaLP} once.
By \cref{lem:orient-internal-clique-edges-properties}, \textsc{OrientInternalCliqueEdges} incurs a cost of $\cO(\overline{\nu}^{\max}(G^*) \cdot \log^2 n)$.
By \cref{lem:cut-via-lp-cost}, \textsc{CutViaLP} incurs a cost of $\cO(\overline{\nu}^{\max}(G^*) \cdot \log n)$ in expectation since we always invoke \textsc{CutViaLP} with a subset of covered edges of some $G' \in [G^*]$.
\end{proof}

\searchnumaxupperbound*
\begin{proof}
Combine \cref{lem:off-target-search-iterations} and \cref{lem:perform-partitioning-guarantees}.
\end{proof}

\section{Experiments}
\label{sec:appendix-experiments}

While our main contributions are theoretical, we also implemented our algorithms and performed some experiments.
All experiments were run on a laptop with Apple M1 Pro chip and 16GB of memory.
Our source code and experimental scripts are available at \url{https://github.com/cxjdavin/causal-discovery-under-off-target-interventions}.

\subsection{Executive summary}

An instance is defined by an underlying ground truth DAG $G^*$ and $k$ actions $A_1, \ldots, A_k$ with corresponding interventional distributions $\cD_1, \ldots, \cD_k$.
We tested on both synthetic and real-world graphs and 3 different classes of interventional distributions; see \cref{sec:experiment-graphs} and \cref{sec:experiment-interventional-distributions} for details.

We compared against 4 baselines: \texttt{Random}, \texttt{One-shot}, \texttt{Coloring}, \texttt{Separator}; see \cref{sec:experiment-algorithms} for details.
\texttt{One-shot} tries to emulate non-adaptive interventions while the last two are state-of-the-art on-target search algorithms adapted to the off-target setting.
As \texttt{Coloring} and \texttt{Separator} were designed specifically for unweighted settings, we test using uniform cost actions despite our off-target search algorithm being able to work with non-uniform action costs.
We also plotted the optimal value of \eqref{eq:VLP} for comparison.

Qualitatively, \texttt{Random} and \texttt{One-shot} perform visibly worse than the others.
While the adapted on-target algorithms may empirically outperform \texttt{Off-Target} sometimes, we remark that our algorithm has provable guarantees even for non-uniform action costs and it is designed to handle worst-case off-target instances.
Since we do not expect real-world causal graphs to be adversarial, it is unsurprising to see that our algorithm performs similarly to \texttt{Coloring} and \texttt{Separator}.

\begin{remark}
To properly evaluate adaptive algorithms, one would need data corresponding to all the interventions that these algorithms intend to perform.
Therefore, in addition to observational data, any experimental dataset to evaluate these algorithms should contain interventional data for all possible interventions.
Unfortunately, such real world datasets do not currently exist and thus the state-of-the-art adaptive search algorithms still use synthetic experiments to evaluate their performances.
To slightly mitigate a possible concern of synthetic graphs, we use real-world DAGs from \texttt{bnlearn} \cite{scutari2010learning} as our ground truth DAGs $G^*$s.
\end{remark}

\subsection{Graph instances}
\label{sec:experiment-graphs}

We tested on synthetic \texttt{GNP\_TREE} graphs \cite{choo2023subset} of various sizes, and on some real-world graphs from \texttt{bnlearn} \cite{scutari2010learning}.
We associate a unit-cost action $A_v$ to each vertex $v \in V$ of the input graph.

\subsubsection{Synthetic graphs}

For given $n$ and $p$ parameters, the moral \texttt{GNP\_TREE} graphs are generated in the following way\footnote{Description from Appendix F.1.1 of \cite{choo2023subset}.}:
\begin{itemize}
    \item Generate a random Erdos-Renyi graph $G(n,p)$.
    \item Generate a random tree on $n$ nodes.
    \item Combine their edgesets and orient the edges in an acyclic fashion: orient $u \to v$ whenever vertex $u$ has a smaller vertex numbering than $v$.
    \item Add arcs to remove v-structures: for every v-structure $u \to v \gets w$ in the graph, we add the arc $u \to w$ whenever vertex $u$ has a smaller vertex numbering from $w$.
\end{itemize}
We generated \texttt{GNP\_TREE} graphs with $n \in \{10, 20, 30, 40, 50\}$ and $p = 0.1$.
For each $(n,p)$ setting, we generated 10 such graphs.

\subsubsection{Real-world graphs}

The \texttt{bnlearn} \cite{scutari2010learning} graphs are available at \url{https://www.bnlearn.com/bnrepository/}.
In particular, we used the graphical structure of the \texttt{Discrete Bayesian Networks} for all sizes: ``Small Networks ($< 20$ nodes)'', ``Medium Networks ($20-50$ nodes)'', ``Large Networks ($50-100$ nodes)'', and ``Very Large Networks ($100-1000$ nodes)'', and ``Massive Networks ($>1000$ nodes)''.
Some graphs such as ``pigs'', ``cancer'', ``survey'', ``earthquake'', and ``mildew'' already have fully oriented essential graphs and are thus excluded from the plots as they do not require any interventions.

\subsection{Interventional distributions}
\label{sec:experiment-interventional-distributions}

In our experiments, we associated each vertex $v$ with unit cost and an action $A_v$ with four different possible types of interventional distributions (see below).
The first two are atomic in nature (all actions return a single intervened vertex) while the third is slightly more complicated interventional distribution where multiple vertices may be intervened upon.
Atomic interventional distributions enables a simple way to compute the probability that edge $\{u,v\}$ is cut by action $A_i$: it is simply $p^i_u + p^i_v$, where $p^i_v$ is the probability that $v$ is intervened upon when we perform action $A_i$.

The 3 classes of off-target interventions we explored are as follows:
\begin{description}
    \item[$r$-hop]
        When taking action $A_v$, $D_v$ samples a uniform random vertex from the closed $r$-hop neighborhood of $v$, including $v$.
    \item[Decaying with parameter $\alpha$]
        When taking action $A_v$, $D_v$ samples a random vertex from a weighted probability distribution $w$ obtained by normalizing the following weight vector: assign weight $\alpha^r$ for all vertices exactly $r$-hops from $v$, where $v$ itself has weight 1.
        So, vertices closer to $v$ have higher chance of being intervened upon when we attempt to intervene on $v$.
    \item[Fat hand with parameter $p$]
        When taking action $A_v$, $D_v$ will always intervene on $v$, but will additionally intervene on $v$'s neighbors, each with independent probability $p$.
        Note that the probability of cutting an edge $\{u,v\}$ now is no longer a simple sum of two independent probabilities, but it is still relatively easy to compute in closed-form.
\end{description}

In our experiments, we tested the following 6 settings:
\begin{enumerate}
    \item $r$-hop with $r = 1$
    \item $r$-hop with $r = 2$
    \item Decaying with $\alpha = 0.5$
    \item Decaying with $\alpha = 0.9$
    \item Fat hand with $p = 0.5$
    \item Fat hand with $p = 0.9$
\end{enumerate}

\subsection{Algorithms}
\label{sec:experiment-algorithms}

Since our off-target intervention setting has not been studied before from an algorithmic perspective, there is no suitable prior algorithms to compare against.
As such, we propose the following baselines:
\begin{description}
    \item[Random]
        Repeatedly sample actions uniformly at random until the entire DAG is oriented.
        This is a natural naive baseline to compare against.
    \item[One-shot]
        Solve our linear program \eqref{eq:VLP} in the paper on all unoriented edges.
        Intepret the optimal vector $X$ of VLP as a probability distribution $p$ over the actions and sample actions according to $p$ until all the unoriented edges are oriented.
        One-shot aims to simulate non-adaptive algorithms in the context of off-target interventions: while it can optimally solve \eqref{eq:VLP} (c.f.\ compute graph separating system), One-shot cannot update its knowledge based on arc orientations that are subsequently revealed.
    \item[Coloring \textrm{and} Separator]
        Two state-of-the-art adaptive on-target intervention algorithms in the literature: Separator \cite{choo2022verification} and Coloring \cite{shanmugam2015learning}.
        As these algorithms are not designed for off-target intervention, we need to orient all the edges incident to $v$ to simulate an on-target intervention at $v$.
        To do so, we run \eqref{eq:VLP} on the unoriented edges incident to $v$ and interpret the optimal vector $X$ of \eqref{eq:VLP} as a probability distribution $p$ over the actions, then sample actions according to $p$ until all the unoriented edges incident to $v$ are oriented.
        Note that this modification provides a generic way to convert any usual intervention algorithm to the off-target setting.
\end{description}

\subsection{Experimental plots}

For each combination of graph instance and interventional distribution, we ran $10$ times and plotted the average with standard deviation error bars.
This is because there is inherent randomness involved when we attempt to perform an intervention.
For synthetic graphs, we also aggregated the performance over all graphs with the same number of nodes $n$ in hopes of elucidating trends with respect to the size of the graph.
As the naive baseline Random incurs significantly more cost than the others, we also plotted all experiments without it.

\newpage
\subsubsection{Plots (without ``random'')}

\begin{figure}[H]
\centering
\begin{subfigure}[t]{0.45\linewidth}
\includegraphics[width=\linewidth]{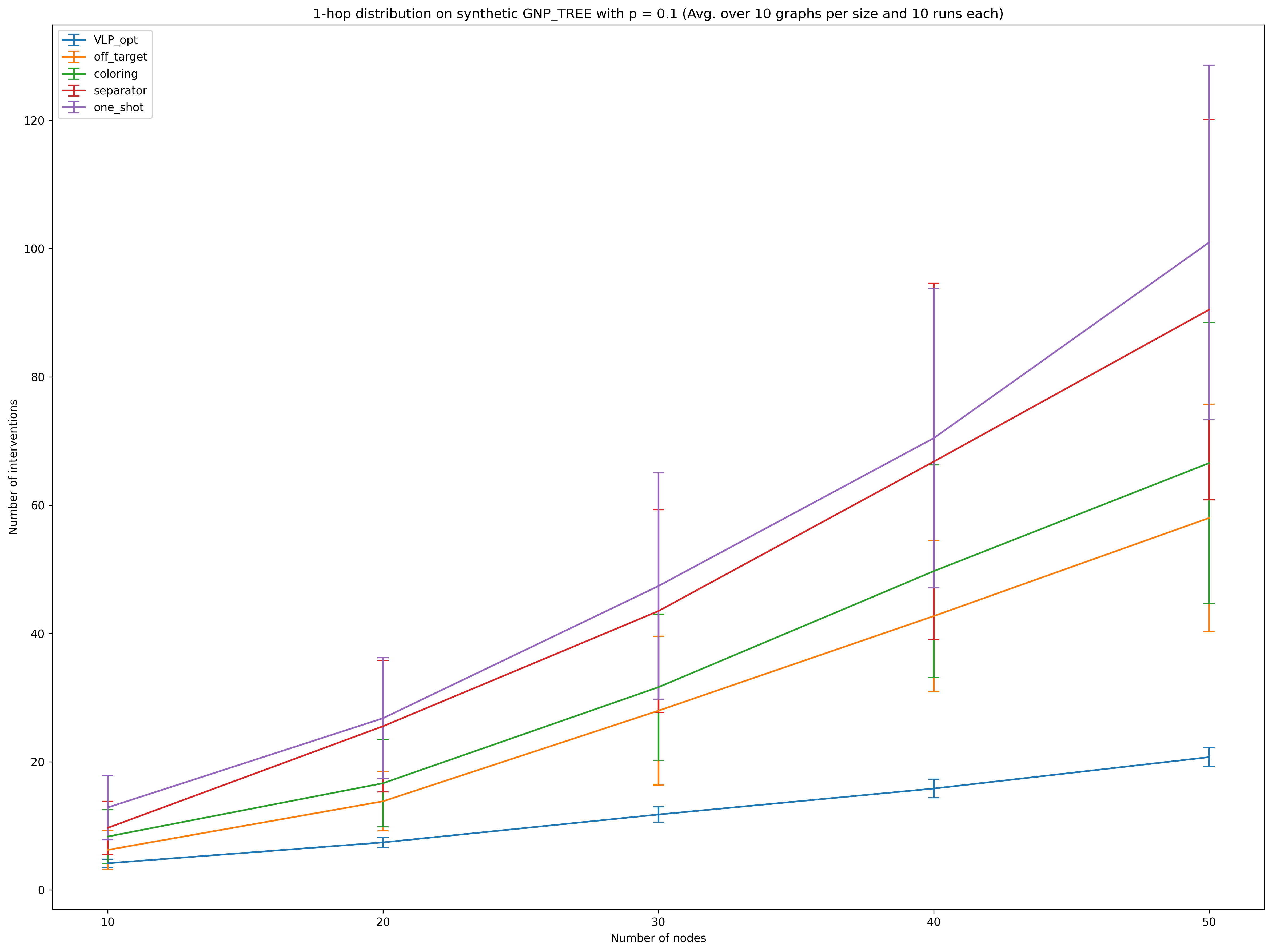}
\caption{1-hop}
\label{fig:gnp-tree-no-random-1-hop}
\end{subfigure}
\quad
\begin{subfigure}[t]{0.45\linewidth}
\includegraphics[width=\linewidth]{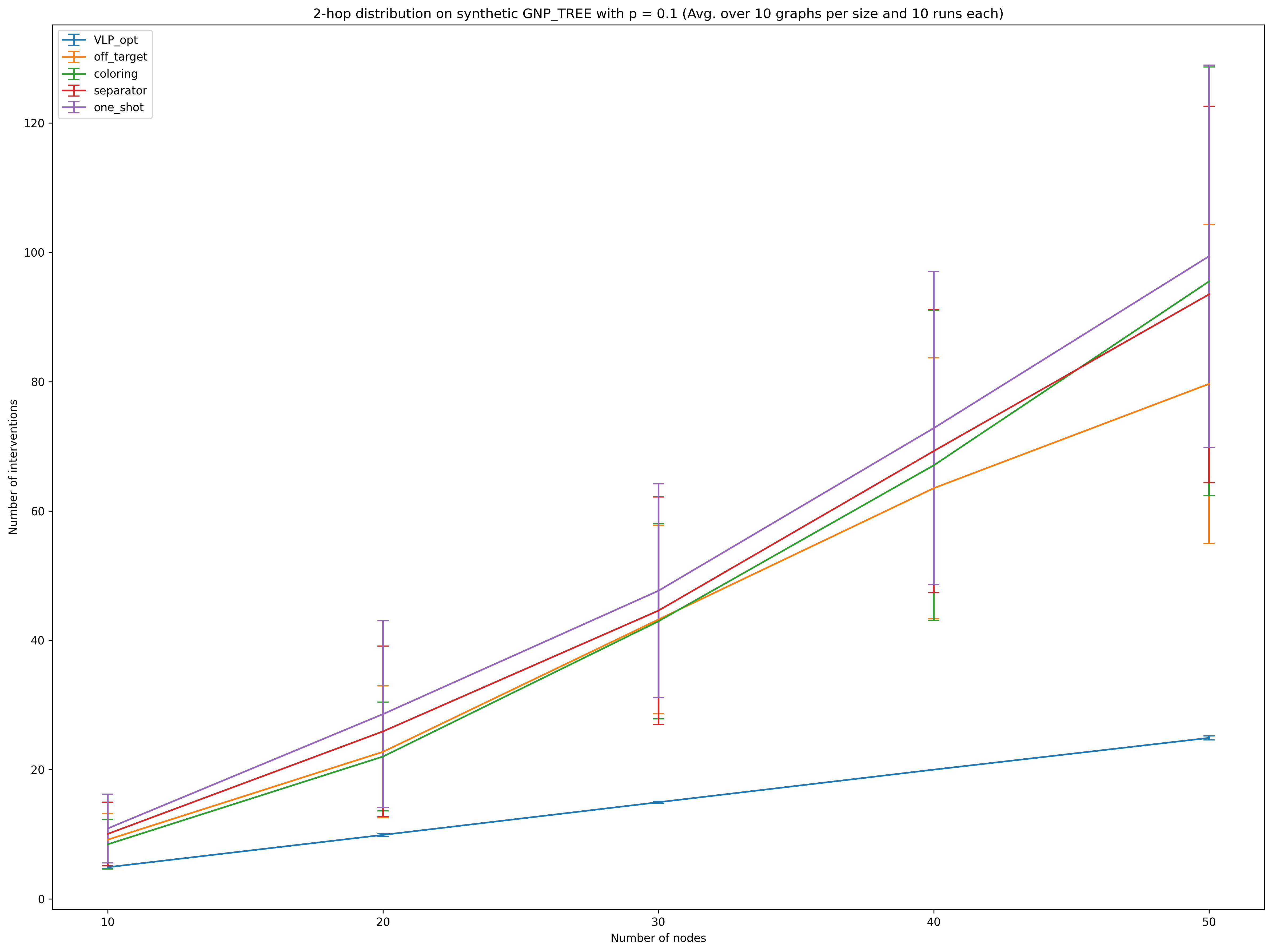}
\caption{2-hop}
\label{fig:gnp-tree-no-random-2-hop}
\end{subfigure}
\\
\begin{subfigure}[t]{0.45\linewidth}
\includegraphics[width=\linewidth]{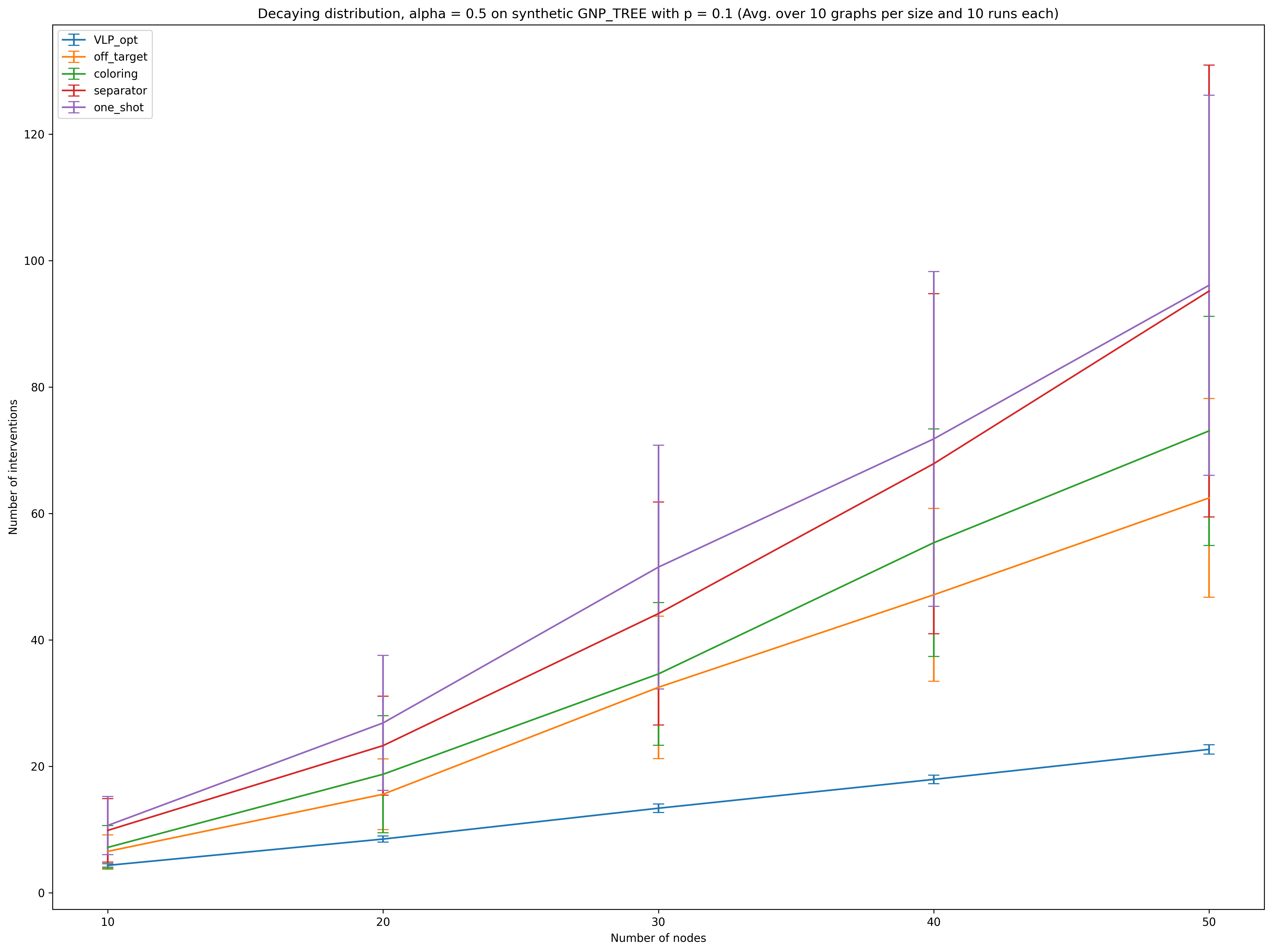}
\caption{Decaying, $\alpha = 0.5$}
\label{fig:gnp-tree-no-random-decaying-0.5}
\end{subfigure}
\quad
\begin{subfigure}[t]{0.45\linewidth}
\includegraphics[width=\linewidth]{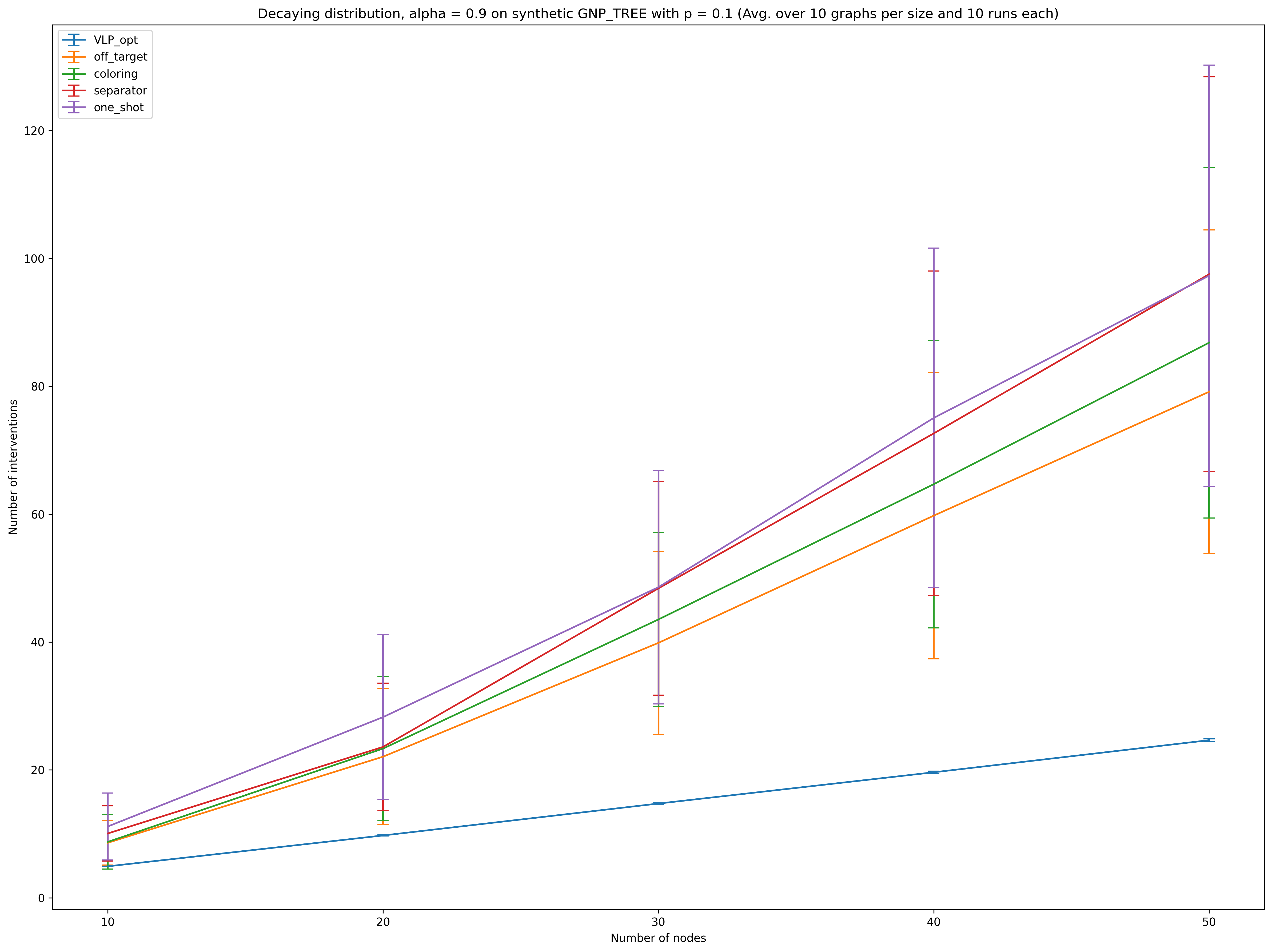}
\caption{Decaying, $\alpha = 0.9$}
\label{fig:gnp-tree-no-random-decaying-0.9}
\end{subfigure}
\\
\begin{subfigure}[t]{0.45\linewidth}
\includegraphics[width=\linewidth]{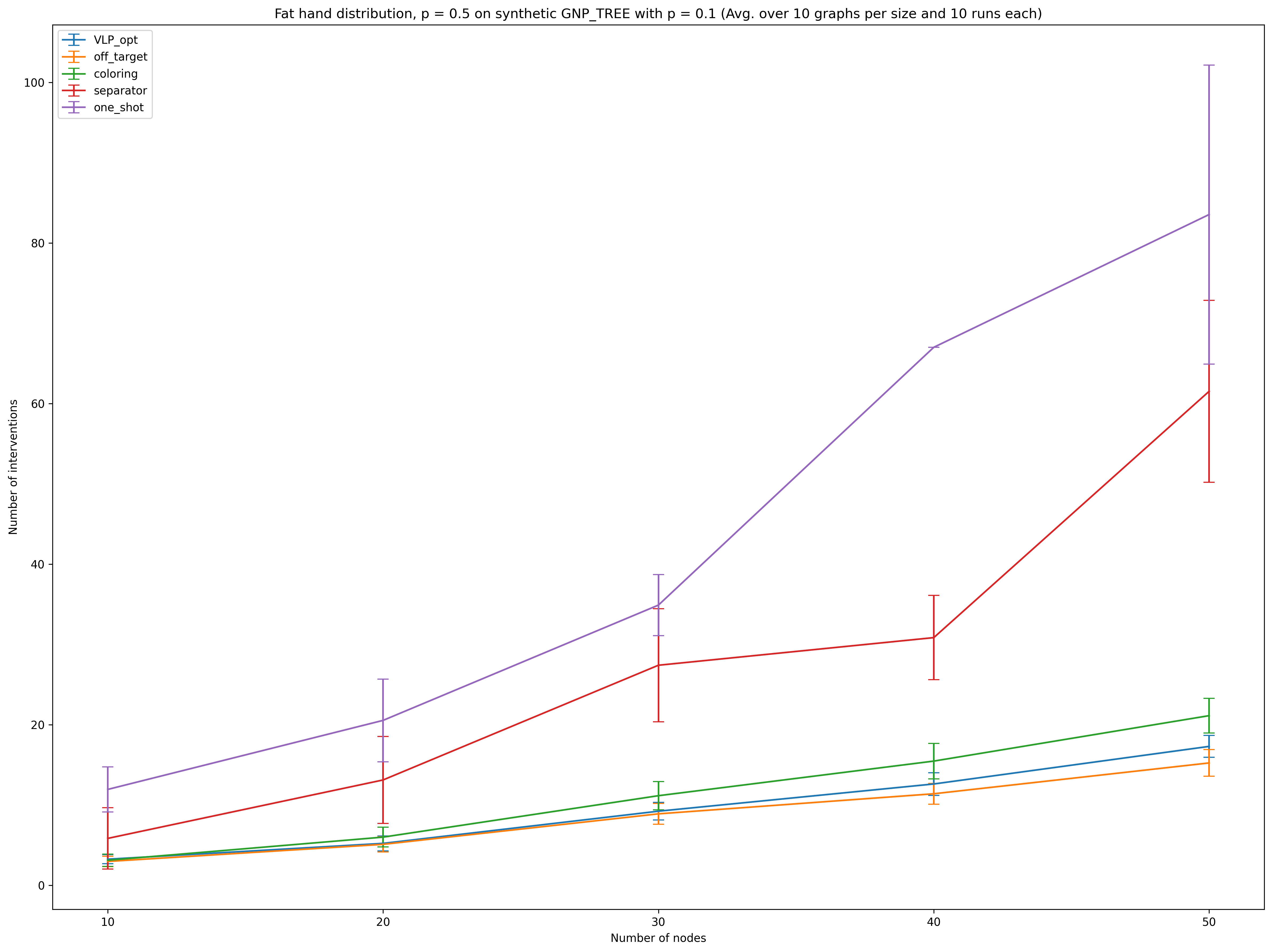}
\caption{Fat hand, $p = 0.5$}
\label{fig:gnp-tree-no-random-fat-hand-0.5}
\end{subfigure}
\quad
\begin{subfigure}[t]{0.45\linewidth}
\includegraphics[width=\linewidth]{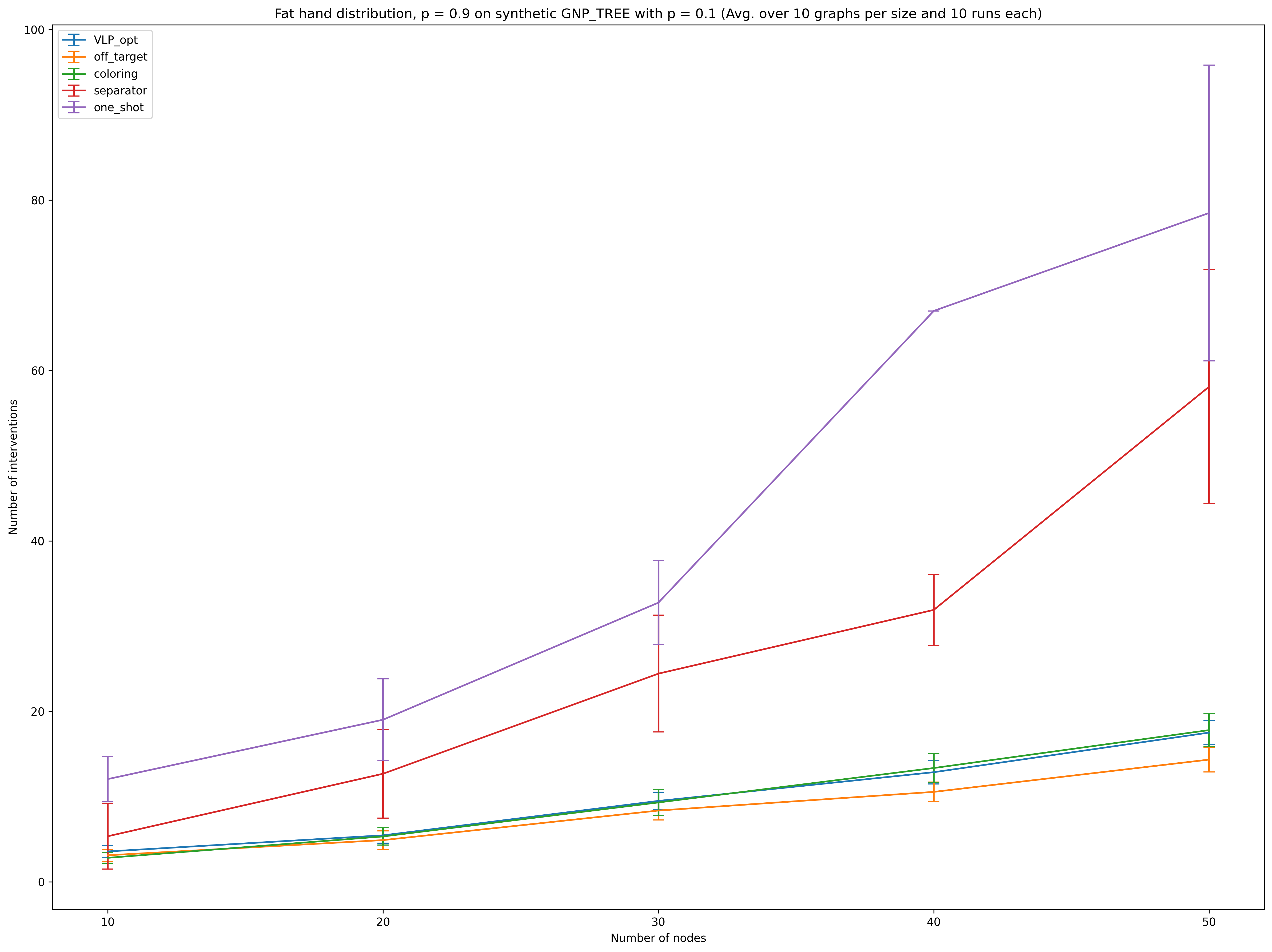}
\caption{Fat hand, $p = 0.9$}
\label{fig:gnp-tree-no-random-fat-hand-0.9}
\end{subfigure}
\caption{\texttt{GNP\_TREE} graphs without ``random''.
The optimal value of VLP in {\color{blue}blue} is an $\cO(\log n)$ approximation of $\nu(G^*)$.
Our off-target search \texttt{Off-Target} is in {\color{orange}orange}.
\texttt{Coloring} is in {\color{green!50!black}green}.
\texttt{Separator} is in {\color{red}red}.
\texttt{One-shot} is in {\color{purple!50!blue}purple}.
}
\label{fig:gnp-tree-no-random}
\end{figure}

\begin{figure}[H]
\centering
\begin{subfigure}[t]{0.45\linewidth}
\includegraphics[width=\linewidth]{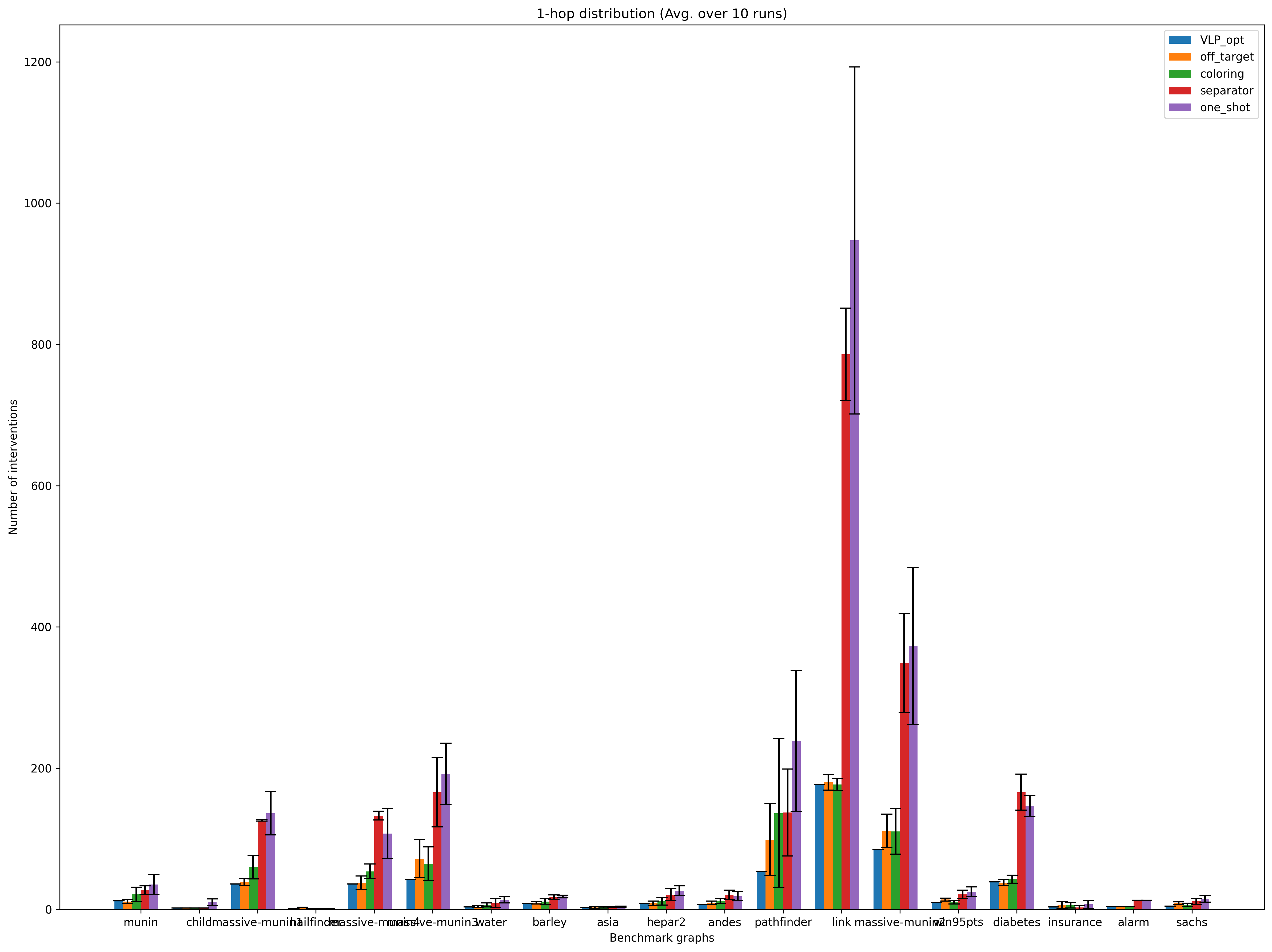}
\caption{1-hop}
\label{fig:bnlearn-no-random-1-hop}
\end{subfigure}
\quad
\begin{subfigure}[t]{0.45\linewidth}
\includegraphics[width=\linewidth]{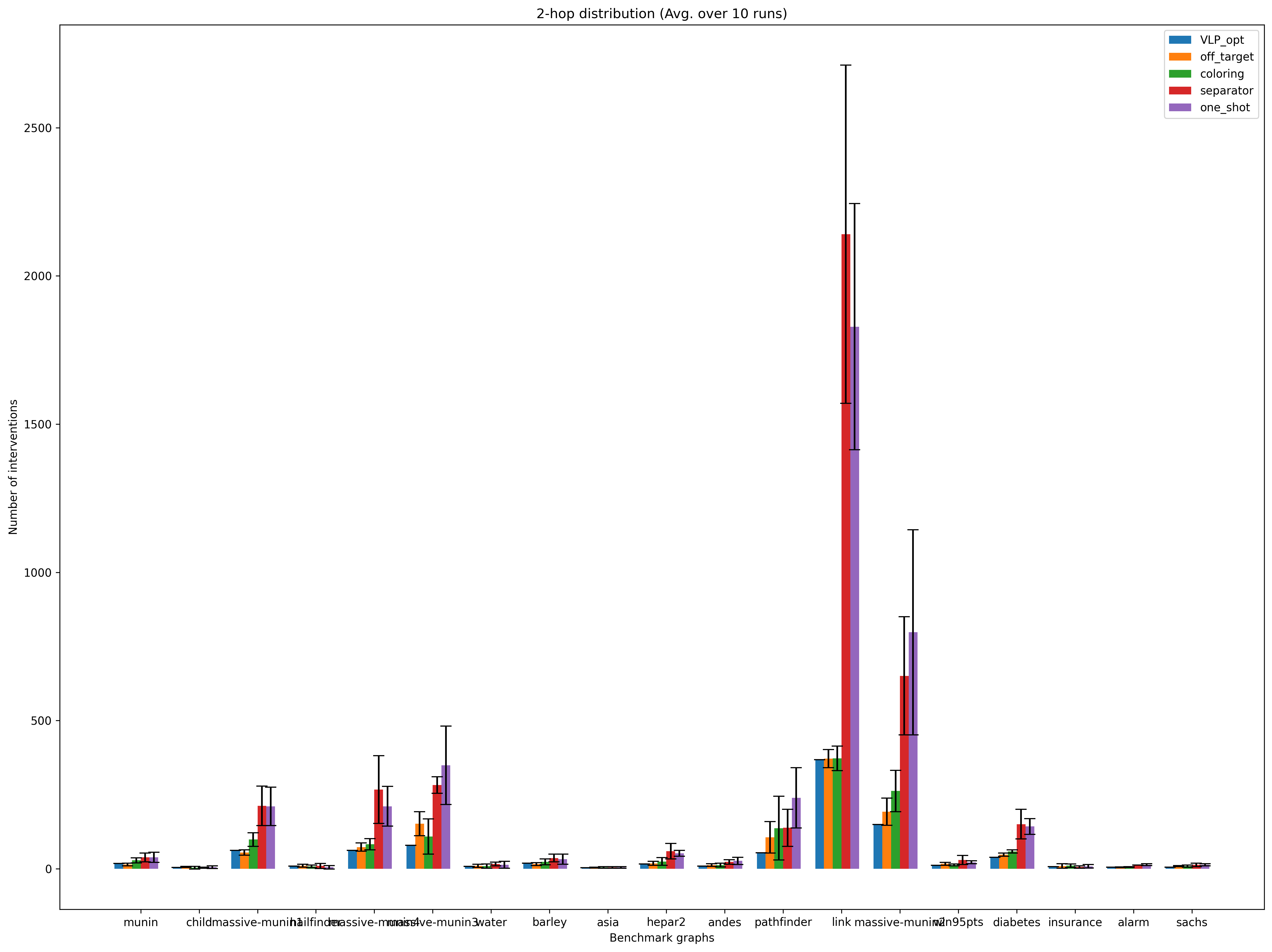}
\caption{2-hop}
\label{fig:bnlearn-no-random-2-hop}
\end{subfigure}
\\
\begin{subfigure}[t]{0.45\linewidth}
\includegraphics[width=\linewidth]{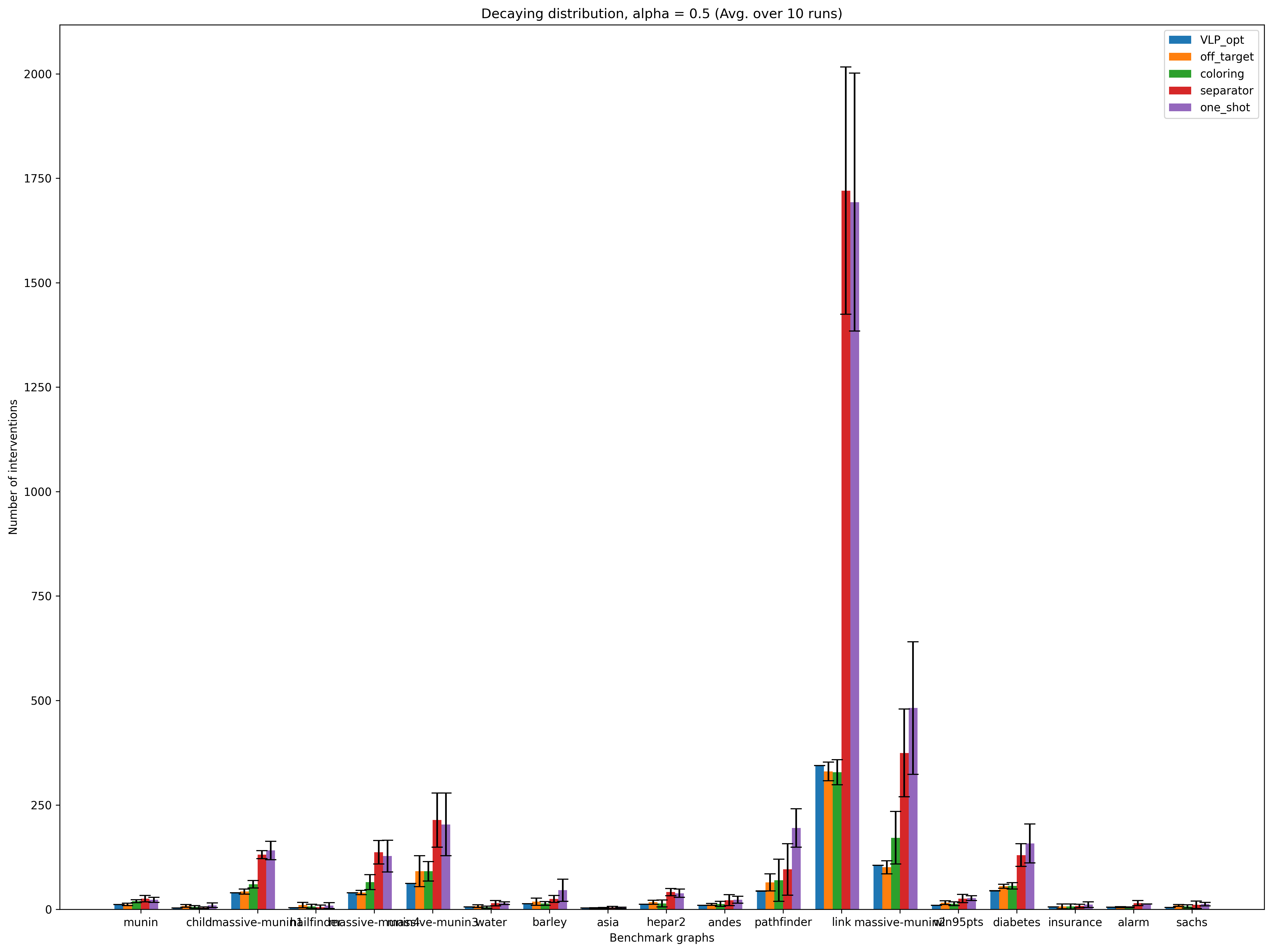}
\caption{Decaying, $\alpha = 0.5$}
\label{fig:bnlearn-no-random-decaying-0.5}
\end{subfigure}
\quad
\begin{subfigure}[t]{0.45\linewidth}
\includegraphics[width=\linewidth]{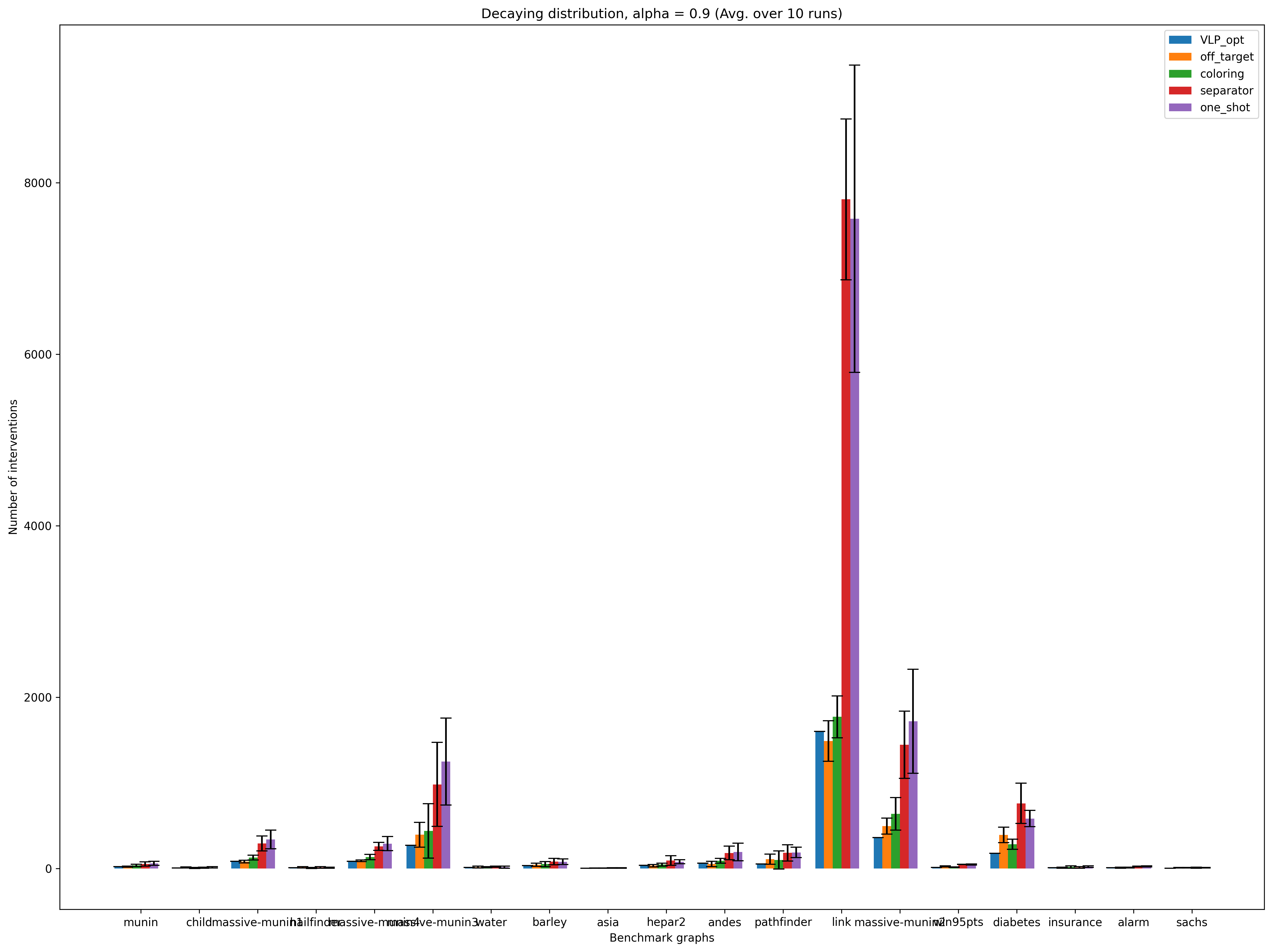}
\caption{Decaying, $\alpha = 0.9$}
\label{fig:bnlearn-no-random-decaying-0.9}
\end{subfigure}
\\
\begin{subfigure}[t]{0.45\linewidth}
\includegraphics[width=\linewidth]{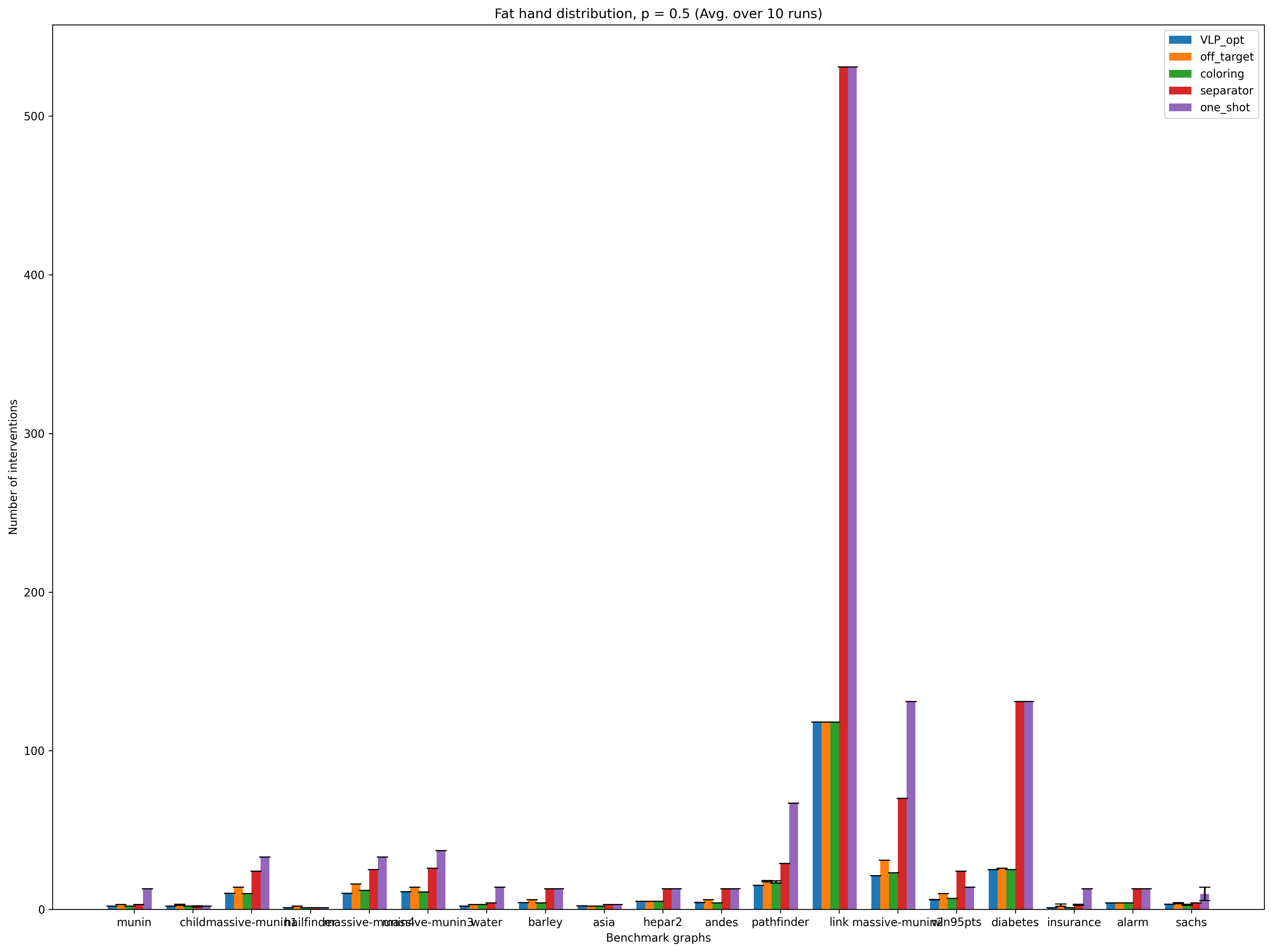}
\caption{Fat hand, $p = 0.5$}
\label{fig:bnlearn-no-random-fat-hand-0.5}
\end{subfigure}
\quad
\begin{subfigure}[t]{0.45\linewidth}
\includegraphics[width=\linewidth]{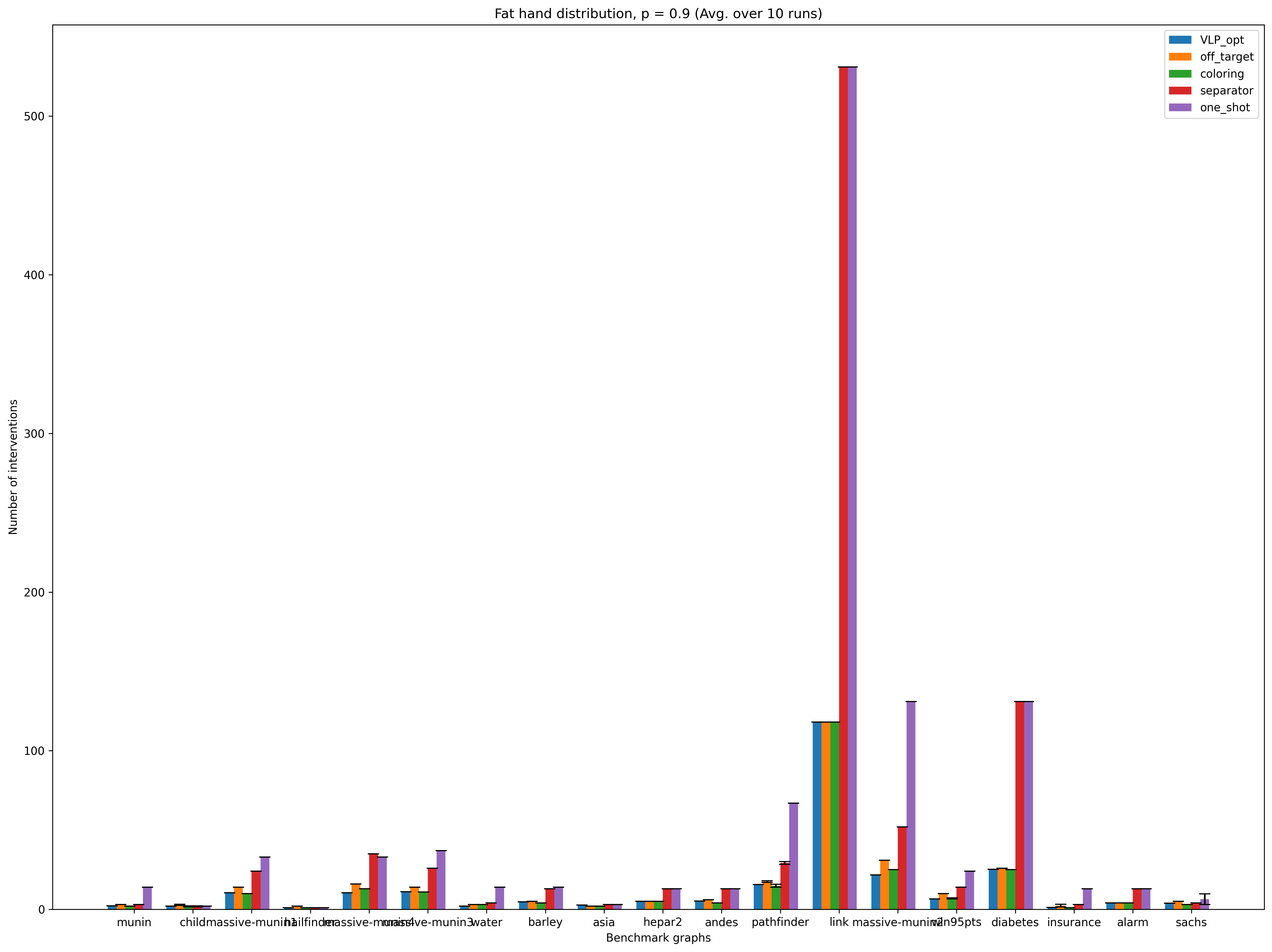}
\caption{Fat hand, $p = 0.9$}
\label{fig:bnlearn-no-random-fat-hand-0.9}
\end{subfigure}
\caption{\texttt{bnlearn} graphs without ``random''.
The optimal value of VLP in {\color{blue}blue} is an $\cO(\log n)$ approximation of $\nu(G^*)$.
Our off-target search \texttt{Off-Target} is in {\color{orange}orange}.
\texttt{Coloring} is in {\color{green!50!black}green}.
\texttt{Separator} is in {\color{red}red}.
\texttt{One-shot} is in {\color{purple!50!blue}purple}.
}
\label{fig:bnlearn-no-random}
\end{figure}

\newpage
\subsubsection{Plots (with ``random'')}

\begin{figure}[H]
\centering
\begin{subfigure}[t]{0.45\linewidth}
\includegraphics[width=\linewidth]{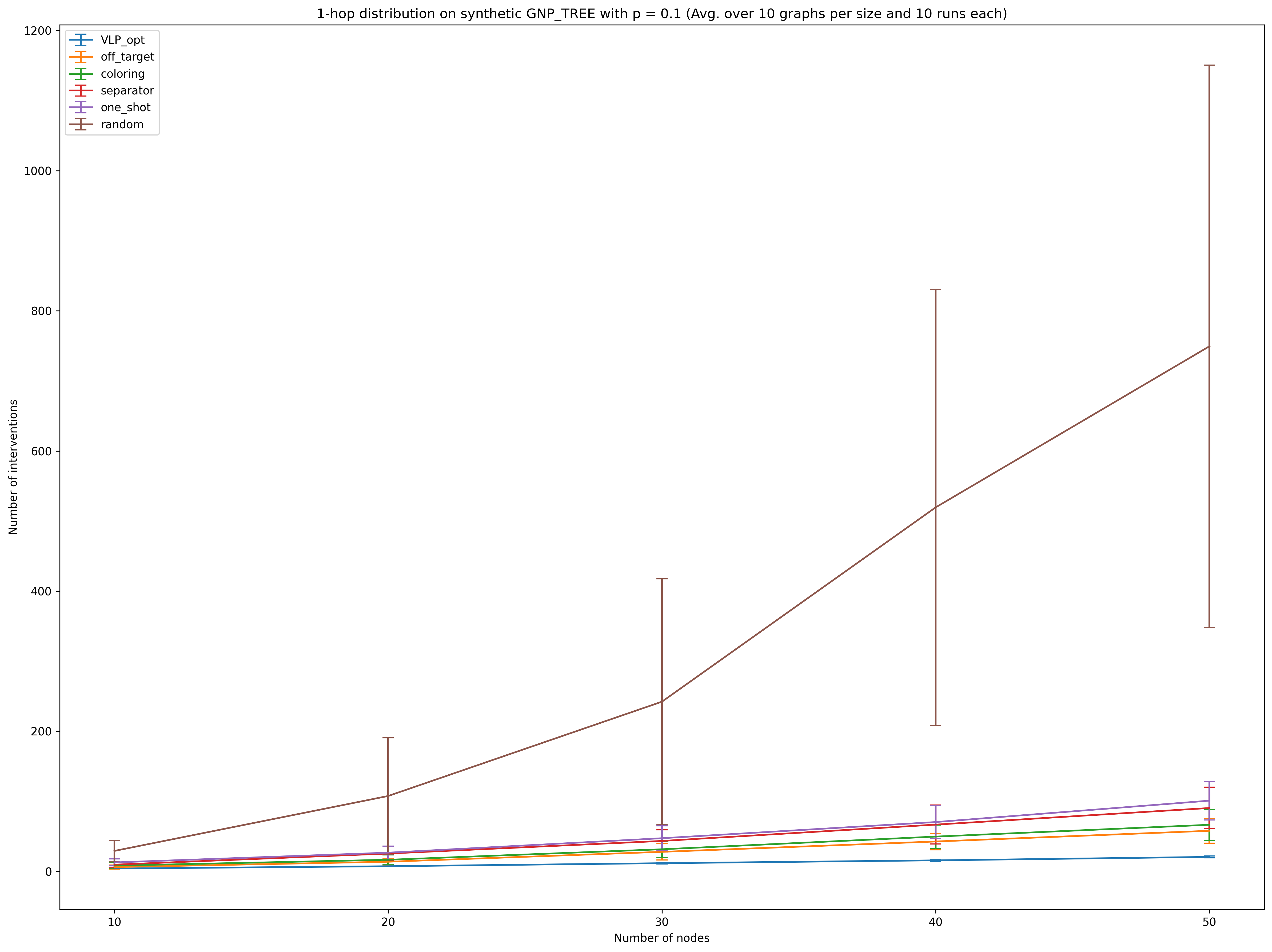}
\caption{1-hop}
\label{fig:gnp-tree-1-hop}
\end{subfigure}
\quad
\begin{subfigure}[t]{0.45\linewidth}
\includegraphics[width=\linewidth]{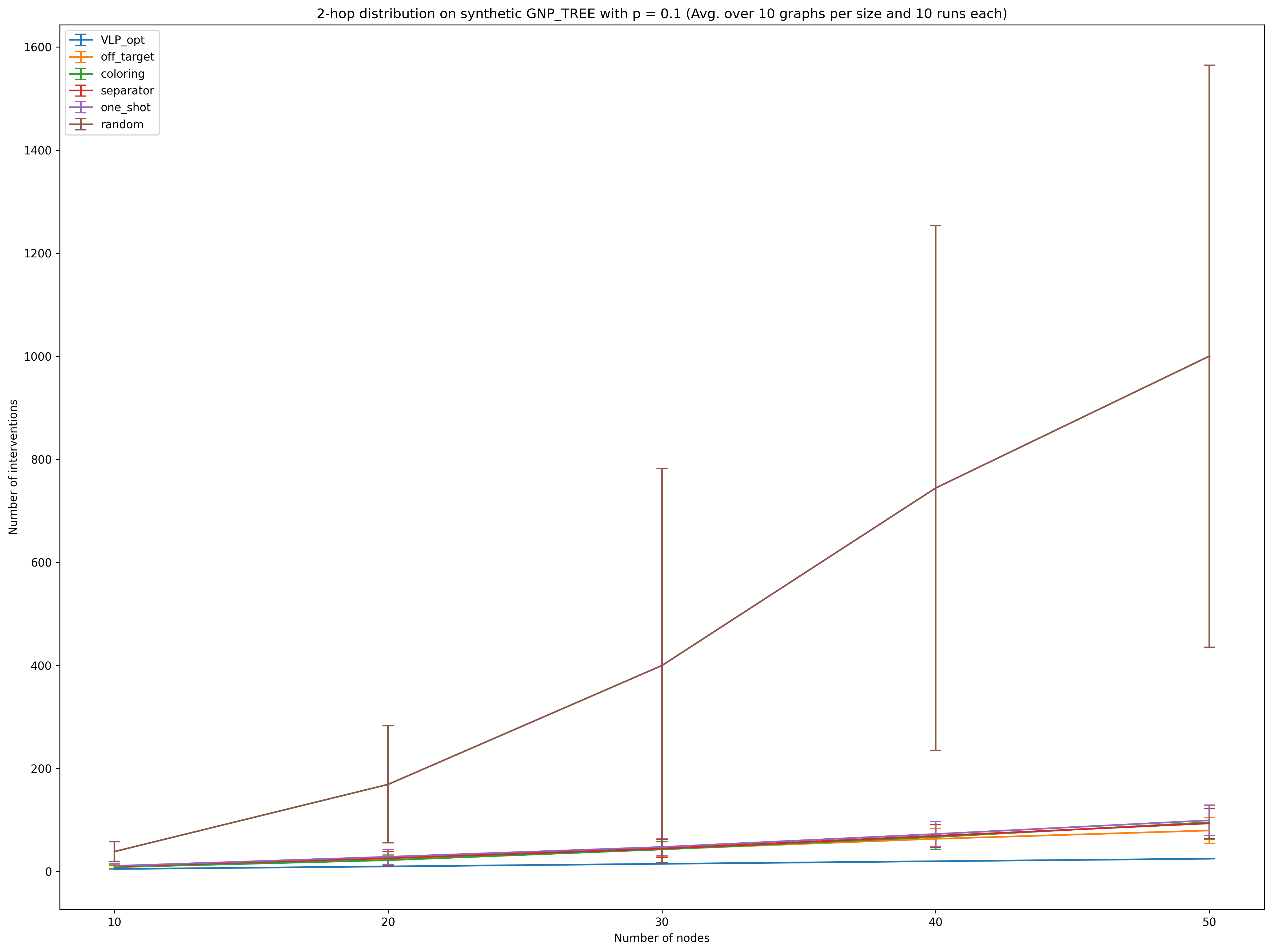}
\caption{2-hop}
\label{fig:gnp-tree-2-hop}
\end{subfigure}
\\
\begin{subfigure}[t]{0.45\linewidth}
\includegraphics[width=\linewidth]{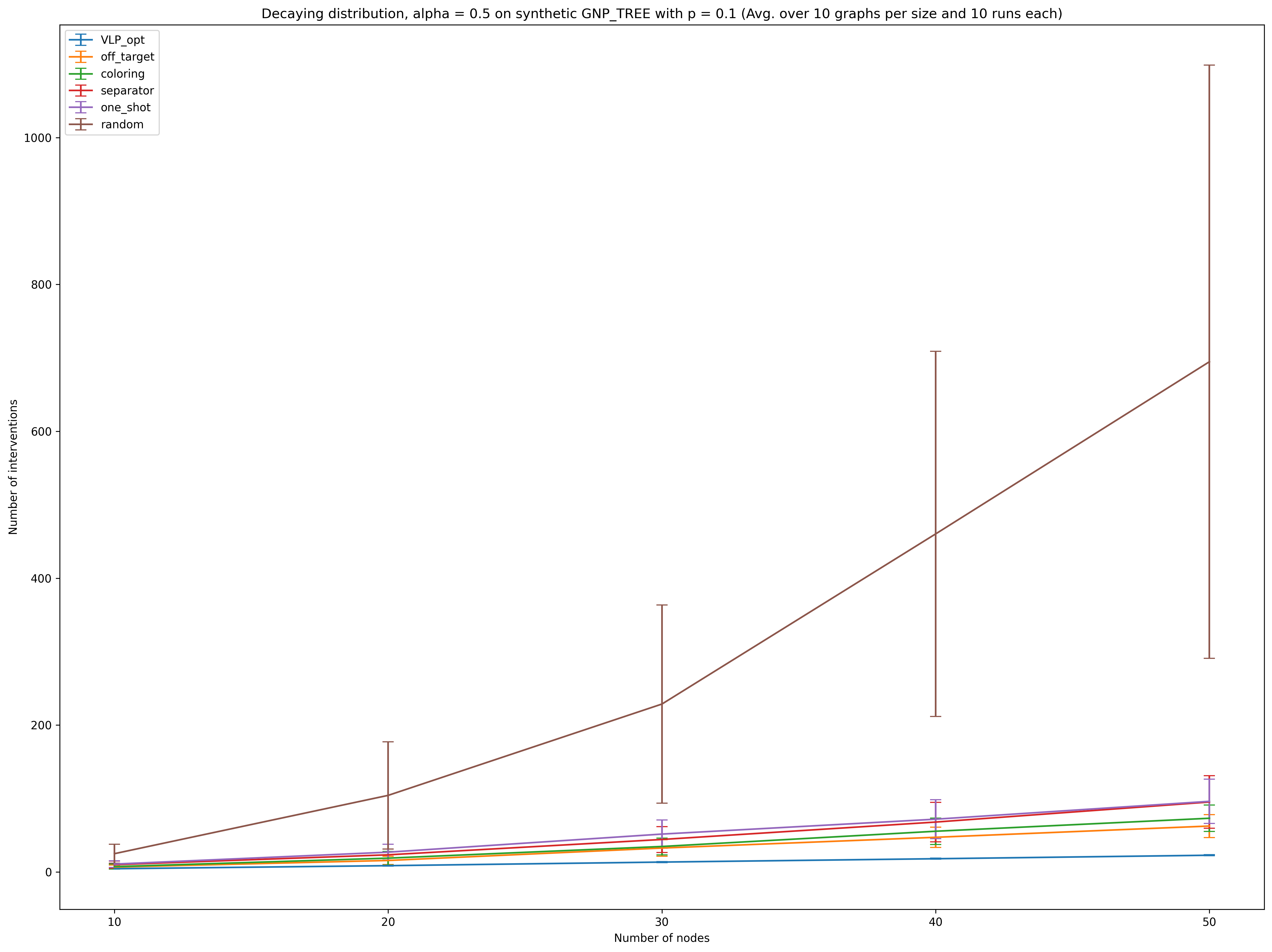}
\caption{Decaying, $\alpha = 0.5$}
\label{fig:gnp-tree-decaying-0.5}
\end{subfigure}
\quad
\begin{subfigure}[t]{0.45\linewidth}
\includegraphics[width=\linewidth]{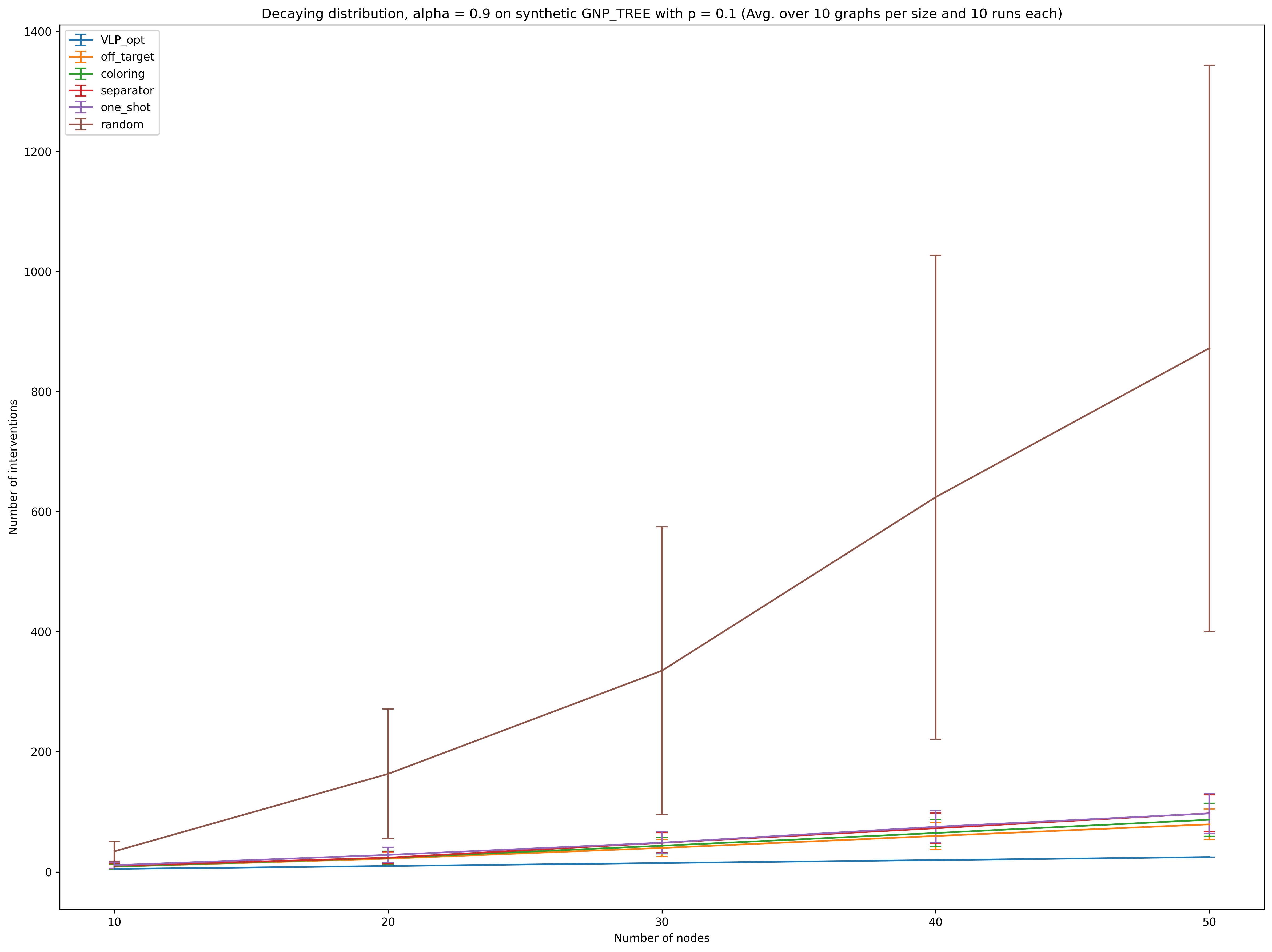}
\caption{Decaying, $\alpha = 0.9$}
\label{fig:gnp-tree-decaying-0.9}
\end{subfigure}
\\
\begin{subfigure}[t]{0.45\linewidth}
\includegraphics[width=\linewidth]{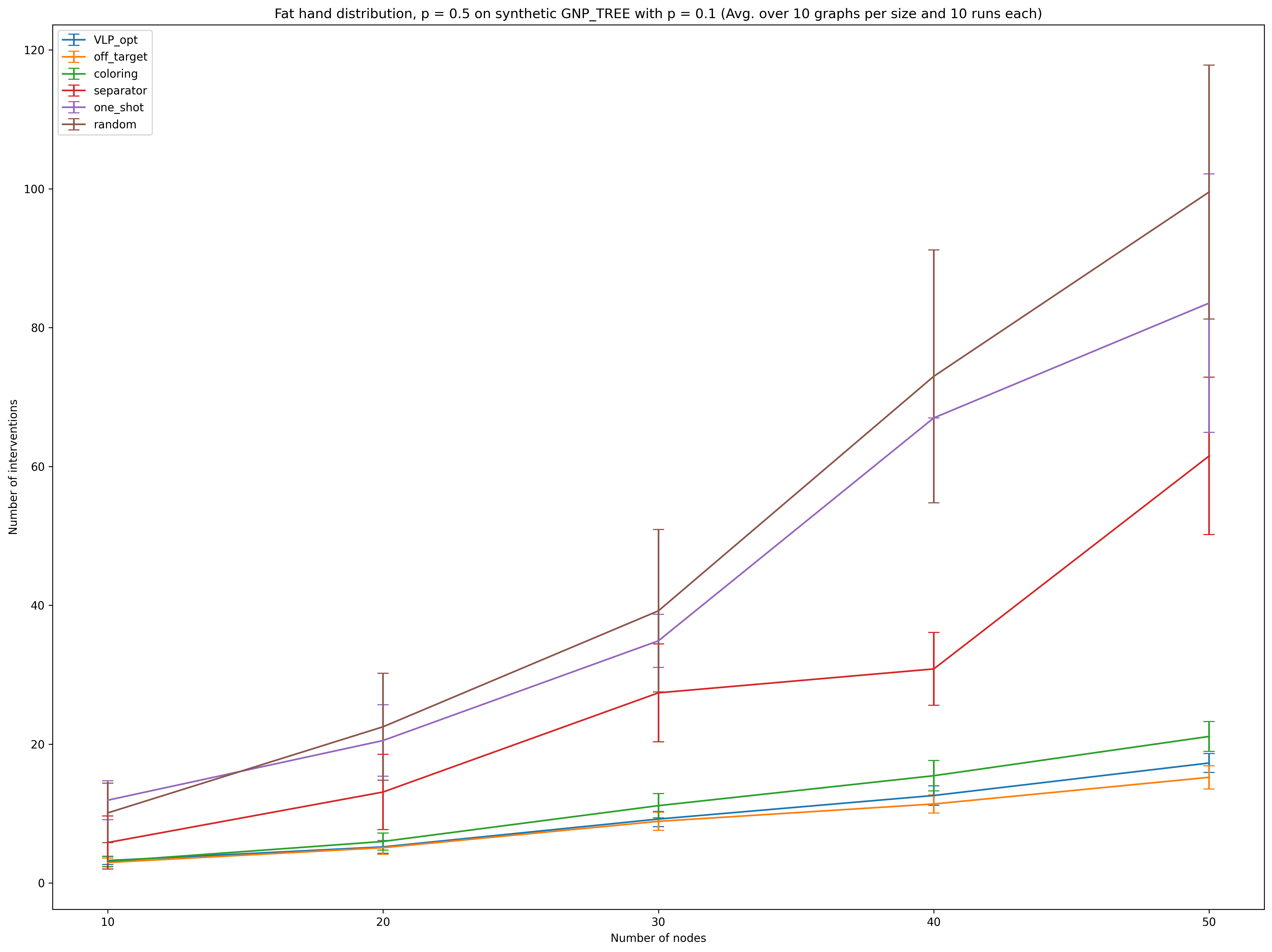}
\caption{Fat hand, $p = 0.5$}
\label{fig:gnp-tree-fat-hand-0.5}
\end{subfigure}
\quad
\begin{subfigure}[t]{0.45\linewidth}
\includegraphics[width=\linewidth]{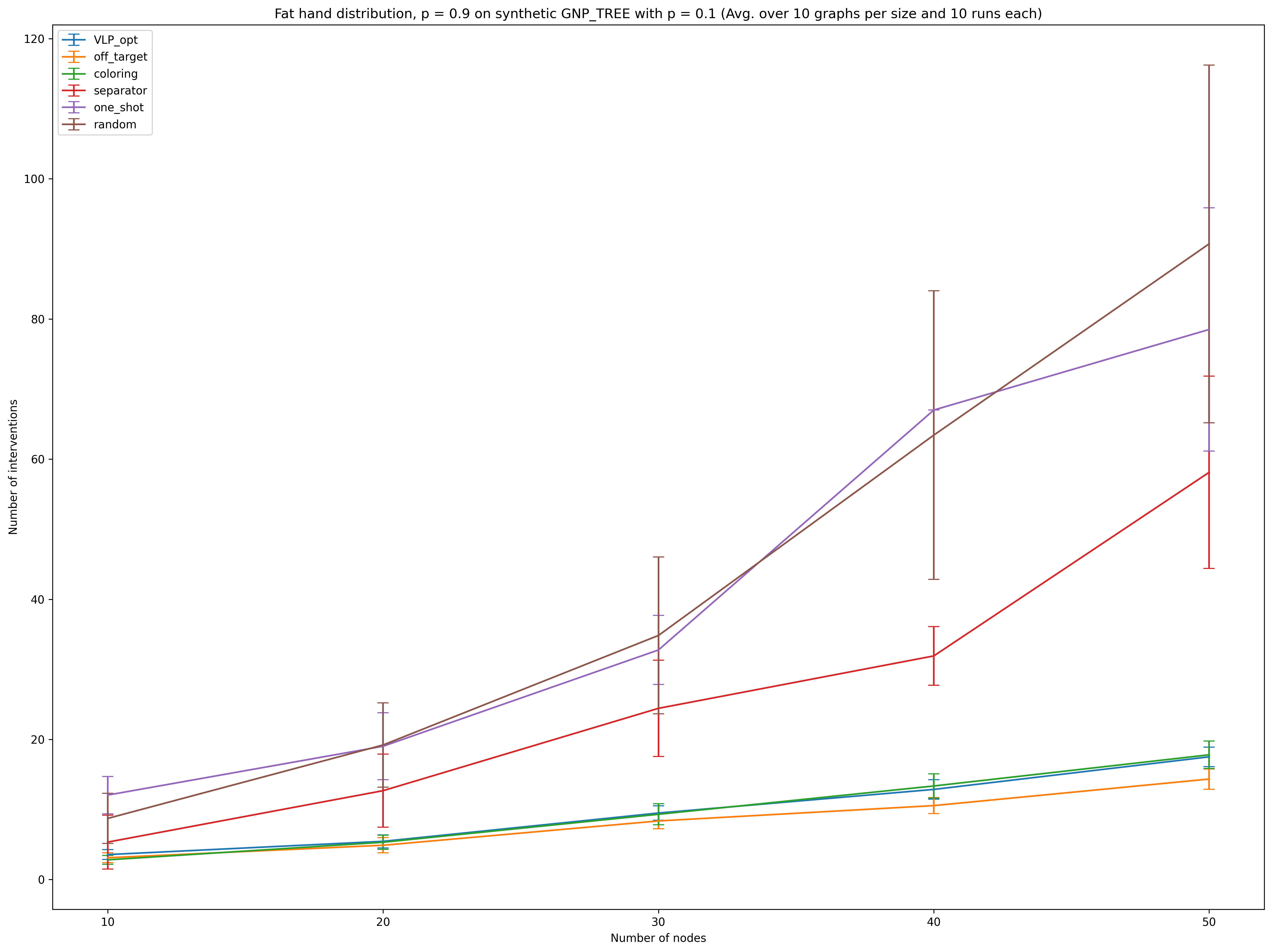}
\caption{Fat hand, $p = 0.9$}
\label{fig:gnp-tree-fat-hand-0.9}
\end{subfigure}
\caption{\texttt{GNP\_TREE} graphs.
The optimal value of VLP in {\color{blue}blue} is an $\cO(\log n)$ approximation of $\nu(G^*)$.
Our off-target search \texttt{Off-Target} is in {\color{orange}orange}.
\texttt{Coloring} is in {\color{green!50!black}green}.
\texttt{Separator} is in {\color{red}red}.
\texttt{One-shot} is in {\color{purple!50!blue}purple}.
\texttt{Random} is in {\color{brown!70!black}brown}.
}
\label{fig:gnp-tree}
\end{figure}

\begin{figure}[H]
\centering
\begin{subfigure}[t]{0.45\linewidth}
\includegraphics[width=\linewidth]{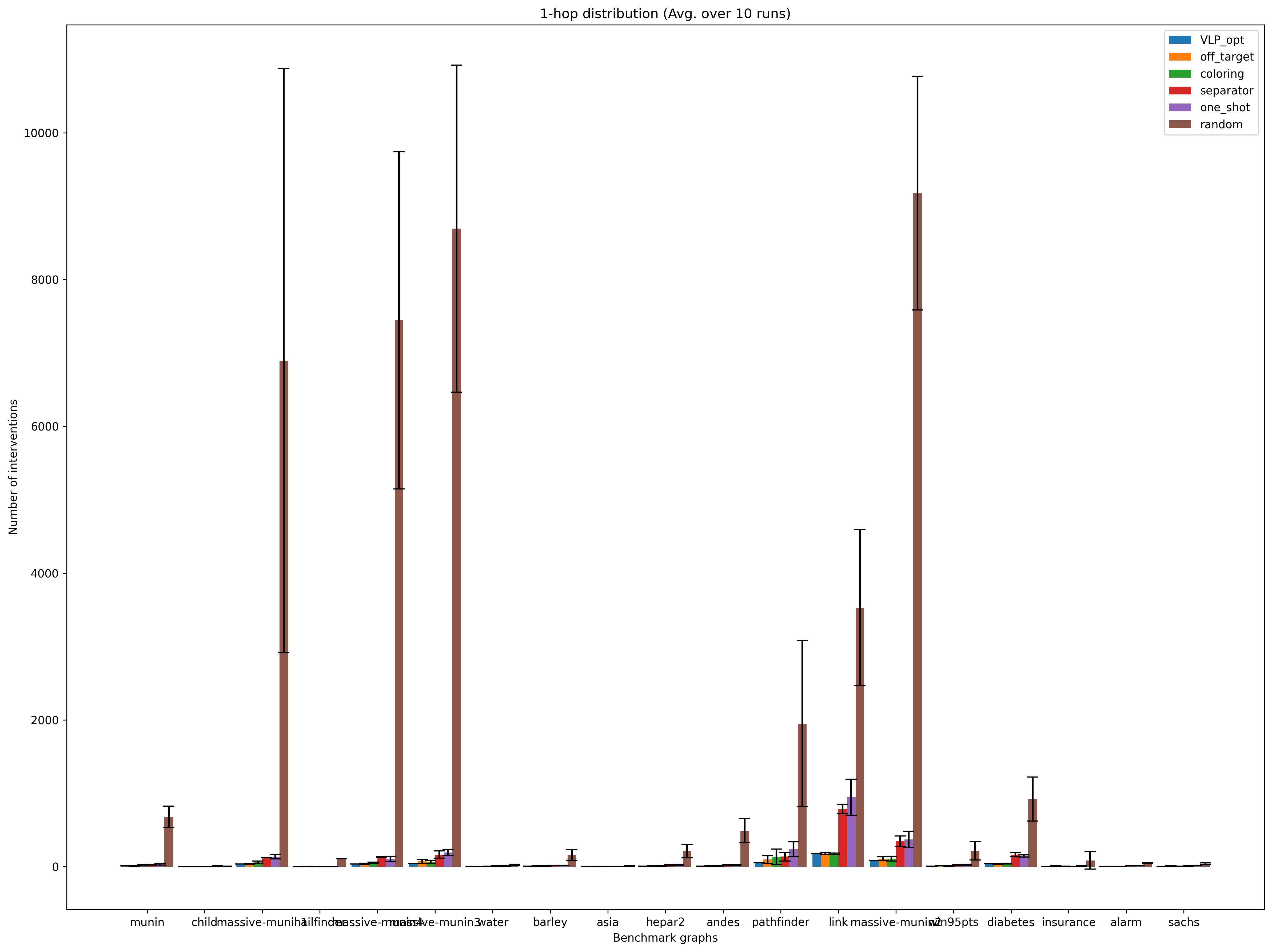}
\caption{1-hop}
\label{fig:bnlearn-1-hop}
\end{subfigure}
\quad
\begin{subfigure}[t]{0.45\linewidth}
\includegraphics[width=\linewidth]{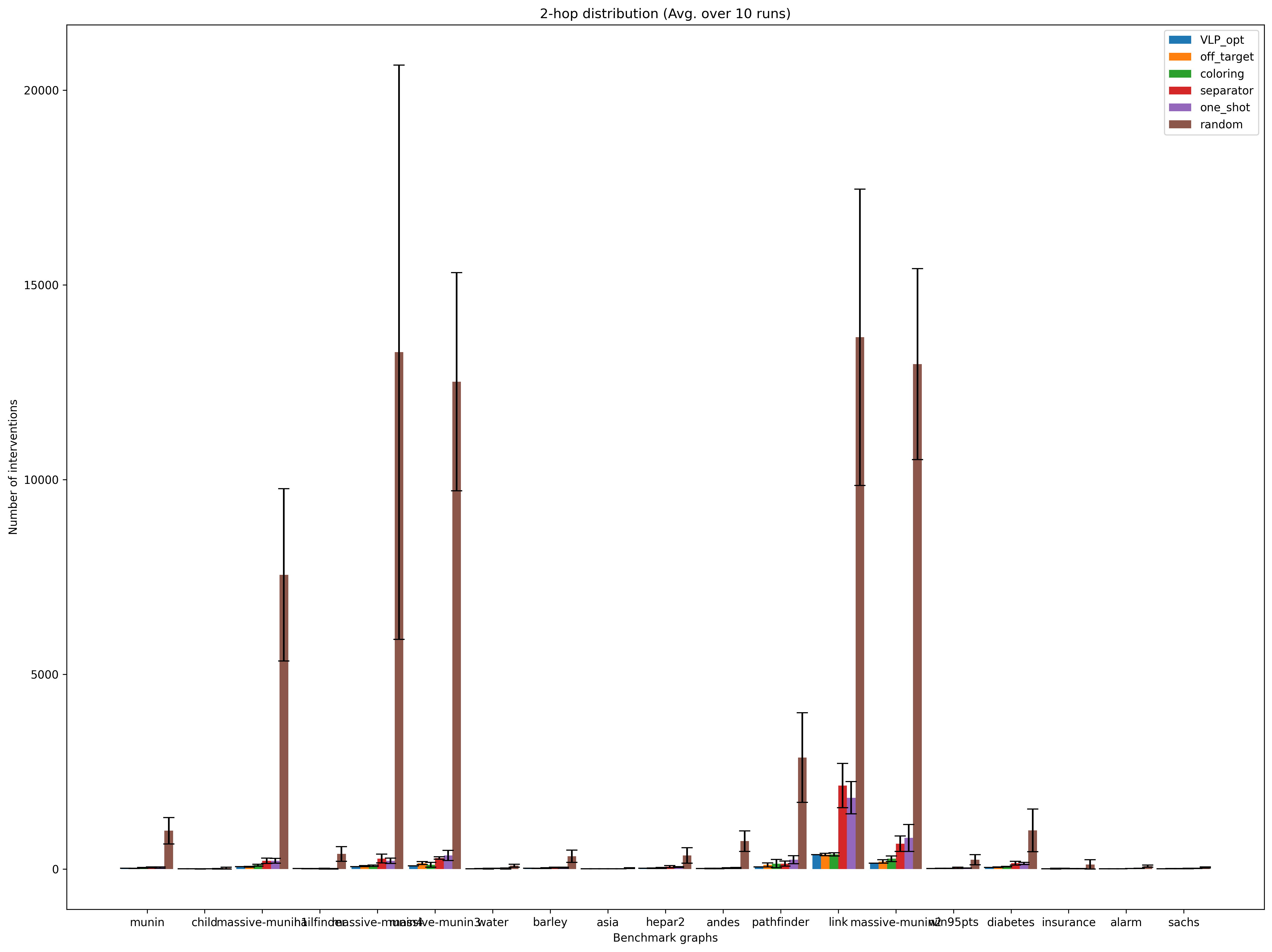}
\caption{2-hop}
\label{fig:bnlearn-2-hop}
\end{subfigure}
\\
\begin{subfigure}[t]{0.45\linewidth}
\includegraphics[width=\linewidth]{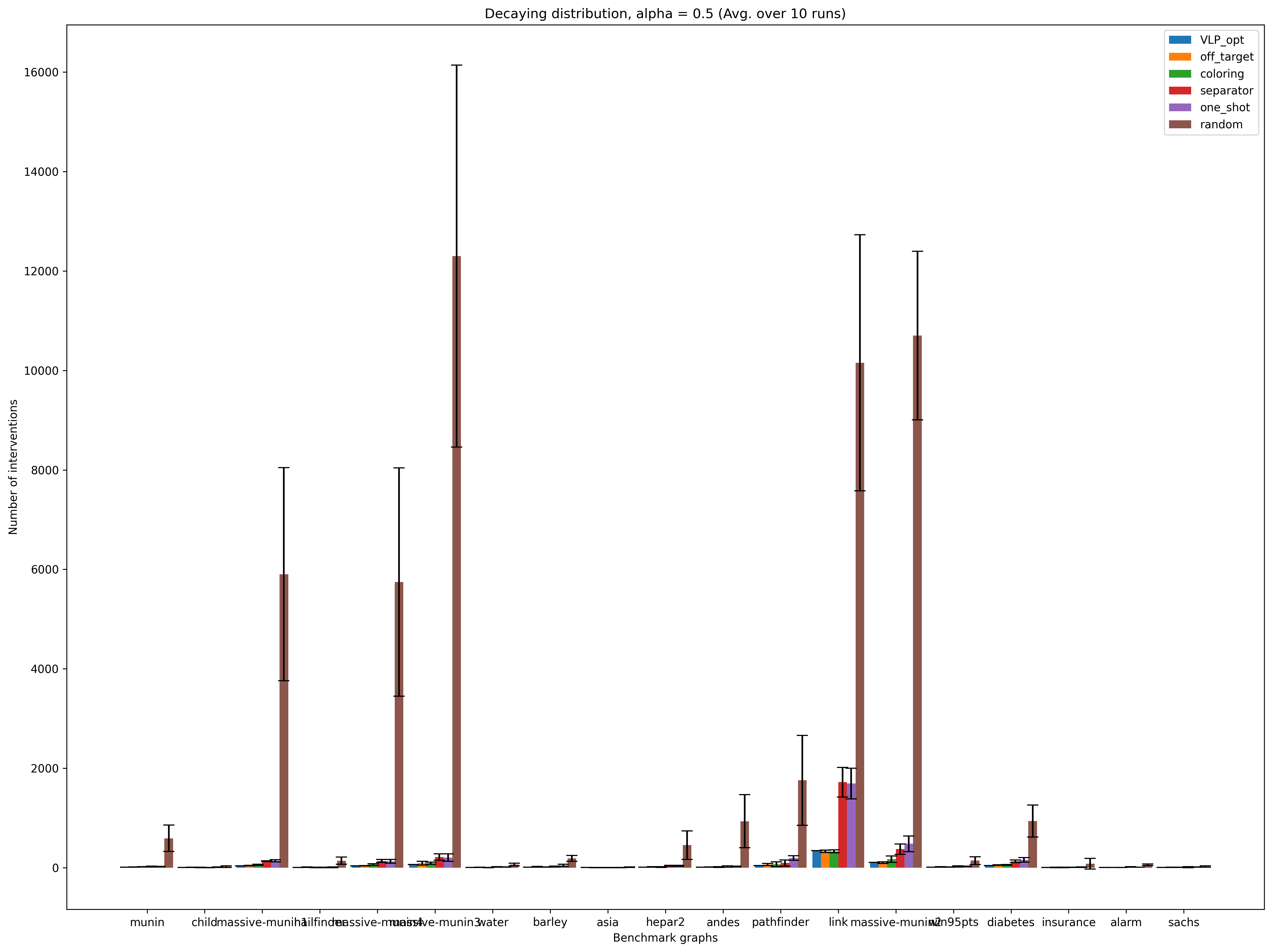}
\caption{Decaying, $\alpha = 0.5$}
\label{fig:bnlearn-decaying-0.5}
\end{subfigure}
\quad
\begin{subfigure}[t]{0.45\linewidth}
\includegraphics[width=\linewidth]{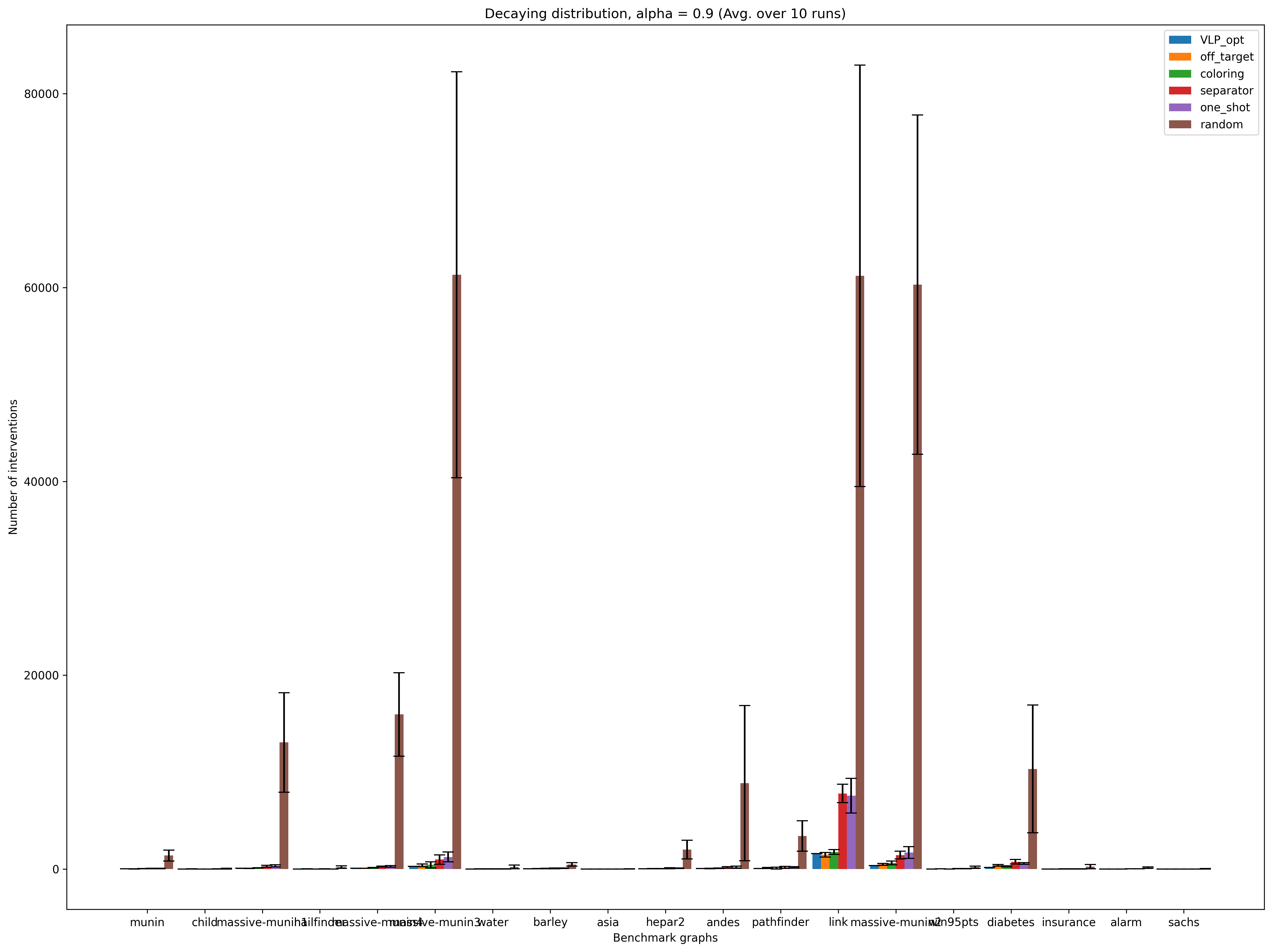}
\caption{Decaying, $\alpha = 0.9$}
\label{fig:bnlearn-decaying-0.9}
\end{subfigure}
\\
\begin{subfigure}[t]{0.45\linewidth}
\includegraphics[width=\linewidth]{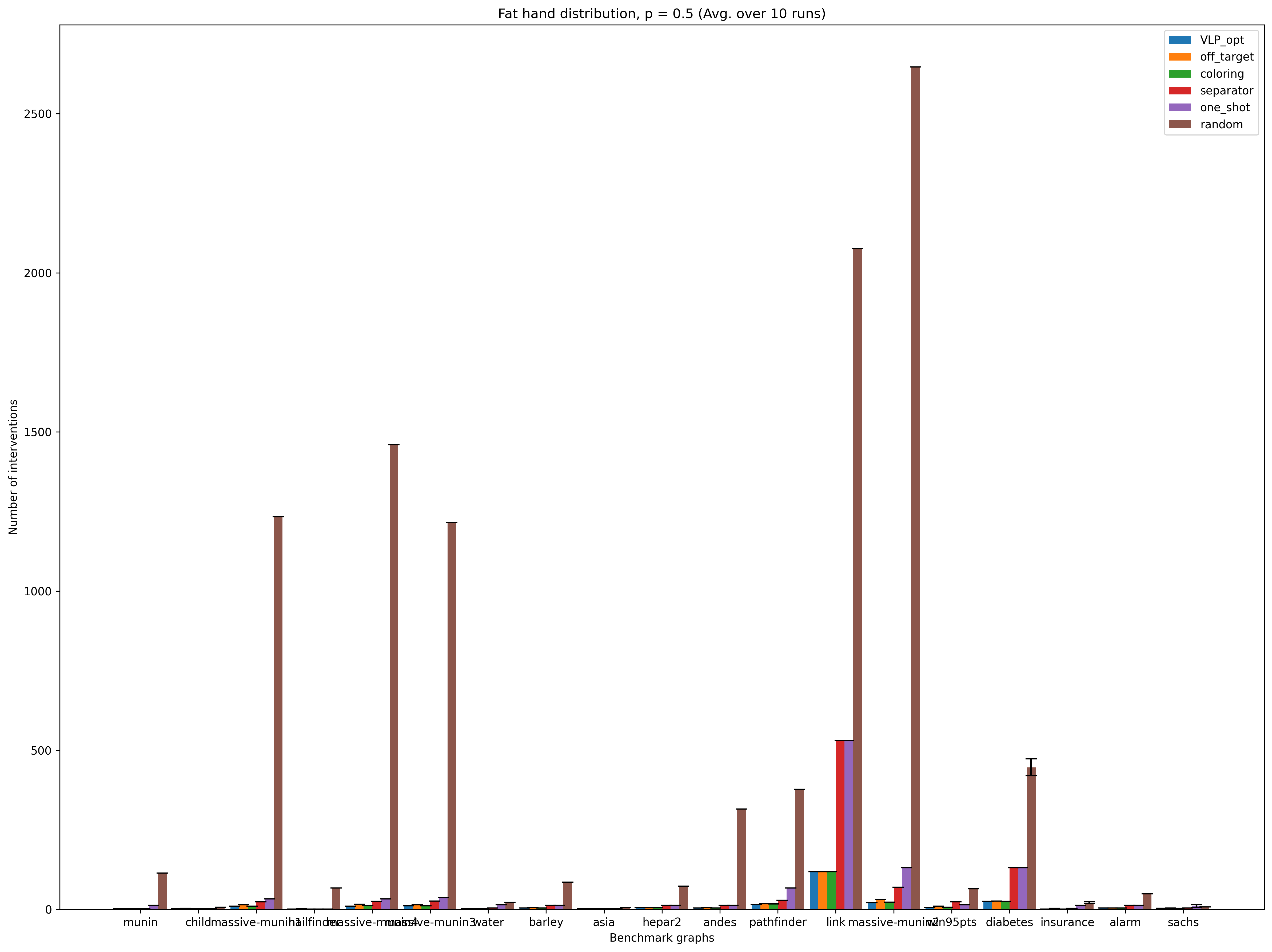}
\caption{Fat hand, $p = 0.5$}
\label{fig:bnlearn-fat-hand-0.5}
\end{subfigure}
\quad
\begin{subfigure}[t]{0.45\linewidth}
\includegraphics[width=\linewidth]{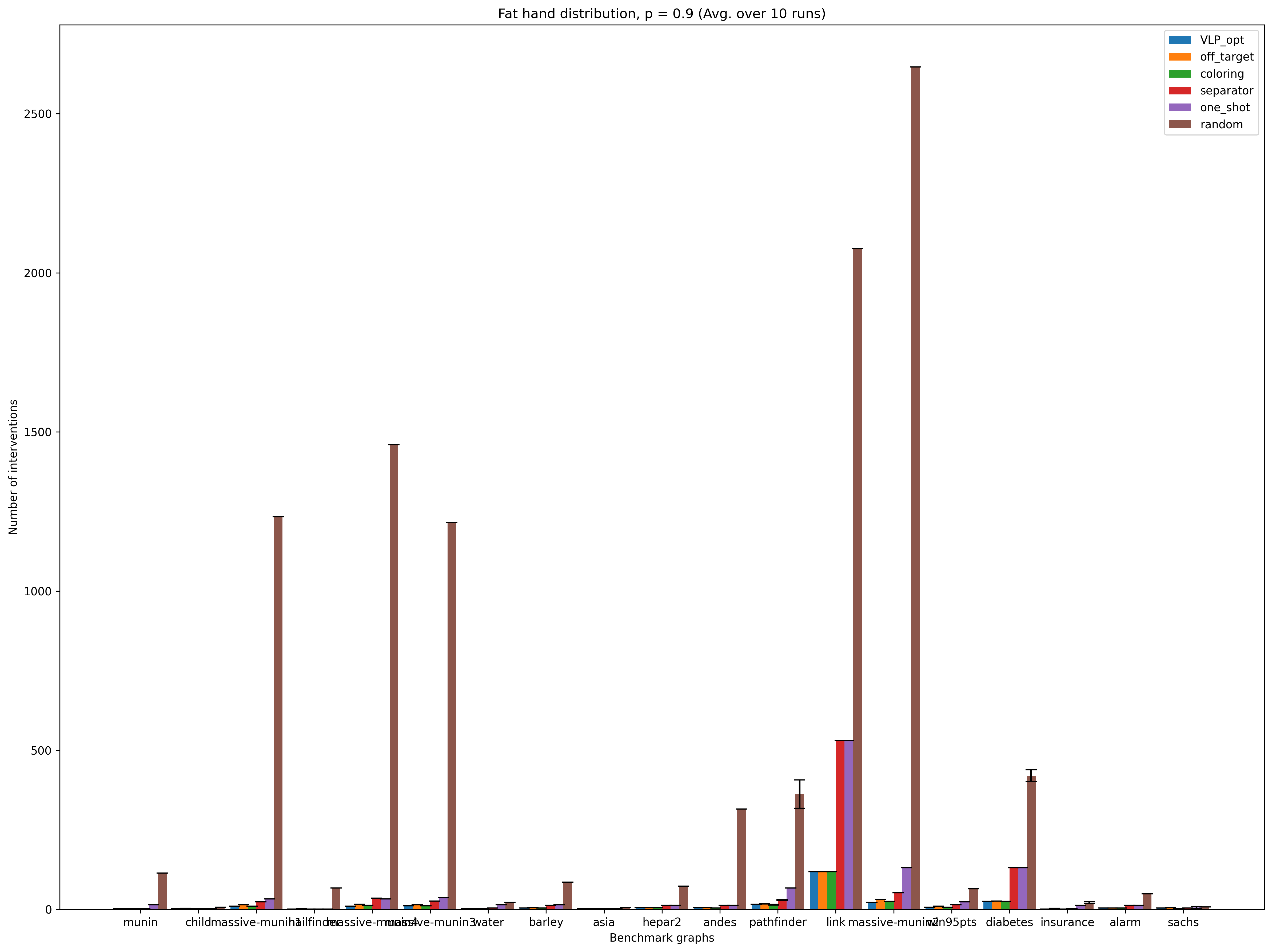}
\caption{Fat hand, $p = 0.9$}
\label{fig:bnlearn-fat-hand-0.9}
\end{subfigure}
\caption{\texttt{bnlearn} graphs.
The optimal value of VLP in {\color{blue}blue} is an $\cO(\log n)$ approximation of $\nu(G^*)$.
Our off-target search \texttt{Off-Target} is in {\color{orange}orange}.
\texttt{Coloring} is in {\color{green!50!black}green}.
\texttt{Separator} is in {\color{red}red}.
\texttt{One-shot} is in {\color{purple!50!blue}purple}.
\texttt{Random} is in {\color{brown!70!black}brown}.
}
\label{fig:bnlearn}
\end{figure}

\end{document}